\newtheorem{lemma}{Lemma}[section]
\newtheorem{thm}{Theorem}[section]
\newtheorem{corollary}{Corollary}[section]
\def\text#1{\mbox{\rm #1}}
\newcommand{\fnorm}[1]{\|#1\|_{\rm F}}
\def\T{{ \mathrm{\scriptscriptstyle T} }}
\title{Posterior Contraction Rates of the Phylogenetic Indian Buffet Processes
}
\author[1]{Mengjie Chen}
\author[2]{Chao Gao}
\author[1]{Hongyu Zhao}
\affil[1]{
University of North Carolina, Chapel Hill
}
\affil[2]{
Yale University
}
\begin{document}
\maketitle

\begin{abstract}
  By expressing prior distributions as general stochastic processes, nonparametric Bayesian methods provide a flexible way to incorporate prior knowledge and constrain the latent structure in statistical inference. The Indian buffet process (IBP) is such an example that can be used to define a prior distribution on infinite binary features, where the exchangeability among subjects is assumed. The phylogenetic Indian buffet process (pIBP), a derivative of IBP, enables the modeling of non-exchangeability among subjects through a stochastic process on a rooted tree, which is similar to that used in phylogenetics, to describe relationships among the subjects. In this paper, we study the theoretical properties of IBP and pIBP under a binary factor model. We establish the posterior contraction rates for both IBP and pIBP and substantiate the theoretical results through simulation studies. This is the first work addressing the frequentist property of the posterior behaviors of IBP and pIBP. We also demonstrated its practical usefulness by applying pIBP prior to a real data example arising in the field of cancer genomics where the exchangeability among subjects is violated.
\smallskip

\textbf{Bayesian Nonparametrics, Indian Buffet Process, Latent Factor Analysis, Cancer Genomics.} 
\end{abstract}


\section{Introduction}
Recently nonparametric Bayesian approaches have become popular methods in machine learning and other fields to learn structural information from data. By expressing prior distributions as general stochastic processes, nonparametric Bayesian methods provide flexible ways to incorporate prior knowledge and constrain the latent structure. The Indian buffet process (IBP) is such a stochastic process that can be used to define a prior distribution where the latent structure is presented in the form of a binary matrix with a finite number of rows and an infinite number of columns \citep{griffiths05,knowles2011nonparametric}. The exchangeability among subjects is assumed in IBP, i.e., the joint probability of the subjects being modeled by the prior is invariant to permutation. In certain applications, exogenous information may suggest certain groupings of the subjects, such as studies involving cancer patients with different subtypes. In these cases, treating all  subjects exchangeable using IBP is not appropriate. As an alternative, the phylogenetic Indian buffet process (pIBP) \citep{miller2012phylogenetic} provides a flexible framework to incorporate prior structural information among  subjects for more accurate statistical inference. In pIBP, the dependency structure among subjects is captured by a stochastic process on a rooted tree similar to that used in phylogenetics. As a derivative of IBP, pIBP inherits many of the nice features of IBP including inducing sparsity and allowance of a potentially infinite number of latent factors. In addition, pIBP provides an effective approach to incorporate useful information on the relationship among subjects without losing computational tractability.    

Despite many successful applications of IBP and its variants in many areas \citep{griffiths2011indian}, as far as we know, there has not been any theoretical investigation of their posterior behaviors. Suppose there is a true data-generating process, do the posterior distributions of IBP and pIBP concentrate on the truth?
In the parametric setting where the number of parameters is fixed, the posterior distribution is well behaved according to the classical Bernstein-von Mises theorem \citep{lecam00}. However, when the prior charges a diverging or an infinite number of parameters, whether the posterior distribution still possesses such convergence properties is no longer guaranteed. IBP prior and pIBP prior belong to the second situation because they are stochastic processes on  infinite binary matrices. Besides the issue of posterior convergence, we are also interested in the question whether the extra information in pIBP prior would lead to better posterior behavior than that of IBP prior.

In this paper, we study the theoretical properties of IBP and pIBP under a binary factor model. Posterior contraction rates are derived for both priors under various settings. By imposing a group structure on the true binary factor matrix, pIBP is proved to have faster convergence rates than IBP whenever the group structure is well-specified by the phylogenetic tree. Even when the group structure is mis-specified by pIBP, it still has the same convergence rate as that of IBP.  To the best of our knowledge, this is the first work addressing the frequentist property of the posterior behaviors of both IBP and pIBP.

We further substantiated the theoretical results through simulation studies. Our simulations show that pIBP is an attractive alternative to IBP when subjects can be related through a tree structure based on some prior information. Moreover, even when the tree structure is mis-specified in pIBP prior, the posterior behavior is still comparable to that of IBP prior, suggesting a robust property of pIBP. We further apply pIBP to analyze cancer genomics data to demonstrate its practical usefulness. 

We organize the rest of the paper as follows. Section \ref{sec:factormodel} introduces a binary factor model, which is the probabilistic setting of the paper. The definitions of IBP and pIBP are reviewed in Section \ref{sec:IBPprior}.  Section \ref{sec:postcontra} presents our theoretical studies of the posterior contraction rates of IBP and pIBP. Simulation studies are carried out in Section \ref{sec:simu}. Sections \ref{sec:real} presents the analysis of a TCGA data set using pIBP. Section \ref{sec:disc} discusses related work on factor models and an extension of our theoretical results. Proofs for theoretical results are collected in the supplementary materials.

\section{Problem Setting} \label{sec:factormodel}

\subsection{Notation}

We denote $\max(a,b)$ by $a\vee b$ and $\min(a,b)$ by $a\wedge b$. For two positive sequences $\{a_n\}$ and $\{b_n\}$, $a_n\lesssim b_n$ means there exists a $C>0$, such that $a_n\leq Cb_n$ for all $n$.  For a matrix $A=(a_{ij})_{m\times n}$, denote its matrix Frobenius norm by $||A||_F=\Big(\sum_{i=1}^m\sum_{j=1}^na_{ij}^2\Big)^{1/2}$. For a set $S$, denote its cardinality by $|S|$. The symbol $\Pi$ stands for the prior probability distribution associated with the mixture of IBP or pIBP defined in Section \ref{sec:mixture}, and $\Pi(\cdot|X)$ is the corresponding posterior distribution.

\subsection{Binary Factor model} 

Let $X=(x_{ij})_{n\times p}$ denote the observed data matrix, where each of the $n$ rows represents one individual and each of the $p$ columns represents one measurement. We hypothesize that the measurement profiles can be characterized by latent factors. We model the effects of these latent factors $Z$ on $X$ through the following model:
$$X=ZA+E,$$
where $Z=(z_{ik})_{n\times K}$ is a binary factor matrix, and $A=(a_{kj})_{K\times p}$ is a loading matrix. The status of $z_{ik}$, which takes a value of $1$ or $0$,  indicates the presence or the absence of the $k$th factor in the $i$th individual. The value of $a_{kj}$ weighs the contribution to the $j$th measurement from the $k$th factor. We assume that each entry of $E=(e_{ij})_{n\times p}$ follows  $N(0,\sigma_X^2)$ independently. Let each entry of $A$ follow  $N(0,\sigma_A^2)$ independently, and $A$ is independent of $E$. Conditioning on $A$, $(X\mid A)$ follows a matrix normal distribution with mean $ZA$. Integrating out $A$ with respect to its distribution, each column of $X$ follows
\begin{equation}
(x_{1j},\ldots,x_{nj})^{\T}\sim N(0,\sigma_A^2ZZ^{\T}+\sigma_X^2I), \label{eq:basic}
\end{equation}
independently for $j=1,\ldots,p$. Formula (\ref{eq:basic}) shows the covariance structure across individuals imposed by the binary factor model. From this representation, it is easy to see that the matrix $ZZ^{\T}$ and the variance components $\sigma_A^2$ and $\sigma_X^2$ uniquely determine the data generating process.

\subsection{Feature Similarity Matrix $ZZ^{\T}$} \label{sec:matrix}

We name $ZZ^{\T}$ the feature similarity matrix because of its important statistical meaning as reflected in (\ref{eq:basic}). An identifiability issue is that the distribution of (\ref{eq:basic}) will not change if one reorder the columns of the factor matrix $Z$. Thus, $Z$ is not identifiable in the model. However, the feature similarity matrix $ZZ^{\T}$, according to (\ref{eq:basic}), is identifiable.
We denote each element of this matrix by $ZZ^{\T}=(\xi_{ij})_{n\times n}$. Each row/column of this matrix $ZZ^{\T}$ describes the feature similarity between a particular individual and the other $n-1$ individuals. Note that
$$\xi_{ij}=\sum_{k=1}^K z_{ik}z_{jk}=|\{k: z_{ik}=z_{jk}=1\}|.$$
Thus, the diagonal element $\xi_{ii}$ denotes the number of factors possessed by the $i$th individual, and the off-diagonal entry $\xi_{ij}$ is the number of the factors shared between the $i$th and $j$th individuals. In short, the feature similarity matrix $ZZ^{\T}$ characterizes the latent feature sharing structure among samples. For the $i$th individual, we define $d_i=\sum_{j\neq i}\xi_{ij}$
as its degree. When we have a group structure among the samples, the individual with the highest degree has the most shared factors among a group. That particular individual is a representative prototype for that group.

\section{Tree Structured Indian Buffet Process Prior} \label{sec:IBPprior}

\subsection{A Bayesian Framework}

To pursue a full Bayesian approach, we put a prior distribution on the triple $(Z,\sigma_A^2,\sigma_X^2)$.   The choice of the prior on $(\sigma_A^2,\sigma_X^2)$ is not essential, because for asymptotic purpose (when $n$ and $p$ are large), the prior effect on the parametric part $(\sigma_A^2,\sigma_X^2)$ is negligible. In contrast, the prior on the  binary matrix $Z$ is important. Since we do not specify the number of columns $K$ in advance, the potential number of parameters in $Z$ is infinite. It is well-known that when the number of parameters diverges, Bayesian method is no longer guaranteed to be consistent \citep{diaconis86}. Thus, the choice of the  prior on $Z$ is important.  According to  the model representation (\ref{eq:basic}), the order of the columns of $Z$ is not identifiable. In other words, we cannot tell the first factor from the second. Instead of specifying a prior on $Z$, we specify a prior on the equivalent class $[Z]$, where $[Z]$ denotes the collection of matrices $Z$ which are equivalent by reordering the columns.

We describe two priors on $[Z]$ in this section, the Indian buffet process proposed by \cite{griffiths05}, and its tree-structured generalization, the phylogenetic Indian buffet process proposed by \cite{miller2012phylogenetic}. Both are priors on sparse infinite binary matrices.

\subsection{Indian Buffet Process} \label{sec:IBP}

We describe the Indian buffet process (IBP) on $[Z]$ by its stick-breaking representation derived in \cite{teh07}. Given some $\alpha>0$, first draw $v_k\sim \text{Beta}(\alpha,1)\ (k=1,2,\ldots)$  independently and identically distributed. Then, $p_k$ is 
\begin{equation}
p_k=\prod_{i=1}^k v_i\quad(k=1,2,\ldots). \label{eq:stick}
\end{equation}
Given $\{p_k\}$, $z_{ik}$ is drawn independently from a Bernoulli distribution with parameter $p_k$ for $i=1,\ldots,n$ and $k=1,2,\ldots$. The final matrix $Z$ drawn in this way has dimension $n\times K^+$, where $K^+$ is the number of nonzero columns. According to \cite{griffiths05}, $K^+$ follows a Poisson distribution with mean $\alpha\sum_{k=1}^n k^{-1}$. Thus, it is finite with probability $1$.  The IBP prior on $[Z]$ is the image measure induced by the equivalence map $Z\mapsto [Z]$. A larger $\alpha$ indicates a larger $K^+$ in the prior modeling.

\subsection{Phylogenetic Indian Buffet Process} \label{sec:pIBP}

The phylogenetic Indian buffet process (pIBP) also starts with drawing $\{p_k\}$ as in (\ref{eq:stick}). Different from IBP, given $p_k$, the entries of the $k$th column of $Z$ are not independent in pIBP. Their dependency structure is captured by a stochastic process on a rooted tree similar to the models used in phylogenetics \citep{miller2012phylogenetic}. The $n$ individuals are modeled as leaves of the tree. The total edge length from the root to any leaf is $1$. Conditioning on $p_k$, we describe the generating process of the $k$th column of $Z$. First, assign $0$ to the root of the tree. Along any path from the root to a leaf, let the value of any node change to $1$ along any edge of length $t$ with probability $1-\exp(-\gamma_k t)$, where $\gamma_k=-\log(1-p_k)$. Once the value has changed to $1$ along any path from the root, all leaves below that point are assigned value $1$. pIBP prior is defined to be the image measure on $[Z]$. 

\subsection{A Hyperprior on $\alpha$} \label{sec:mixture}

Both IBP and pIBP are determined by the hyper-parameter $\alpha$, which can be tuned in practice. In this paper, we pursue a full Bayesian approach, and  put a Gamma$(1,1)$ prior on $\alpha$ for both IBP and pIBP. Thus, the final prior on the equivalent class $[Z]$ is a mixture of IBP or pIBP after $\alpha$ is integrated out.

\section{Posterior Contraction Rates of IBP and pIBP} \label{sec:postcontra}

\subsection{Convergence of the Feature Similarity Matrix}

In this section, we establish the posterior convergence of both mixture of IBP and mixture of pIBP and characterize their difference by different convergence rates. Such theoretical comparisons are interesting because IBP can be viewed as a special case of pIBP with a default tree. These results will illustrate the impacts of tree structure imposed by the prior.

We define the triple $(Z_0,\sigma_{A,0}^2,\sigma_{X,0}^2)$ to be the true parameter generating the data matrix $X$, where $Z_0$ is an $n\times K_0$ binary matrix and $K_0$ is the number of factors. For the sake of clearer presentation, we assume $\sigma_{A,0}^2=\sigma_{X,0}^2=1$, so that the only unknown parameter is $Z_0$. Denote the data generating process of (\ref{eq:basic}) by $P_Z$, and let $E_Z$ be the associated expectation (and similarly define $P_{Z_0}$ and $E_{Z_0}$). The generalization to the case where $(\sigma_A^2,\sigma_X^2)$ is unknown is covered in the supplementary materials. Let $\Pi$ be the mixture of IBP or pIBP prior on $[Z]$. Note that the matrix $ZZ^{\T}$ does not depend on the order of columns of $Z$, and thus we have $ZZ^{\T}=[Z][Z]^{\T}$. We consider the posterior convergence in the sense of
\begin{equation}
E_{Z_0}\left[\Pi\Big(||ZZ^{\T}-Z_0Z_0^{\T}||_F^2\leq M\epsilon_{n,p}^2\Big| X\Big)\right]\geq 1-\delta_{n,p}, \label{eq:convergence}
\end{equation}
for some sequences $\epsilon_{n,p},\delta_{n,p}$ and constant $M>0$.  When $\delta_{n,p}\rightarrow 0$, this is called posterior contraction of  feature similarity matrix with rate $\epsilon_{n,p}^2$ under the squared Frobenius loss.  We choose to study the posterior contraction in terms of the feature similarity matrix $ZZ^{\T}$  because of both the identifiability issue and statistical interpretation described in Section \ref{sec:matrix}.

\subsection{A General Method for Discrete Priors}

The theory of Bayesian posterior consistency was first studied by \cite{schwartz65}. She proposed a Kullback-Leibler property of the prior and a testing argument to prove weak consistency in the parametric case. The first nonparametric posterior consistency result was obtained by \cite{barron99}, where the idea of testing on the essential support of the prior is used. Later, the same argument was modified to achieve rate of contraction by \cite{ghosal00}. In the current setting of binary factor model, we propose the following general method to prove posterior rate of contraction for priors supported on a discrete set. 

\begin{thm} \label{thm:general}
For any measurable set $U$, and any testing function $\phi$, we have
\begin{equation}
E_{Z_0}\left[\Pi \Big(U\mid X\Big)\right] \leq E_{Z_0}(\phi) +   \frac{1}{\Pi\Big(||ZZ^{\T}-Z_0Z_0^{\T}||_F^2=0\Big)}\sup_{Z\in U}E_Z(1-\phi). \label{eq:general}
\end{equation}
\end{thm}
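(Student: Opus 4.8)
The plan is to follow the standard Schwartz-type decomposition of the posterior mass, but to exploit the special structure of the binary factor model to obtain a \emph{deterministic} lower bound on the normalizing constant; this is exactly what removes the usual high-probability event and yields the clean inequality \prettyref{eq:general}. First I would write the posterior explicitly as a ratio of integrated likelihood ratios. Writing $p_Z$ for the density of $P_Z$ and dividing numerator and denominator by $p_{Z_0}(X)$,
$$\Pi(U\mid X)=\frac{\int_U \frac{p_Z(X)}{p_{Z_0}(X)}\,d\Pi(Z)}{\int \frac{p_Z(X)}{p_{Z_0}(X)}\,d\Pi(Z)}.$$
For an arbitrary test $\phi$ taking values in $[0,1]$, I split $\Pi(U\mid X)=\phi\,\Pi(U\mid X)+(1-\phi)\,\Pi(U\mid X)$ and bound the two pieces separately after taking $E_{Z_0}$. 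The first piece is immediate: since $\Pi(U\mid X)\le 1$, we get $E_{Z_0}[\phi\,\Pi(U\mid X)]\le E_{Z_0}(\phi)$, which is the first term on the right-hand side.

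The crux is the second piece, and the key observation is that by \prettyref{eq:basic} the law $P_Z$ depends on $Z$ only through the feature similarity matrix $ZZ^{\T}$. Hence on the event $\{Z:\,||ZZ^{\T}-Z_0Z_0^{\T}||_F^2=0\}$ we have $P_Z=P_{Z_0}$, so the likelihood ratio $p_Z(X)/p_{Z_0}(X)$ equals $1$ identically. Restricting the integral in the denominator to this set therefore gives the \emph{deterministic} lower bound
$$\int \frac{p_Z(X)}{p_{Z_0}(X)}\,d\Pi(Z)\ \ge\ \Pi\Big(||ZZ^{\T}-Z_0Z_0^{\T}||_F^2=0\Big),$$
which is independent of $X$. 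This is precisely the step where discreteness of the prior is essential: in the usual continuous setting one can only control the denominator on a high-probability set via a Kullback--Leibler condition (producing a $\delta_{n,p}$ term), whereas here the prior charges the exact fiber $\{ZZ^{\T}=Z_0Z_0^{\T}\}$ and the bound holds surely.

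Plugging this lower bound into the second piece and applying Fubini's theorem to exchange $E_{Z_0}$ with the prior integral over $U$, I would then perform the change of measure $E_{Z_0}\big[(1-\phi)\,p_Z(X)/p_{Z_0}(X)\big]=E_Z(1-\phi)$ for each fixed $Z$. Bounding $E_Z(1-\phi)\le \sup_{Z\in U}E_Z(1-\phi)$ and using $\Pi(U)\le 1$ yields the second term of \prettyref{eq:general}, completing the decomposition. I do not anticipate a genuine obstacle here; the one point requiring care is the deterministic denominator bound, which hinges entirely on the model depending on $Z$ through $ZZ^{\T}$ alone together with the prior assigning positive mass to $\{ZZ^{\T}=Z_0Z_0^{\T}\}$. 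The remaining manipulations (Fubini and the change of measure) are routine, and the flexibility of the choices of $U$ and $\phi$ is what makes the resulting bound a convenient general tool for the subsequent rate calculations.
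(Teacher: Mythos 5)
Your proposal is correct and follows essentially the same route as the paper's own proof: the Bayes-formula representation, the deterministic lower bound on the denominator via the fiber $\{ZZ^{\T}=Z_0Z_0^{\T}\}$ (where the likelihood ratio is identically one), the split by the test $\phi$, and the Fubini/change-of-measure step yielding $\sup_{Z\in U}E_Z(1-\phi)$. No gaps; the argument matches the paper's proof step for step.
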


The theorem can be viewed as a discrete version of the Schwartz theorem \citep{schwartz65}. We take advantage of the discrete nature of the problem, thus avoiding calculating the prior mass of the Kullback-Leibler neighborhood of $P_{Z_0}$. We specify $U$ to be
$$U=\left\{||ZZ^{\T}-Z_0Z_0^{\T}||_F^2> M\epsilon_{n,p}^2\right\}.$$
Thus, in order to obtain (\ref{eq:convergence}), it is sufficient to upper bound the right hand side of (\ref{eq:general}). This can be done by lower bounding $\Pi\Big(||ZZ^{\T}-Z_0Z_0^{\T}||_F^2=0\Big)$ and constructing a testing function for $H_0:Z=Z_0$ and $H_1:Z\in U$ with appropriate type 1 and type 2 error bounds. The existence of such testing function is guaranteed by the following lemma.

\begin{lemma} \label{lem:test}
For any $\epsilon_{n,p}>0$, there is a testing function $\phi$ such that the testing error $E_{Z_0}(\phi)+\sup_{\{||ZZ^{\T}-Z_0Z_0^{\T}||_F^2>M\epsilon_{n,p}^2\}}E_Z(1-\phi)$ is upper bounded by
$$\exp\left\{-Cp\min\Bigg(\frac{M\epsilon_{n,p}^2}{n^2K_0^2},\frac{\sqrt{M}\epsilon_{n,p}}{nK_0}\Bigg)+2\log n\right\} +\exp\Big(-Cp+2\log n\Big),$$
for some universal constant $C>0$ and $M$ introduced in (\ref{eq:convergence}).
\end{lemma}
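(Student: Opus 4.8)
The plan is to recast the statement as a covariance testing problem and then build $\phi$ entrywise. With $\sigma_A^2=\sigma_X^2=1$, equation (\ref{eq:basic}) says the $p$ columns of $X$ are i.i.d.\ $N(0,\Sigma_Z)$ with $\Sigma_Z=ZZ^{\T}+I$, and the loss is exactly $||ZZ^{\T}-Z_0Z_0^{\T}||_F^2=||\Sigma_Z-\Sigma_{Z_0}||_F^2$. So I want to test $H_0:\Sigma=\Sigma_{Z_0}$ against the composite alternative $\{||\Sigma_Z-\Sigma_{Z_0}||_F^2>M\epsilon_{n,p}^2\}$ from $p$ Gaussian samples. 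The first reduction is a pigeonhole step: writing $\xi_{ij}$ for the entries of $ZZ^{\T}$ and $\xi_{0,ij}$ for those of $Z_0Z_0^{\T}$, if $\sum_{i,j}(\xi_{ij}-\xi_{0,ij})^2>M\epsilon_{n,p}^2$ then some entry obeys $|\xi_{ij}-\xi_{0,ij}|>\sqrt{M}\epsilon_{n,p}/n$, so it suffices to detect a single deviating entry of the covariance.

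Next I would construct the test as a maximum of two families. Writing $\wh S_{ij}=\frac1p\sum_{l=1}^p x_i^{(l)}x_j^{(l)}$ for the empirical covariance, let $\phi_{ij}=\indc{|\wh S_{ij}-\xi_{0,ij}|>\sqrt{M}\epsilon_{n,p}/(2n)}$ be the entrywise tests and let $\psi=\max_i \indc{\wh S_{ii}>\tau}$ be a diagonal pre-test with a threshold $\tau\asymp K_0$ (recall that $\Sigma_{Z_0,ii}=\xi_{0,ii}+1\le K_0+1$). Set $\phi=\max\big(\psi,\max_{i,j}\phi_{ij}\big)$. The pre-test $\psi$ exists to split the alternative into a regime (b) where some $\Sigma_{Z,ii}=\xi_{ii}+1$ is much larger than $K_0$, which $\psi$ detects by $\chi^2$ concentration of $\wh S_{ii}$, and a regime (a) where every diagonal entry is $\lesssim K_0$, in which the entrywise tests carry the load.

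The error analysis rests on a Bernstein bound for $\wh S_{ij}$. Each summand $x_i^{(l)}x_j^{(l)}$ is sub-exponential, seen through the polarization $x_ix_j=\frac14[(x_i+x_j)^2-(x_i-x_j)^2]$ into a difference of $\chi^2$ variables, with variance $\Sigma_{ii}\Sigma_{jj}+\Sigma_{ij}^2$ and sub-exponential scale of order $\Sigma_{ii}+\Sigma_{jj}$. In regime (a) both are controlled by $K_0$, so at deviation level $t\asymp\sqrt{M}\epsilon_{n,p}/n$ Bernstein yields the per-entry bound $\exp\{-Cp\min(M\epsilon_{n,p}^2/(n^2K_0^2),\ \sqrt{M}\epsilon_{n,p}/(nK_0))\}$. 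For the type I error I union bound $\phi_{ij}$ over the $n^2$ entries (producing the $2\log n$) and bound $E_{Z_0}(\psi)$ by a union over $n$ diagonal $\chi^2$ tails; for the type II error I use $1-\phi\le 1-\phi_{i^*j^*}$ at the pigeonhole entry $(i^*,j^*)$ in regime (a), and $1-\phi\le 1-\psi$ in regime (b), where the constant-factor gap between $\tau$ and the oversized diagonal gives $\exp(-Cp)$. Collecting, the entrywise family produces the first displayed term and the diagonal pre-test together with regime (b) produces the second term $\exp(-Cp+2\log n)$.

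The main obstacle is getting the correct $K_0$-dependent scaling in the concentration step, since the variance and sub-exponential norm of $x_i^{(l)}x_j^{(l)}$ grow with the diagonal entries $\Sigma_{ii}$, which are a priori unbounded over the composite alternative. This is precisely what forces the two-regime decomposition: the diagonal pre-test $\psi$ restricts attention to alternatives with $\max_i\Sigma_{ii}\lesssim K_0$, on which the Bernstein constants are tamed, while the genuinely oversized alternatives are detected separately and cheaply, at the cost of the extra $\exp(-Cp+2\log n)$ term.
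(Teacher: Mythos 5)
Your proposal is correct and takes essentially the same route as the paper's own proof: the paper likewise reduces the problem to testing the covariance $ZZ^{\T}+I$ through $\frac{1}{p}XX^{\T}$, splits the alternative into a bounded regime $\{\|ZZ^{\T}+I\|_{\infty}\le 4(K_0+1)\}$ handled by sub-exponential Bernstein concentration of the empirical covariance entries (with an entrywise union bound producing the $2\log n$), and an oversized-diagonal regime detected cheaply at cost $\exp(-Cp+2\log n)$. The only cosmetic differences are that the paper uses a Frobenius-norm test statistic in the bounded regime (whose concentration lemma is itself proved by exactly your entrywise pigeonhole) and a dyadic shell decomposition $\{4l(K_0+1)<\|ZZ^{\T}+I\|_{\infty}\le 4(l+1)(K_0+1)\}$ with sup-norm tests $\phi_l$ in place of your single diagonal pre-test $\psi$ --- but since $\max_{l\ge 1}\phi_l$ is itself a single-threshold sup-norm test, the two constructions coincide in substance.
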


Therefore, it is sufficient to lower bound the prior mass $\Pi\Big(||ZZ^{\T}-Z_0Z_0^{\T}||_F^2=0\Big)$ to obtain (\ref{eq:convergence}). 

\subsection{Two-Group Tree and Factor Decomposition}

Before studying the prior mass lower bound of IBP and pIBP, we need to specify a non-exchangeable structure among the subjects. To demonstrate the power of pIBP to model non-exchangeability,
 we study a special but representative tree structure, the two-group tree. Let $n$ individuals be labeled by $\{1,\ldots,n\}$. Without loss of generality, we assume $n$ is even. Let $\{1,\ldots,n\}=S_1\cup S_2$, where $S_1=\{1,\ldots,n/2\}$ and $S_2=\{n/2+1,\ldots,n\}$. The tree induced by the two-group structure $(S_1,S_2)$ has one root, two group nodes and $n$ leaves. The two group nodes are connected with the root by two edges of length $\eta\in (0,1)$. Then, the $i$th group node is connected with each member of $S_i$ by an edge of length $1-\eta$, where $i=1,2$. The parameter $\eta$ is the strength of the group structure imposed by the prior $\Pi$. When $\eta=0$, pIBP reduces to IBP.

\begin{figure}
\centering
\includegraphics[width=4in]{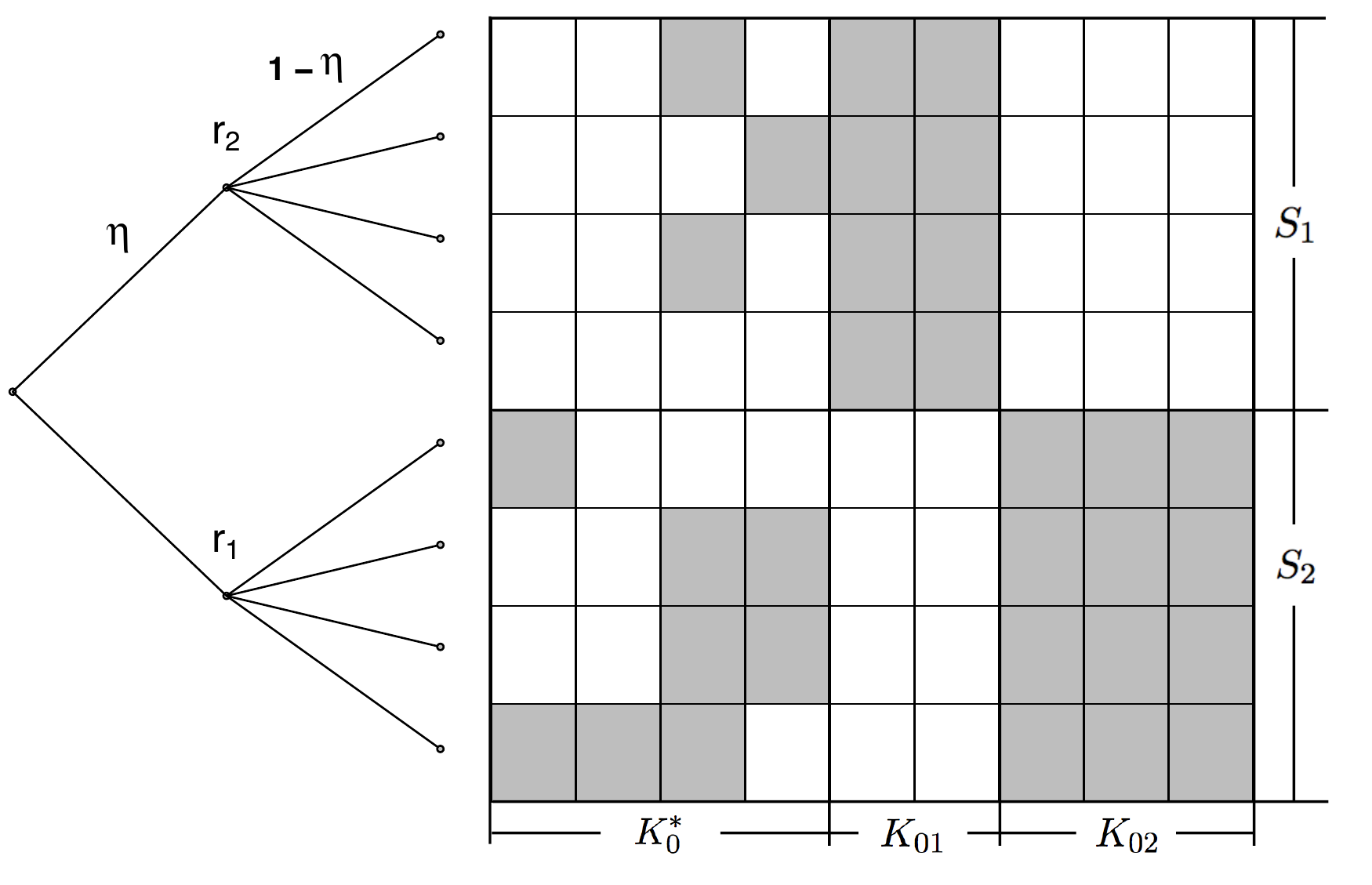}
\caption{An illustration of the two group tree and the factor decomposition.}
\label{fig1}
\end{figure}

Our theory covers three cases. The first case is IBP prior, with no group structure specified in the prior. The second case is the two-group pIBP prior with group structure correctly specified. The third case is the the two-group pIBP prior with group structure mis-specified. Let $Z_0$ have $K_0$ columns, representing $K_0$ factors. Given the two-group structure $(S_1,S_2)$ by the prior $\Pi$, we have the following factor decomposition
\begin{equation}
K_0=K_{01}+K_{02}+K_0^*,\label{eq:decomp}
\end{equation}
where $K_{01}$ is the number of factors unique to $S_1$, $K_{02}$ is the number of factors unique to $S_2$, and $K_0^*$ is the number of factors shared across $S_1$ and $S_2$. Decomposition (\ref{eq:decomp}) is determined by both the structure of $Z_0$ and the prior $\Pi$. It characterizes how well the group structure is specified compared with the true $Z_0$ (see Figure \ref{fig1}). Generally speaking, the smaller  $K_0^*$ is, the better the group structure is specified by $\Pi$.

\subsection{Prior Mass} \label{sec:priorconcen}

Under the two-group structure defined above, we obtain the following prior mass lower bound.
\begin{thm} \label{thm:prior}
For any constant $\eta\in[0,1)$, there exists some constant $C>0$ such that
the prior mass $\Pi\Big(||ZZ^{\T}-Z_0Z_0^{\T}||_F^2=0\Big)$ can be lower bounded by
$$\exp\left(-Cn(({K_0^*+\kappa})^2+1)-Cn\frac{K_0-K_0^*}{(4/3)^{K_0^*+\kappa}}-C(K_0+\kappa)(K_0-K_0^*+1)\right)$$
for any $\kappa\geq 0$.
\end{thm}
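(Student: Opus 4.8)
The plan is to lower-bound the exact-match probability $\Pi\big(||ZZ^\T-Z_0Z_0^\T||_F^2=0\big)$ by the probability of the cleaner event $[Z]=[Z_0]$, i.e. that the pIBP draw reproduces the multiset of nonzero columns of $Z_0$ (trailing all-zero columns are harmless, since they do not change $ZZ^\T$). Working from the stick-breaking representation, I would first dispose of the hyperprior: since $\alpha$ carries a Gamma$(1,1)$ law, restricting to a fixed compact range $\alpha\in[\alpha_0,\alpha_1]$ costs only a constant factor that is absorbed into $C$, and on this range the variables $v_i\sim\mathrm{Beta}(\alpha,1)$ have a density bounded away from $0$ on any fixed subinterval of $(0,1)$. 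I would then condition on a favorable event for the first $K_0+\kappa$ stick variables, pinning each $v_i$ into a fixed subinterval so that the weights $p_k=\prod_{i\le k}v_i$ are squeezed between two geometric sequences; the ratio of this geometric decay is where the base $4/3$ enters. Reserving $K_0+\kappa$ controlled sticks is what makes everything downstream factorize conditional on the weights.

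Given the weights, the columns are generated independently, so the reproduction probability factorizes over an assignment of the $K_0$ columns of $Z_0$ to sticks, times the probability that every remaining stick yields the all-zero column. I would assign the $K_0^*$ shared factors to the earliest (largest-weight) sticks, leave a buffer of $\kappa$ sticks, and place the $K_0-K_0^*$ unique factors afterward. The three terms then arise from three separate estimates. (i) A shared factor has entries in both groups and must be realized while the weight is still large; because the pinched weights decay geometrically, the per-column log-cost grows linearly in the stick index, and summing over the first $K_0^*+\kappa$ sticks produces the $n(K_0^*+\kappa)^2$ term. (ii) A unique-to-$S_i$ factor additionally needs the opposite group to stay entirely silent, an event of probability roughly $(1-p_k)^{\Theta(n)}\approx e^{-\Theta(n)p_k}$; bounding $p_k$ at the unique-factor sticks by the geometric value $(3/4)^{K_0^*+\kappa}$ yields, over $K_0-K_0^*$ factors, the term $n(K_0-K_0^*)(3/4)^{K_0^*+\kappa}$. (iii) The remaining combinatorial cost of the assignment together with the cost of switching on the required entries of each factor (realized through the group edges, which the two-group tree makes cheap) contributes the lower-order $(K_0+\kappa)(K_0-K_0^*+1)$ term. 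The isolated $+1$'s merely keep these estimates valid in the degenerate cases $K_0^*=\kappa=0$ and $K_0=K_0^*$.

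Finally I would take the expectation over the unconstrained tail variables $v_i$ with $i>K_0+\kappa$. On the conditioning event the head weight $p_{K_0+\kappa}$ is geometrically small, so the tail weights are small and the all-zero-tail product $\prod_{k>K_0+\kappa}(1-p_k)^{\Theta(n)}$ is controlled in expectation by $\exp\big(-\Theta(n)\sum_{k>K_0+\kappa}p_k\big)$, a quantity dominated by the second term above; this is where the finiteness $\sum_k\Expect[p_k]=\alpha$ is used. Collecting the constant from the $\alpha$-integration, the pinching probability for the head variables, the three column estimates, and the tail estimate gives the stated bound, with $\kappa$ left free because it entered only as a bookkeeping parameter.

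I expect the main obstacle to be the per-column generation estimate for shared factors together with the tension it creates in the weights: a shared factor needs $p_k$ large enough to switch on entries in both groups, whereas a unique factor needs $p_k$ small enough that the complementary group — and the entire infinite tail of unused sticks — stays silent. Reconciling these opposing demands is exactly the role of the reserved-stick parameter $\kappa$, and carrying this balance cleanly through the two-group generating mechanism (group edge versus individual edges, with the $\eta$-dependence) is the delicate part; the surrounding steps reduce to products of Bernoulli-type probabilities.
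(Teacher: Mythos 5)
Your proposal follows essentially the same route as the paper's proof: reduce to the exact-match event under the stick-breaking representation, restrict $\alpha$ to a compact interval and pin the first $K_0+\kappa$ stick variables into a fixed subinterval such as $[1/4,3/4]$ (whence the base $4/3$), then factorize over columns given the weights, with the shared factors on the earliest sticks, a $\kappa$-stick buffer, and the unique factors realized through the group edges while the opposite group stays silent --- yielding exactly the three terms $n(K_0^*+\kappa)^2$, $n(K_0-K_0^*)/(4/3)^{K_0^*+\kappa}$, and $(K_0+\kappa)(K_0-K_0^*+1)$ as in the paper. The only minor deviation is your tail estimate via Jensen's inequality and $\sum_k \Expect[p_k]=\alpha$, where the paper instead bounds the tail weights geometrically; both yield the $\exp(-Cn)$ factor behind the ``$+1$''.
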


Theorem \ref{thm:prior} provides an explicit characterization of the prior mass lower bound as a function of $K_0,K_0^*$. For a larger $K_0^*$, the prior mass will be at a smaller order due to an increased level of misspecification. The prior mass lower bound directly determines the posterior contraction rate according to Theorem \ref{thm:general} and Lemma \ref{lem:test}. In the following, we consider $\eta = 0$ and $\eta \in (0,1)$, separately.

When $\eta=0$, pIBP and IBP are equivalent. The prior does not impose any group structure. 
Thus, in the decomposition (\ref{eq:decomp}), we have $K_0^*=K_0$. By letting $\kappa=0$, Theorem \ref{thm:prior} can be written as
\begin{equation}
\Pi\Big(||ZZ^{\T}-Z_0Z_0^{\T}||_F^2=0\Big)\geq \exp\Big(-C_1nK_0^2\Big). \label{prior:ibp}
\end{equation}
The prior mass lower bound for IBP in (\ref{prior:ibp}) is the benchmark for us to compare IBP with pIBP in various situations.

When $\eta\in (0,1)$, the tree structure plays a role in the prior. In practice, $\eta=1/2$ is often used to characterize moderate group structure belief in the prior \citep{miller2012phylogenetic}. We say the group structure is effectively specified if $K_0^*\lesssim K_0^{1-\beta}$ for some $\beta\in (0,1)$. In this case, the result of Theorem \ref{thm:prior} can be optimized for $k=K_0^*+\kappa$ for any $\kappa\geq 0$. That is, for $n$ sufficiently large $n\gtrsim K_0^{2\beta}$, we have
\begin{equation}
\Pi\Big(||ZZ^{\T}-Z_0Z_0^{\T}||_F^2=0\Big) \geq \exp\Bigg(-C_2n \min_{k\geq K_0^*}\Big(k^2\vee\frac{K_0}{(4/3)^k}\Big)\Bigg),
\end{equation}
which is lower bounded by
$$\exp\Big(-C_2'n K_0^{2(1-\beta)}\Big).$$
This rate is superior to (\ref{prior:ibp}). Thus, pIBP is advantageous over IBP as long as the tree structure captures any group-specific features in the sense that $K_0^*\lesssim K_0^{1-\beta}$.

On the other hand, the group structure is mis-specified if $K_0^*=K_0$. In this case,
we reduce to (\ref{prior:ibp}), so that
$$\Pi\Big(||ZZ^{\T}-Z_0Z_0^{\T}||_F^2=0\Big)\geq \exp\Big(-C_3nK_0^2\Big).$$
Thus, a mis-specified tree structure does not compromise the results, compared to a default tree structure of IBP. One may wonder whether this is due to a possibly loose bound in Theorem \ref{thm:prior}.  By scrutinizing the proof, we found that the slack is at most at a constant level independent of $(n,K_0,K_0^*)$. Thus, the prior mass lower bounds of pIBP with a mis-specified tree and of IBP are essentially the same.

\subsection{Posterior Contraction Rates} \label{sec:postconverge}

Combining Theorem \ref{thm:general}, Lemma \ref{lem:test} and Theorem \ref{thm:prior}, we can derive the posterior contraction rates in the sense of (\ref{eq:convergence}) for both IBP and pIBP.

\begin{thm} \label{thm:postibp}
For the mixture of IBP prior or pIBP prior $\Pi$ on $[Z]$, let $Z_0$ be the true factor matrix. Then, for the binary factor model, there exist $M>0$ and $C'>0$, such that
$$E_{Z_0}\left[\Pi\Bigg(||ZZ^{\T}-Z_0Z_0^{\T}||_F^2\leq M\frac{K_0^4n^3}{p}\Big| X\Bigg)\right] \geq 1-\exp\Big(-C'nK_0^2\Big),$$
as long as $nK_0^2/p=o(1)$.
\end{thm}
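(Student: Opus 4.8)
The plan is to assemble the three ingredients already in place: the discrete Schwartz bound of Theorem~\ref{thm:general}, the testing guarantee of Lemma~\ref{lem:test}, and the prior mass lower bound (\ref{prior:ibp}), which is the common worst case valid for IBP and for pIBP with a mis-specified tree (it follows from Theorem~\ref{thm:prior} with $\kappa=0$, $K_0^*=K_0$). First I would fix the rate $\epsilon_{n,p}^2 = K_0^4 n^3/p$ and take $U = \{||ZZ^{\T}-Z_0Z_0^{\T}||_F^2 > M\epsilon_{n,p}^2\}$, so that the target event in Theorem~\ref{thm:postibp} is exactly $U^c$. Applying Theorem~\ref{thm:general} with the test $\phi$ furnished by Lemma~\ref{lem:test}, and observing that both $E_{Z_0}(\phi)$ and $\sup_{Z\in U}E_Z(1-\phi)$ are separately dominated by the single testing-error bound $T_{n,p}$ of Lemma~\ref{lem:test}, I get
$$E_{Z_0}\big[\Pi(U\mid X)\big] \le \Big(1 + \Pi(||ZZ^{\T}-Z_0Z_0^{\T}||_F^2=0)^{-1}\Big)\, T_{n,p}.$$
It then suffices to show the right-hand side is at most $\exp(-C'nK_0^2)$, after which passing to the complement $U^c$ yields the statement.

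The second step is to evaluate the $\min$ inside $T_{n,p}$ at this rate. Substituting $\epsilon_{n,p}^2 = K_0^4 n^3/p$ gives, for the first branch, $p\cdot M\epsilon_{n,p}^2/(n^2K_0^2) = M K_0^2 n$, and for the second branch, $p\cdot\sqrt{M}\,\epsilon_{n,p}/(nK_0) = \sqrt{M K_0^2 n p}$. The first branch is the smaller precisely when $M K_0^2 n < p$, which holds under the hypothesis $nK_0^2/p = o(1)$ for $n$ large and $M$ a fixed constant. Hence the leading exponent of $T_{n,p}$ is $-CM K_0^2 n + 2\log n$, and
$$T_{n,p} \le \exp\big(-CM K_0^2 n + 2\log n\big) + \exp\big(-Cp + 2\log n\big).$$

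The third step is the bookkeeping of exponents against the prior penalty. By (\ref{prior:ibp}), $\Pi(\cdots=0)^{-1}\le \exp(C_1 nK_0^2)$, so the product above is at most $2\exp(C_1 n K_0^2)\big[\exp(-CMK_0^2 n + 2\log n)+\exp(-Cp+2\log n)\big]$. For the first summand I would choose $M$ large enough that $CM - C_1 \ge 2C'$ and absorb the $2\log n$ term, which is $o(nK_0^2)$ since $K_0\ge 1$; this gives $\exp(-C' n K_0^2)$. For the second summand the hypothesis $nK_0^2/p = o(1)$ forces $C_1 n K_0^2 - Cp \le -C' n K_0^2$ for $n$ large, and again $2\log n$ is negligible. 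Adding the two bounds establishes $E_{Z_0}[\Pi(U\mid X)] \le \exp(-C' n K_0^2)$, i.e. the claim.

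The main obstacle, and essentially the only place requiring care, is the joint calibration in the last two steps. The rate $\epsilon_{n,p}^2$ must be large enough that the testing exponent $CM K_0^2 n$ overpowers the prior penalty $C_1 n K_0^2$, which pins $M$ to a sufficiently large absolute constant; yet the single condition $nK_0^2/p = o(1)$ must simultaneously \emph{(i)} select the quadratic branch of the $\min$ and \emph{(ii)} keep the residual $\exp(-Cp)$ testing term below the prior penalty $\exp(C_1 nK_0^2)$. Checking that one choice of $M$ and one asymptotic regime make all three inequalities hold at once is the crux; once the branch selection is confirmed, the remainder reduces to routine comparisons of exponents.
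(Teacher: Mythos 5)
Your proposal is correct and follows essentially the same route as the paper: it combines Theorem~\ref{thm:general}, the test from Lemma~\ref{lem:test}, and the worst-case prior mass bound (\ref{prior:ibp}) from Theorem~\ref{thm:prior}, then fixes $\epsilon_{n,p}^2=K_0^4n^3/p$, verifies that the quadratic branch of the $\min$ is active under $nK_0^2/p=o(1)$, and chooses $M$ large enough for the testing exponent to dominate the prior penalty. The only cosmetic difference is that you bound $E_{Z_0}(\phi)$ and $\sup_{Z\in U}E_Z(1-\phi)$ separately to get the factor $\bigl(1+\Pi(\cdot)^{-1}\bigr)$, whereas the paper simply divides the total testing error by the prior mass; the bookkeeping is otherwise identical.
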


\begin{thm} \label{thm:postpibp}
For the mixture of pIBP prior $\Pi$ on $[Z]$ with $\eta\in (0,1)$, let $Z_0$ be the true factor matrix. When $K_0^*\lesssim K_0^{1-\beta}$ and $K^{2\beta}\lesssim n$ for $\beta\in (0,1)$, for the binary factor model, there exist $M>0$ and $C'>0$, such that
$$E_{Z_0}\left[\Pi\Bigg(||ZZ^{\T}-Z_0Z_0^{\T}||_F^2\leq M\frac{K_0^{4-2\beta}n^3}{p}\Big| X\Bigg)\right]\geq 1-\exp\Big(-C'nK_0^{2(1-\beta)}\Big),$$
as long as $nK_0^{2(1-\beta)}/p=o(1)$.
\end{thm}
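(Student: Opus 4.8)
The plan is to combine the three ingredients assembled in the preceding subsections---the discrete Schwartz bound of Theorem \ref{thm:general}, the test of Lemma \ref{lem:test}, and the prior-mass lower bound of Theorem \ref{thm:prior}---by committing to the single quantitative choice $\epsilon_{n,p}^2 = K_0^{4-2\beta}n^3/p$ and taking $U=\{\|ZZ^\T-Z_0Z_0^\T\|_F^2 > M\epsilon_{n,p}^2\}$, the complement of the event in the statement. With this $U$, Theorem \ref{thm:general} bounds $E_{Z_0}[\Pi(U\mid X)]$ by $E_{Z_0}(\phi)$ plus the supremal type-II error inflated by the reciprocal prior mass, while Lemma \ref{lem:test} controls both error terms at once. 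The goal is to show the entire right-hand side is at most $\exp(-C'nK_0^{2(1-\beta)})$, and the art lies almost entirely in choosing $\epsilon_{n,p}$ so that the testing exponent exactly matches the prior-mass exponent.

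First I would insert the prior mass. In the effective-specification regime $K_0^*\lesssim K_0^{1-\beta}$ with $n\gtrsim K_0^{2\beta}$, the optimization over $k=K_0^*+\kappa$ already carried out in Section \ref{sec:priorconcen} turns Theorem \ref{thm:prior} into $\Pi(\|ZZ^\T-Z_0Z_0^\T\|_F^2=0)\geq \exp(-C_2'nK_0^{2(1-\beta)})$, so the reciprocal prior mass contributes a factor $\exp(C_2'nK_0^{2(1-\beta)})$. Next I would evaluate the two arguments of the minimum in Lemma \ref{lem:test} at the chosen $\epsilon_{n,p}$: the quadratic argument is $M\epsilon_{n,p}^2/(n^2K_0^2)=M\,nK_0^{2(1-\beta)}/p$, and the linear argument is $\sqrt{M}\,\epsilon_{n,p}/(nK_0)=\sqrt{M}\,(nK_0^{2(1-\beta)}/p)^{1/2}$.

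The decisive observation---and the one genuine obstacle---is that the hypothesis $nK_0^{2(1-\beta)}/p=o(1)$ forces $M\rho\le 1$ for large $(n,p)$, where $\rho:=nK_0^{2(1-\beta)}/p$; hence $M\rho\le\sqrt{M}\,\rho^{1/2}$ and the minimum is attained by the quadratic term. Consequently $p\cdot\min(\cdot,\cdot)=M\,nK_0^{2(1-\beta)}$, and the first exponential in Lemma \ref{lem:test} becomes $\exp(-CM\,nK_0^{2(1-\beta)}+2\log n)$. Multiplying by the reciprocal prior mass gives $\exp((C_2'-CM)nK_0^{2(1-\beta)}+2\log n)$; choosing $M$ large enough that $CM>C_2'$ makes the leading exponent strictly negative, and since $nK_0^{2(1-\beta)}\ge n$ dominates $2\log n$, this is at most $\exp(-C'nK_0^{2(1-\beta)})$. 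This matching of two a priori unrelated exponents, pinned down by the choice of $\epsilon_{n,p}^2$ and by the correct resolution of the minimum, is the crux of the argument.

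It remains to dispose of the second exponential $\exp(-Cp+2\log n)$ of Lemma \ref{lem:test}. After inflation by the reciprocal prior mass it is $\exp(-Cp+C_2'nK_0^{2(1-\beta)}+2\log n)$; because $\rho=o(1)$ yields $p>nK_0^{2(1-\beta)}$ eventually and $C_2'\rho\le C/2$, the exponent is bounded by $-Cp/2\le -C'nK_0^{2(1-\beta)}$. Adding the uninflated $E_{Z_0}(\phi)$ term, which Lemma \ref{lem:test} already controls by the same two exponentials, I conclude $E_{Z_0}[\Pi(U\mid X)]\le \exp(-C'nK_0^{2(1-\beta)})$, which is exactly the asserted lower bound on the complementary posterior probability. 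The proof of Theorem \ref{thm:postibp} is the identical argument with $\beta=0$ and $K_0^*=K_0$, using the benchmark prior mass \eqref{prior:ibp} in place of the optimized one.
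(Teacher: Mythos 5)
Your proposal is correct and follows essentially the same route as the paper: the paper's proof of this theorem also combines Theorem \ref{thm:general}, Lemma \ref{lem:test}, and the optimized prior-mass bound $\exp(-C_2'nK_0^{2(1-\beta)})$ from Section \ref{sec:priorconcen}, with the same choice $\epsilon_{n,p}^2=K_0^{4-2\beta}n^3/p$ and the same resolution of the minimum (the quadratic term wins under $nK_0^{2(1-\beta)}/p=o(1)$), followed by taking $M$ large so the testing exponent beats the prior-mass exponent. The only difference is expository: the paper works out these steps explicitly for Theorem \ref{thm:postibp} and then says Theorem \ref{thm:postpibp} "follows a similar argument," whereas you carry out the pIBP case directly, filling in details (the $M\rho\le\sqrt{M}\rho^{1/2}$ comparison, the disposal of the $\exp(-Cp+2\log n)$ term) that the paper leaves implicit.
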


The above two theorems establish  rates of contraction for the posterior distributions of IBP and pIBP.   The posterior probabilities on the neighborhood of the truth can be arbitrarily close to $1$ in expectation under the true model for sufficiently large $n$, $p$ and $K_0$. The contraction rate is faster for larger $p$ and smaller $n$, because more  variables are helpful to identify the feature similarity of a group of individuals.

Compared with the rate of IBP in Theorem \ref{thm:postibp}, when the tree structure is effectively specified, the upper bound of the rate of pIBP in Theorem \ref{thm:postpibp} is faster by a factor of $K_0^{2\beta}$. Such difference is significant if the number of features $K_0$ is large. Moreover, Theorem \ref{thm:postibp} also suggests that even when the tree structure of pIBP is mis-specified, the rate of contraction is the same as that of IBP, implying the robust property of pIBP. Although our theoretical study is carried out in the simple two-group structure model, similar conclusions can also be obtained under a more complicated structural assumption using the same method.

\section{Simulation Studies} \label{sec:simu}  

In this section, we perform simulations to evaluate the performances of IBP and pIBP. We implemented the Markov chain Monte Carlo algorithm proposed in \cite{miller2012phylogenetic} to perform posterior inference of the feature similarity matrix $ZZ^{\T}$. In the algorithm, the sampling process on the tree structure is expressed as a graphical model, where the prior probabilities can be calculated efficiently by a sum-product algorithm. All the parameters $\sigma_A$, $\sigma_X$, $\alpha$ and $\{p_k\}$ (marginal probabilities of a latent feature equaling $1$) are sampled as part of the overall Markov chain Monte Carlo  procedure.

In the first simulation, we evaluated the performance of IBP, pIBP with a correctly specified tree structure, and pIBP with a mis-specified tree structure (mispIBP). We constructed a set of samples with a clear subgroup structure on $Z_0$. Specifically we simulated data with eight subgroups characterized by six latent factors as illustrated by  Figure \ref{sim1}. Twelve models presented in Table \ref{tab1} are considered. For each model, we generated an $n \times p$ matrix $X=Z_0A+E$ with $(\sigma_{A, 0},\sigma_{X, 0})=(1, $0$\cdot$5$)$. For IBP, we let $\eta=0$ so that pIBP is equivalent to IBP. For pIBP, we let $\eta=\text{0$\cdot$8}$ and a proper tree structure is given. For the mispIBP, we let $\eta=\text{0$\cdot$5}$ and the prior is a mis-specified tree with samples within a subtree assigned to different groups. Estimation error on $Z$ is evaluated in terms of the normalized Frobenius norm of the feature similarity matrix $n^{-1/2}{||ZZ^{\T}-Z_0Z_0^{\T}||_F}$. We further evaluated the latent structure recovery by the number of estimated latent factors. We observed that both IBP and pIBP overestimates the number of latent factors because of the presence of many factors with only a few samples. This is similar to what was proved for Dirichlet and Pitman-Yor processes where the posterior is inconsistent for estimating the number of clusters \citep{miller13}.
Therefore, we reported a truncated estimator of the number of latent factors counting only those factors shared by at least $5$ samples.

The algorithm of \cite{miller2012phylogenetic} is implemented for 1000 MCMC steps. We observe that it guarantees convergence in the problem sizes that are considered in this simulation.

 Generally, the reported twelve models represent two scenarios: the small $p$ scenario and the large $p$ scenario. Remember in our setting, the larger the value of $p$ is, the more accurately we can recover the latent features.
 In the models with a small $p$ ($p=30$ and $20$), the information from data is limited and the inference relies more heavily on the prior information. We found pIBP performs better than the other two methods in both cases. Besides, mispIBP has comparable performance with IBP, implying that pIBP is robust to mis-specified tree structure. The simulation results substantiate the conclusions we have from Theorem \ref{thm:postibp} and Theorem \ref{thm:postpibp}. In the models with large $p$ ($p=100$ and $200$), there is adequate information from the data and the priors play a less important role. Inferences using different priors lead to similar results.

\begin{figure}
\centering
\includegraphics[width=5in]{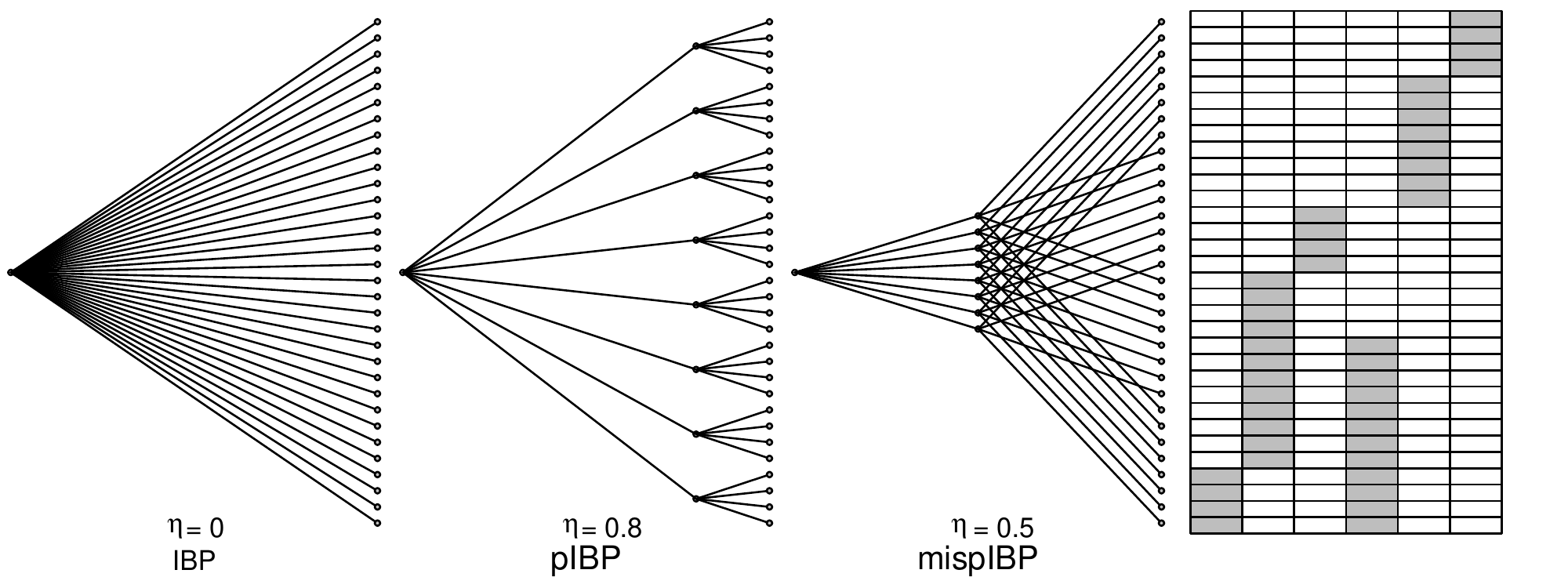}
\caption{The illustration of IBP, pIBP with an appropriate tree structure and pIBP with a mis-specified tree structure and the latent factor matrix $Z_0$ used in the first simulation. }
\label{sim1}
\end{figure}

\begin{table}[!ht]
\centering
\scriptsize
\def~{\hphantom{0}}
\caption{Simulation results: comparisons of IBP, pIBP with the appropriate tree prior and pIBP with the mis-specified tree prior (mispIBP).}
\begin{tabular}{cccccccc}
 \\
$(n, p)$  &  \multicolumn{2}{c}{IBP}    &   \multicolumn{2}{c}{pIBP}  &  \multicolumn{2}{c}{mispIBP}   \\[5pt] 
\hline 
          &  F-norm  & $\hat{K}$ &   F-norm  & $\hat{K}$  &  F-norm  & $\hat{K}$  \\[3pt] 
\hline       
(192,20) & 18.9 (15.2) & 8.1 (3.8) & 6.8 (2.3) & 6.7 (1.1) & 16.2 (9.1) & 6.6 (0.8) \\ [3pt] 
(288,20) & 20.3 (8.9) & 7 (1.9) & 10 (2.2) & 7 (0.9) & 19.3 (14.6) & 7 (0.9) \\       [3pt]  
(384,20) & 27.8 (7.4) & 7.5 (1.8) & 16 (7.7) & 7.9 (2.4) & 32.3 (5.8) & 7.8 (1.3)\\[3pt] 
\hline
(192,30) & 9.5 (6.9) & 6.6 (0.8) & 4.9 (3) & 6.1 (0.3) & 14.4 (15.3) & 6.8 (1.6) \\    [3pt] 
(288,30) & 14.2 (5.2) & 6.6 (0.5) & 7.9 (6.1) & 6.6 (1.4) & 13.2 (12.5) & 6.4 (0.6) \\ [3pt] 
(384,30) & 14.5 (8.2) & 6.7 (0.9) & 8 (4.8) & 6.4 (0.7) & 13.9 (9.7) & 6.7 (0.8) \\[3pt]   
\hline
(192,100) & 3.8 (2.3) & 5.9 (0.6) & 4 (2.2) & 5.8 (0.6) & 3.8 (2.2) & 5.9 (0.6) \\   [3pt]      
(288,100) & 5.5 (2.3) & 5.8 (0.5) & 5.2 (2) & 5.8 (0.6) & 5.3 (2.1) & 5.8 (0.5) \\    [3pt]     
(384,100) & 6 (3.4) & 6 (0.6) & 5.5 (3.9) & 6.2 (0.9) & 5.7 (3.4) & 6 (0.8) \\    [3pt]         
\hline
(192,200) & 3.8 (1.8) & 5.8 (0.6) & 3.8 (1.9) & 5.5 (1.1) & 3.8 (1.9) & 5.5 (1.1) \\     [3pt]  
(288,200) & 4.8 (2.3) & 5.7 (0.5) & 4.8 (2.3) & 5.7 (0.5) & 4.9 (2.4) & 5.7 (0.5) \\  [3pt]     
(384,200) & 5 (2.4) & 5.6 (0.6) & 4.7 (2.6) & 5.6 (0.5) & 4.6 (2.5) & 5.7 (0.6) \\[3pt]      
\hline
\end{tabular}
\begin{flushleft}
The performance is measured by estimation errors in terms of the normalized Frobenius norm of the feature similarity matrix ${n^{-1/2}{||ZZ^{\T}-Z_0Z_0^{\T}||_F}}$ (F-norm), and the number of estimated latent factors $\hat{K}$. Numbers in parentheses are the standard deviations across the 40 independent replicates. In the above models, $\sigma_{A,0}^2 = 1$, $\sigma_{X,0}^2 = $ 0$\cdot$5, $K_0=9$, results are based on 1000 Markov chain Monte Carlo steps.
\end{flushleft}
\label{tab1}
\end{table}

In the second simulation, we used the similarity data to construct pIBP prior. Nine models presented in Table \ref{tab2} are considered. For each model, we generated an $n \times {K_0}$ binary matrix $Z_0$ with 4 columns sampled from a $\text{Bernoulli}(\text{0$\cdot$3})$ and 5 columns with fixed structure. For IBP, no prior of the group structure is given. For pIBP, we first apply a hierarchical cluster analysis with complete linkage on the rows of $Z_0$ and then use its output dendrogram as the tree in the pIBP prior (see Figure \ref{sim2}). In our analysis, we constructed our prior based on the true knowledge of $Z_0$ in order to investigate whether the correct structural information will improve the performance through pIBP priors. In practice, such trees need to be constructed from external sources.  For mispIBP, the tree prior was constructed in the same way as pIBP but using a random permutation of $Z_0$ on rows in the clustering. In this setting, mispIBP represents totally incorrect information. 
Similar as  the previous simulation, we evaluated the performance by ${n^{-1/2}||ZZ^{\T}-Z_0Z_0^{\T}||_F}$ and the truncated number of estimated  latent features (Table \ref{tab2}). When $p$ is small, pIBP outperforms IBP in all cases.  When $p$ is adequately large ($p=60$ in this setting), the inference is less influenced by the prior information. 

\begin{figure}
\centering
\includegraphics[width=5in]{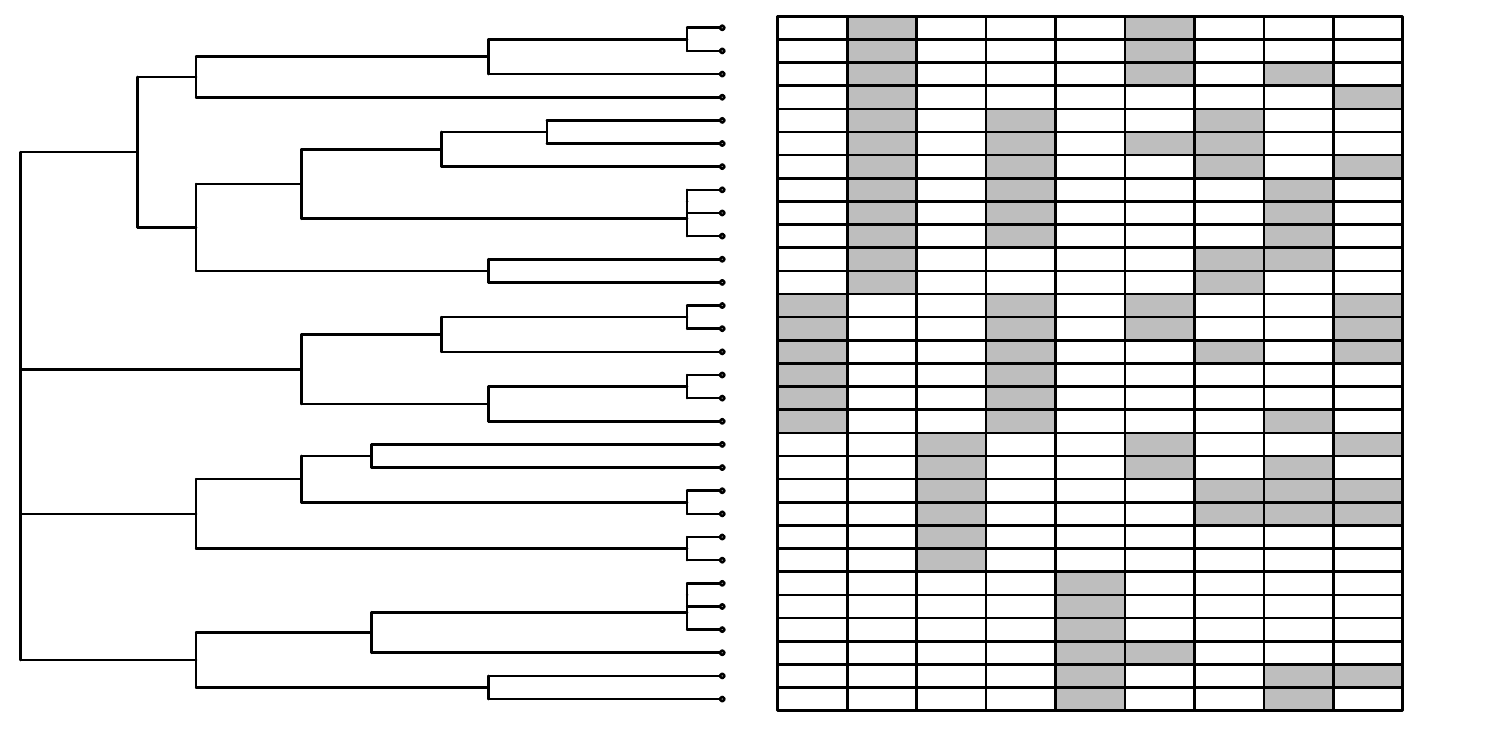}
\caption{The illustration of the latent factor matrix $Z_0$ and tree prior constructed from the hierarchical clustering analysis of $Z_0$ in the second simulation. }
\label{sim2}
\end{figure}

\begin{table}[!ht]
\centering
\scriptsize
\def~{\hphantom{0}}
\caption{Simulation results: comparisons of IBP and pIBP with the tree prior from the dendrogram of a hierarchical clustering on $Z_0$.}%
\begin{tabular}{cccccccc}
 \\
$(n, p)$  &  \multicolumn{2}{c}{IBP}    &   \multicolumn{2}{c}{pIBP}  &  \multicolumn{2}{c}{mispIBP}  \\[4pt] 
\hline 
          &  F-norm  & $\hat{K}$ &   F-norm  & $\hat{K}$  &  F-norm  & $\hat{K}$  \\ [3pt]
\hline 
(120, 15) & 28.5 (6) & 22.5 (1.6) & 11.4 (6.4) & 17 (3.9) & 31.1 (10.5) & 23.6 (3.4) \\[3pt] 
(180, 15) & 30.4 (3.9) & 21.5 (1.4) & 11.9 (4.7) & 15.5 (2.9) & 31.2 (7.1) & 23.1 (3.1) \\[3pt] 
(240, 15) & 35 (7.2) & 18.5 (4.9) & 13.4 (2.3) & 17.8 (2.5) & 32.6 (4.3) & 24.6 (2) \\[3pt] 
\hline
(120, 30) & 11.8 (7.7)  & 11.9 (3.6) & 7 (2.3) & 11.7 (2.5) &  8.1 (3.5) & 11.6 (1.5) \\[3pt] 
(180, 30) & 13.9 (6.9) & 12.3 (3) & 9.2 (2.9) & 13.3 (2.7) & 12.1 (3.3) & 12.4 (1.8)\\ [3pt]       
(240, 30) & 15.9 (10.4) & 12.2 (3.3) & 10.7 (3) & 13.2 (2.2) & 18.2 (8.4) & 11.1 (1.4) \\ [3pt]  
\hline
(120, 60) & 7.3 (2.8) & 11.2 (1.5) & 6.7 (2.3) & 10.6 (1.5) & 7.6 (2.5) & 10.6 (1.5) \\ [3pt]      
(180, 60) & 9.6 (2.5) & 11.7 (2.2) & 8.1 (2.5) & 11.1 (2.3) & 9.4 (3.9) & 10.8 (1.2) \\  [3pt]     
(240, 60) & 9.4 (3.2) & 11.5 (2.4) & 9.3 (2.2) & 10.8 (1.6)& 11.7 (4.2) & 11.3 (1.7) \\ [3pt] 
\hline
\end{tabular}
\begin{flushleft}
The performance is based on 40 independent replicates, each with 1000 Markov chain Monte Carlo steps.
\end{flushleft}
\label{tab2}
\end{table}

\section{Applications of pIBP in the Integrative Cancer Genomics Analysis}\label{sec:real}

Cancer research has been revolutionized by recent advances in high through-put technologies. Diverse types of genomics data, e.g., DNA, RNA, and epigenetic, have been profiled for different tumor types \citep{nik2012life,muzny2012comprehensive,bell2011integrated,mll2012comprehensive}. These data have revealed that substantial heterogeneities exist across tumor types, across individuals within the same tumor types and even within an individual tumor. However, the tumor heterogeneity at somatic level has not been explicitly explored in the integrative analysis.

Here we propose to use  binary factor model to integrate somatic mutation and gene expression data based on  pIBP prior. Our working hypothesis is that gene expression profiles of a cancer patient may be predicted by a set of latent factors that represent distinct molecular drivers. With this hypothesis, the more similar the somatic mutation profiles are between two cancer patients, the more similar their gene expression profiles are. Therefore, we build a pIBP prior based on somatic mutation data then specify it on the latent factors of gene expression data. Using this approach, we can investigate the gene expression data by taking into account the heterogeneities across cancer patients at somatic level. 

We consider studies on a specific cancer type/subtype, which collects somatic mutations from whole exome sequencing and gene expressions either from sequencing or microarrays for each sample. Somatic mutations can either be more narrowly defined as single nucleotide changes and small insertions/deletions, or more broadly defined to include changes at the copy number level. We denote the detected somatic mutations for a group of samples by a binary matrix $S=(s_{il})_{n\times m}$, with $s_{il}$ indicating the mutation status of the $l$th gene on the $i$th individual, as an external resource to construct the tree prior. When subclonality information is available, $s_{il}$ may be expressed as a continuous measure between 0 and 1, representing the percentage of the cells containing mutations at the $l$th gene. 

As for using a tree structure to express the relationships of individuals using the somatic mutation data, we propose to construct either logic tree or dendrogram tree. The logic tree prior is constructed as a logic tree based on the presence/absence of a set of somatic mutations. In this case, each node represents the status of a specific mutation. The dendrogram tree prior is adapted from the dendrogram tree of a hierarchical clustering on the somatic profiles $S=(s_{il})_{n\times m}$. In such a tree, the non-leaf nodes have no explicit meaning but represent a local cluster of individuals. When the order of mutation acquisitions and the effects of specific mutations are unknown, the dendrogram tree provides a measure of the overall similarities between individuals.

We analyzed the TCGA BRCA Level 3 dataset generated by \citep{mll2012comprehensive} (downloaded from cBio \citep{cerami2012cbio}) using the dendrogram tree construction strategy. We focused on 134 samples categorized as HER2 or Basal-like subtypes. Among these two subtypes, HER2 subtype is relatively well characterized and has effective clinical treatments. The basal-like subtype, which is also known as triple-negative breast cancers (TNBCs, lacking expression of ER, progesterone receptor (PR) and HER2), is poorly understood, with only chemotherapy as the main therapeutic option \citep{mll2012comprehensive}. Characterization of the basal-like subtype at the molecular level has important clinical implications. We built a tree prior from the dendrogram of a hierarchical clustering analysis with the frequent mutations in breast cancer including AKT1, CDH1, GATA3, MAP3K1, MLL3, PIK3CA, PIK3R1, PTEN, RUNX1 and TP53. For expression data, genes having top 300 MAD across samples were kept and centered. We ran 10 Markov chains. No substantial difference was observed across runs and we chose the one with largest posterior probability as the final result. Figure \ref{fig4} shows the input tree prior, subtype information and the inferred latent feature matrix $[Z]$.  

In our samples, the basal-like and HER2 samples display different and almost complementary patterns in their possession of the first two features. 74 of 81 Basal-like samples exhibit the first feature and 79 of 81 are deplete with the second feature. In contrast, 43 of 53 HER2 samples are deplete with the first feature and 31 of 53 exhibit the second feature. For the first feature, the top 10 genes with the largest loadings include MRPL9, PUF60, SCNM1, EIF2C2, BOP1, MTBP, DEDD, PHF20L1, HSF1 and HEATR1. Among these, BOP1 is involved in ribosome biogenesis and contributes to genomic stability, deregulation of which leads to altered chromosome segregation \citep{killian2006contribution}; MTBP inhibits cancer metastasis by interacting with MDM2 \citep{chene2003inhibiting}; DEDD interacts with PI3KC3 to activate autophagy and attenuate epithelialÐmesenchymal transition in cancer \citep{lv2012dedd}; and HSF1 has been proposed as a predictor of survival in breast cancer \citep{van2002gene}. EIF2C2, PUF60 and PHF20L1 have been reported as prognostic markers in ovarian cancer \citep{ramakrishna2010identification,wrzeszczynski2011identification}, which is consistent with the recent discovery that basal-like breast tumours with high-grade serous ovarian tumours share many molecular commonalities \citep{mll2012comprehensive}. These basal-like specific genes may potentially become novel therapeutic targets or prognostic markers. For the second feature, the top 10 genes with the largest loadings include STARD3, MED1, PSMD3, GRB7, ORMDL3, WIPF2, CASC3, RPL19, SNF8 and AMZ2. Among these, overexpressions of STARD3, PSMD3, GRB7, CASC3 and RPL19 have been reported in HER2-amplified breast cancer cell lines \citep{arriola2008genomic}; MED1 is required for estrogen receptor-mediated gene transcription and breast cancer cell growth \citep{zhang2011arginine}. As revealed by principal component analysis based on gene expression (Figure \ref{fig4}), these genes weighing high on first two latent features have discriminating power on Basal-like and HER2 samples.

\begin{figure}
\centering
\includegraphics[width=5in]{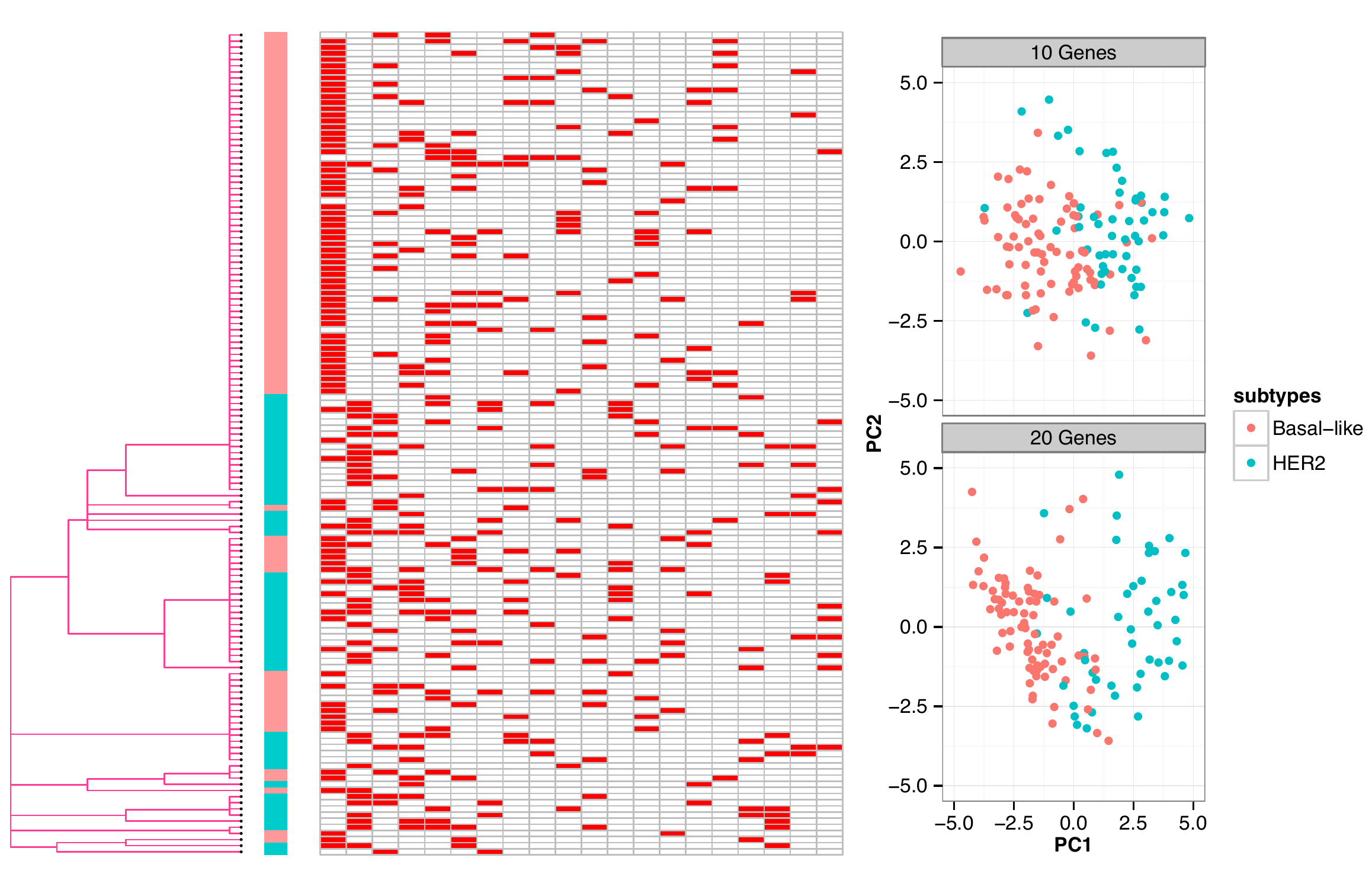}
\caption{A graph showing the dendrogram tree prior (left), the inferred latent factor matrix $[Z]$ (middle, only first 20 columns shown) and PCA analysis of Basal-like (Red) and HER2 (Green) based on genes with top loading on latent factors (topright, with a set of 10 genes from first factor; bottomright, with a set of 20 genes from first two factors) for TCGA BRCA dataset.}
\label{fig4}
\end{figure}

Furthermore, we found that the status of the fifth and sixth features was strongly associated with disease recurrence in our samples as revealed by survival analysis (Figure \ref{fig5} shows the Kaplan--Meier plot). Samples with the fifth feature have a higher probability of recurrence than those without it, with a p-value of 0$\cdot$0068, whereas samples without the sixth feature have a higher probability of recurrence than those with it, with a p-value of 0$\cdot$00084. Examinations of the loadings on these two features identified RMDN1, ARMC1, TMEM70, VCPIP1, TCEB1, MTDH, EBAG9, MRPL13, UBE2V2, FAM91A1 and RRS1 on the fifth feature and TRIM11, COMMD5, PYCRL, TIGD5, MRPL55, LSM1, SETDB1, CNOT7, PROSC, DEDD and HSF1 on the sixth feature. Among these, the prognosis significance of some has been discussed before, for example, MTDH activation by 8q22 genomic gain promotes chemoresistance and cetastasis of poor-prognosis breast cancer \citep{hu2009mtdh}; EBAG9 (RCAS1) is associated with ductal breast cancer progression \citep{rousseau2002rcas1}. The other  genes may serve as candidate tumor progression markers. 

In comparison, we analyzed the same 134 breast cancer samples with the expression profiles of 300 genes and the mutation status of 11 genes with IBP prior. The resulting latent factor matrix is less sparse than that of pIBP, which offers compromised interpretability (See Supplementary Figure 1). Moreover, the above reported features were not recovered by IBP prior, suggesting the integration of somatic mutations might lead to better understanding of gene expression.

\begin{figure}
\centering
\includegraphics[width=5in]{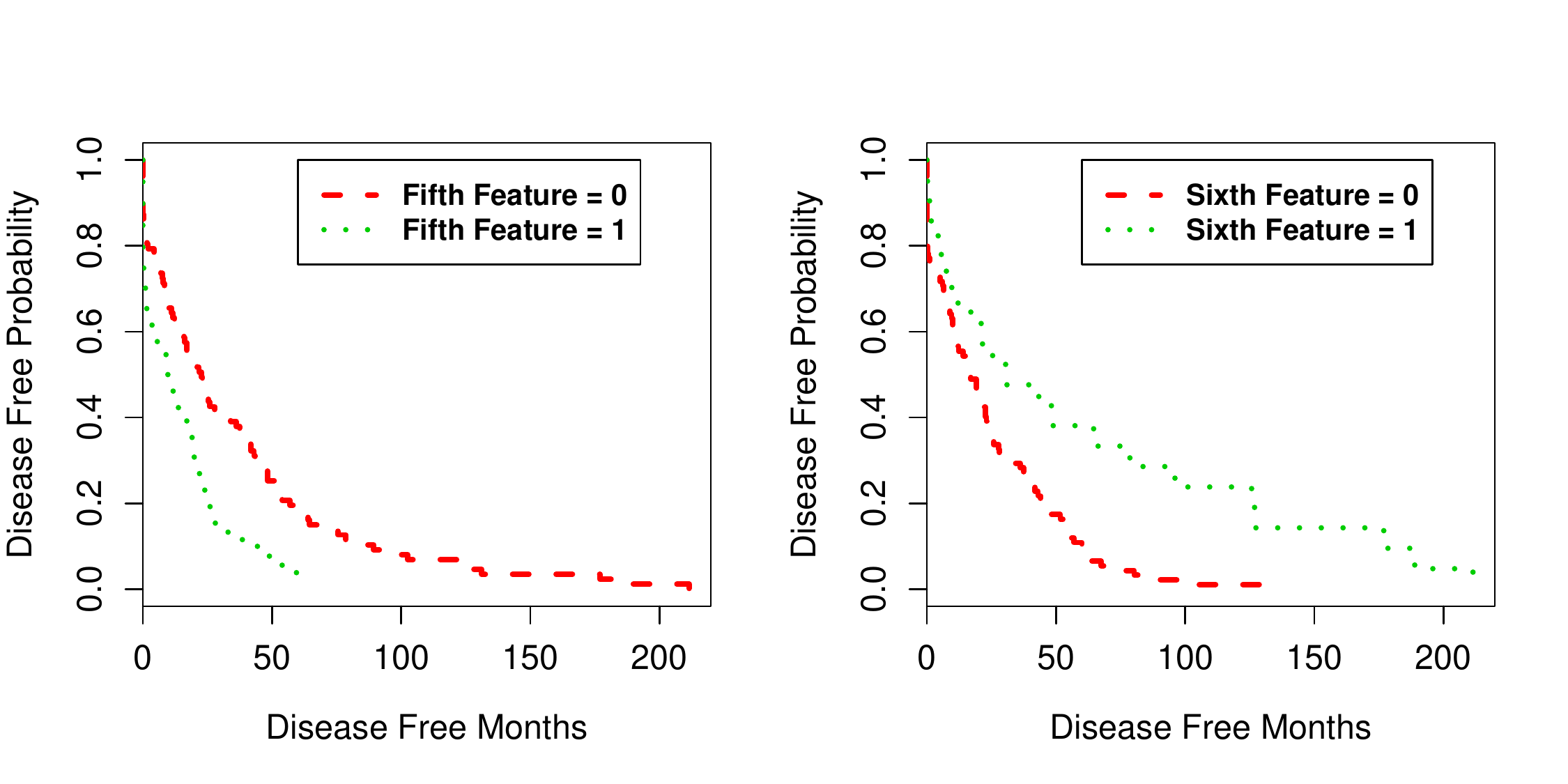}
\caption{A Kaplan--Meier plot for groups with different status of the fifth and sixth feature inferred from TCGA BRCA dataset.}
\label{fig5}
\end{figure}

\section{Discussion}\label{sec:disc}

\subsection{Related Work on Factor Models}

This paper attempts to provide a theoretical foundation for the widely used IBP and pIBP priors. We illustrate the performance of the priors through a simple binary factor model. To the best of our knowledge, there are only a few literatures on posterior rates of contraction for factor models and its alternative form principal component analysis (PCA). \cite{pati2014posterior} is the first work to consider posterior contraction rates for sparse factor models. \cite{gao2013rate} derives rate-optimal posterior contraction for sparse PCA. Both results achieve the frequentist minimax rates (up to a logarithmic factor for the first work).
Frequentist estimation in factor models include \cite{fan2008high}, \cite{fan2011high} and \cite{fan2013large}.

Minimax rates for factor models usually appear in the literature in the form of principal component analysis. For example, minimax rates for sparse PCA are derived by \cite{birnbaum2013minimax}, \cite{cai2013sparse}, \cite{cai2013optimal} and \cite{vu2013minimax} under various settings.

For binary factor models, minimax rates are not available in the literature, and it cannot be easily derived from the existing results. In the current binary factor model setting, there are two main points that deviate from the settings considered in the literature. First, the largest eigenvalue of the matrix $Z_0Z_0^{\T}+I$ may diverge as $n\rightarrow\infty$ in the extreme case, while most minimax rates in the literature for covariance estimation assume bounded spectrum. Second, the binary factor model only takes value in $\{0,1\}$, which distinguishes itself from ordinary factor models. The results in this paper suggest at least two open problems. First, what is the minimax rate of the binary factor model? Second, is IBP or pIBP rate-optimal? If not, what is the best rate of contraction that can be achieved by the posterior distribution?

\subsection{Approximate Group Structure}

Theorem \ref{thm:postpibp} states the posterior contraction rates of pIBP under the model of a two-group structure through the factor decomposition (\ref{eq:decomp}). Such characterization of group structure is exact in the sense that even only one person in $S_1$ possesses a factor that is mostly possessed by people in $S_2$, that factor is classified as a common factor, contributing to the total $K_0^*$. Therefore, in many real cases the exact two-group structure is violated and we can easily get $K_0^*=K_0$, thus losing the advantage of using pIBP.

In this section, we present a result to demonstrate that pIBP still gains advantage over IBP even when $K_0^*=K_0$ but the two-group structure approximately holds. We say $Z_0$ has an approximate two-group structure if there exists a binary matrix $Z^*$ of the same size such that the number $K_0^*$ associated with $Z^*$ is bounded by $O(K_0^{1-\beta})$ and $\left\|Z_0Z_0^{\T}-Z^*(Z^*)^{\T}\right\|_F$ is small. In other words, $Z_0$ may have a large $K^*_0$, but it is close to a binary factor matrix whose $K_0^*$ is small. The following theorem is an oracle inequality for pIBP under the posterior distribution.

\begin{thm}\label{thm:oracle}
Let $Z_0\in\{0,1\}^{n\times K_0}$ be an arbitrary binary factor matrix, and let $Z^*\in\{0,1\}^{n\times K_0}$ be a binary factor matrix with a well specified group structure such that its $K_0^*\lesssim K_0^{1-\beta}$ for $\beta\in (0,1)$.
Under the assumption of Theorem \ref{thm:postpibp},
$$
 E_{Z_0}\left[\Pi\Bigg(\left\|ZZ^{\T}-Z_0(Z_0)^{\T}\right\|_F^2\leq   M\left(\frac{n^3K_0^{4-2\beta}}{p}+n^2K_0^2||Z_0Z_0^T-Z^*(Z^*)^{\T}||^4\right)\Big|X\Bigg)\right]
$$
$$\geq  1-\exp\left(-C'nK_0^{2(1-\beta)}\right)-\frac{2}{p},$$
for some constants $M,C'>0$.
\end{thm}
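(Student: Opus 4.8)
The plan is to reuse the test-plus-prior-mass scheme behind Theorems~\ref{thm:postibp} and~\ref{thm:postpibp}, but to lower bound the evidence (the denominator of the posterior) at the oracle $Z^*$ instead of at the arbitrary truth $Z_0$. The reason is that an arbitrary $Z_0$ may carry a negligible prior mass $\Pi(\fnorm{ZZ^{\T}-Z_0Z_0^{\T}}^2=0)$, since its group structure can be badly specified ($K_0^*=K_0$), whereas $Z^*$ has a well specified structure and hence, by Theorem~\ref{thm:prior}, a large prior mass $\pi^*:=\Pi(\fnorm{ZZ^{\T}-Z^*(Z^*)^{\T}}^2=0)\geq\exp(-CnK_0^{2(1-\beta)})$. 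Re-centring like this is free only up to the likelihood-ratio factor evaluated under the genuinely true law $P_{Z_0}$, and that factor is exactly what generates both the extra approximation term $n^2K_0^2\fnorm{\Delta}^4$, with $\Delta:=Z_0Z_0^{\T}-Z^*(Z^*)^{\T}$, and the additive $2/p$.

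Concretely, set $L_Z=\frac{dP_Z}{dP_{Z_0}}(X)$ and restrict the evidence integral to $\{Z:ZZ^{\T}=Z^*(Z^*)^{\T}\}$, on which $L_Z=L_{Z^*}$, so that $\int L_Z\,d\Pi\geq \pi^*L_{Z^*}$. On the event $A=\{L_{Z^*}\geq e^{-\ell}\}$ we have $1/L_{Z^*}\leq e^{\ell}$, hence for $U=\{\fnorm{ZZ^{\T}-Z_0Z_0^{\T}}^2>M\epsilon_{n,p}^2\}$ and any test $\phi$, $\Pi(U\mid X)\leq \phi+\mathbf{1}_{A^c}+e^{\ell}(1-\phi)(\pi^*)^{-1}\int_U L_Z\,d\Pi$. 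Taking $E_{Z_0}$ and using $E_{Z_0}[(1-\phi)L_Z]=E_Z(1-\phi)$ gives the oracle analogue of Theorem~\ref{thm:general},
\[
E_{Z_0}\big[\Pi(U\mid X)\big]\leq E_{Z_0}(\phi)+P_{Z_0}(A^c)+\frac{e^{\ell}}{\pi^*}\,\sup_{Z\in U}E_Z(1-\phi).
\]
The three terms are then treated separately.

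The test type-I error $E_{Z_0}(\phi)$ and the type-II error $\sup_{Z\in U}E_Z(1-\phi)$ are supplied by Lemma~\ref{lem:test}, whose test is built precisely for $H_0:Z=Z_0$ against $H_1:Z\in U$ and so applies unchanged. Choosing $\epsilon_{n,p}^2=\frac{n^3K_0^{4-2\beta}}{p}+n^2K_0^2\fnorm{\Delta}^4$, the first summand makes the type-II exponent $p\,M\epsilon_{n,p}^2/(n^2K_0^2)$ dominate the prior-mass exponent $\log(1/\pi^*)\leq CnK_0^{2(1-\beta)}$, exactly as in Theorem~\ref{thm:postpibp}; the second summand is inserted so that the same exponent also dominates $\ell$. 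To fix $\ell$ and bound $P_{Z_0}(A^c)$, I would write $-\log L_{Z^*}$ as a sum over the $p$ columns of i.i.d.\ Gaussian quadratic forms, so that $E_{Z_0}[-\log L_{Z^*}]=p\,\mathrm{KL}(P_{Z_0,\mathrm{col}}\,\|\,P_{Z^*,\mathrm{col}})$; with $\Sigma_0=Z_0Z_0^{\T}+I$ and $\Sigma_1=Z^*(Z^*)^{\T}+I$ a direct computation gives this KL $\lesssim\opnorm{\Sigma_1^{-1}}^2\fnorm{\Delta}^2\leq\fnorm{\Delta}^2$, and a Bernstein bound for the centred quadratic forms fixes a threshold $\ell\lesssim p\fnorm{\Delta}^2$ (up to lower-order fluctuations) with $P_{Z_0}(A^c)\leq 2/p$. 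Hence the requirement on the radius becomes $\epsilon_{n,p}^2\gtrsim n^2K_0^2\ell/p$, i.e.\ $\epsilon_{n,p}^2\gtrsim n^2K_0^2\fnorm{\Delta}^2$; because $\Delta$ has integer entries, $\fnorm{\Delta}\geq1$ whenever $\Delta\neq0$, so this is implied by the stated term $n^2K_0^2\fnorm{\Delta}^4$. The prefactor $n^2K_0^2$ is the testing normalisation of Lemma~\ref{lem:test}, which itself reflects the possibly diverging operator norm $\opnorm{\Sigma_0}\lesssim nK_0$.

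The main obstacle is this control of $L_{Z^*}$. Unlike the well-specified proofs, the re-centring law $P_{Z^*}$ differs from the data law $P_{Z_0}$, and the covariance $\Sigma_0=Z_0Z_0^{\T}+I$ may have a diverging spectrum, so both the mean (the KL) and the tail of $\log L_{Z^*}$ must be pinned to $\fnorm{\Delta}$ with the correct $nK_0$-dependent normalisation. Calibrating the deviation threshold $\ell$ so that $P_{Z_0}(A^c)\leq 2/p$ while keeping $e^{\ell}$ dominated by the type-II error of Lemma~\ref{lem:test} is the delicate bookkeeping, and it is exactly what dictates the form of the approximation term. Once $\ell$, the prior mass $\pi^*$, and the test are in hand, collecting the three bounds in the displayed inequality yields the claim.
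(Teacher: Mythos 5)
Your re-centering idea is in spirit the same as the paper's: the paper also pays a likelihood-ratio price $\exp\left(p\fnorm{Z_0Z_0^{\T}-Z^*(Z^*)^{\T}}^2\right)$ plus an additive $2/p$ term (via a KL-plus-Chebyshev argument identical to your control of $\log L_{Z^*}$) to transfer the problem to the well-specified matrix $Z^*$. However, your execution has a genuine gap. You keep the test of Lemma \ref{lem:test} centered at $Z_0$ and multiply its \emph{entire} type-II error by $e^{\ell}/\pi^*$. But the bound of Lemma \ref{lem:test} has two terms, and the second one, $\exp\left(-Cp+2\log n\right)$, comes from the tests $\phi_l$, $l\geq 1$, against alternatives with large $\|ZZ^{\T}+I\|_{\infty}$; its exponent is a \emph{fixed} multiple of $p$ that does not scale with $M$ or with $\epsilon_{n,p}$. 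Writing $\Delta=Z_0Z_0^{\T}-Z^*(Z^*)^{\T}$, after multiplication by $e^{\ell}/\pi^*$ with $\ell\asymp p\fnorm{\Delta}^2$ this term becomes
$$
\exp\left(C'p\fnorm{\Delta}^2+CnK_0^{2(1-\beta)}-Cp+2\log n\right),
$$
which diverges as soon as $\fnorm{\Delta}^2$ exceeds a constant. Since $\Delta$ has integer entries, $\fnorm{\Delta}^2\geq 1$ whenever $\Delta\neq 0$, and in the regime the oracle inequality is designed for (e.g.\ the $\delta$-approximate structure of the corollary, where $\fnorm{\Delta}^2\lesssim n^{\delta}\|Z_0\|^2$ grows with $n$) this product is uncontrollable. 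Enlarging $M$ or $\epsilon_{n,p}$ only strengthens the first (Frobenius) term of the testing bound, not this one, so your bookkeeping cannot close.

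The paper resolves exactly this point by splitting the event before changing measure: it first shows (its Lemma A.4) that the posterior puts mass $1-\exp(-C_2p)$ on $\left\{\|ZZ^{\T}+I\|_{\infty}\leq C_1(K_0+1)\right\}$ \emph{directly under} $P_{Z_0}$, so the troublesome sup-norm region never sees the factor $e^{p\fnorm{\Delta}^2}$; it then applies the change of measure only to the posterior probability of $\left\{\fnorm{ZZ^{\T}-Z^*(Z^*)^{\T}}^2>\eta^2,\ \|ZZ^{\T}+I\|_{\infty}\leq C_1(K_0+1)\right\}$ under $P_{Z^*}$ (its Lemma A.5), where only the single Frobenius-norm test $\phi_0$ is needed and its exponent $p\min\left(\eta^2/(n^2K_0^2),\eta/(nK_0)\right)$ can be made to dominate $p\fnorm{\Delta}^2+nK_0^{2(1-\beta)}$ by the choice $\eta^2\asymp n^4K_0^{6-4\beta}/p^2+n^2K_0^2\fnorm{\Delta}^4$. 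Incorporating this truncation step (and then translating the contraction statement from center $Z^*(Z^*)^{\T}$ back to $Z_0Z_0^{\T}$ by the triangle inequality) would repair your argument; without it, the proof as proposed fails.
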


In the case when $Z_0$ has an exact two-group structure, we may choose $Z^*=Z_0$ so that
 $\left\|Z_0Z_0^{\T}-Z^*(Z^*)^{\T}\right\|_F=0$. Then it reduces to the result in Theorem \ref{thm:postpibp}. Otherwise, we may choose a $Z^*$ with an exact two-group structure to approximate $Z_0$. In this case, the posterior distribution contracts to the truth with a rate consisting of two parts. The first part can be viewed as the estimation error of a binary factor matrix $Z^*$ with an exact two-group structure. The second part is the approximation error for the true binary factor matrix $Z_0$  by $Z^*$. Note that the rate of convergence for IBP in Theorem \ref{thm:postibp} is $\frac{K_0^4n^3}{p}$. Therefore, as long as
 $$n^2K_0^2\left\|Z_0Z_0^{\T}-Z^*(Z^*)^{\T}\right\|_F^4=o\left(\frac{K_0^4n^3}{p}\right),$$
 pIBP still converges faster than IBP if the true binary factor matrix $Z_0$ has an approximate two-group structure.

Let us consider the following example to illustrate Theorem \ref{thm:oracle}. Let $Z_0\in\{0,1\}^{n\times K_0}$ be a binary factor matrix which generates the data. Among the $K_0-K_0^*=K_{01}+K_{02}$ factors that possess approximate group structures, there are $K_{01}$ factors belonging to $S_1$ and $K_{02}$ factors belonging to $S_2$. In addition, for some small $\delta\in (0,1)$, $n^{\delta}$ people in $S_1$ can possess a constant number of factors belonging to $S_2$, and $n^{\delta}$ people in $S_2$ can possess a constant number of factors belonging to $S_1$.
We call this situation a $\delta$-approximate two-group structure. By zeroing out these entries, we obtain a binary factor matrix $Z^*\in\{0,1\}^{n\times K_0}$ with an exact two-group structure, whose factor decomposition is $K_0=K_{01}+K_{02}+K_0^*$. In other words, for $Z^*$, there are $K_{01}$ factors exclusively belonging to $S_1$ and $K_{02}$ factors exclusively belonging to $S_2$. The approximation error is bounded by $\left\|Z_0Z_0^{\T}-Z^*(Z^*)^{\T}\right\|^2_F\lesssim ||Z_0||^2||Z_0-Z^*||_F^2\lesssim n^{\delta}||Z_0||^2$, where $||\cdot||$ denotes the spectral norm of a matrix, which is its largest singular value. We summarize this example in the following corollary.

\begin{corollary}
Under the setting of Theorem \ref{thm:oracle}, let $Z^*$ have a factor decomposition satisfying $K_0^*\lesssim K_0^{1-\beta}$, then as long as $n^{2\delta}=o\left(\frac{nK_0^{2}}{p||Z_0||^4}\right)$, we have
$$
 E_{Z_0}\left[\Pi\Bigg(\left\|ZZ^{\T}-Z_0(Z_0)^{\T}\right\|_F^2\leq   \epsilon_{n,p}^2\Big|X\Bigg)\right]
\geq  1-\exp\left(-C'nK_0^{2(1-\beta)}\right)-\frac{2}{p},
$$
for some positive sequence $\epsilon_{n,p}^2=o\left(\frac{K_0^4n^3}{p}\right)$ and some constant $C>0$.
\end{corollary}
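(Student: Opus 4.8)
The plan is to obtain this corollary as an essentially immediate consequence of Theorem \ref{thm:oracle}: I would instantiate the abstract approximation error $||Z_0Z_0^{\T}-Z^*(Z^*)^{\T}||_F$ for the concrete $\delta$-approximate two-group structure, substitute it into the contraction radius furnished by that theorem, and then verify that the stated hypothesis $n^{2\delta}=o\pth{nK_0^2/(p||Z_0||^4)}$ is exactly what forces the resulting radius to be of smaller order than the IBP benchmark $K_0^4n^3/p$. The probability lower bound requires no new work, since it is inherited verbatim from Theorem \ref{thm:oracle}.

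The one genuinely computational step is bounding the approximation error. Recall that in the $\delta$-approximate model, $Z^*$ is produced from $Z_0$ by zeroing out the cross-group entries, namely the $n^{\delta}$ individuals of $S_1$ each carrying a constant number of $S_2$-factors, and symmetrically for $S_2$. Since every altered entry flips between $0$ and $1$ and therefore contributes exactly $1$ to the squared Frobenius norm, and since only $O(n^{\delta})$ entries are altered, I would conclude $||Z_0-Z^*||_F^2\lesssim n^{\delta}$. To pass from the matrix difference to the Gram-matrix difference, I would use the telescoping identity $Z_0Z_0^{\T}-Z^*(Z^*)^{\T}=Z_0(Z_0-Z^*)^{\T}+(Z_0-Z^*)(Z^*)^{\T}$ together with the submultiplicative bound $||AB||_F\le ||A||\,||B||_F$, giving $||Z_0Z_0^{\T}-Z^*(Z^*)^{\T}||_F\lesssim(||Z_0||+||Z^*||)\,||Z_0-Z^*||_F$; absorbing $||Z^*||\le ||Z_0||+||Z_0-Z^*||_F$ into $||Z_0||$ then yields the key estimate
$$||Z_0Z_0^{\T}-Z^*(Z^*)^{\T}||_F^2\lesssim ||Z_0||^2\,||Z_0-Z^*||_F^2\lesssim n^{\delta}||Z_0||^2.$$

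Finally I would insert this into the contraction radius of Theorem \ref{thm:oracle}. Its approximation term becomes $n^2K_0^2\,||Z_0Z_0^{\T}-Z^*(Z^*)^{\T}||_F^4\lesssim n^{2+2\delta}K_0^2||Z_0||^4$, so the radius is at most a constant multiple of $\frac{n^3K_0^{4-2\beta}}{p}+n^{2+2\delta}K_0^2||Z_0||^4$, which I set to be $\epsilon_{n,p}^2$. It remains to check $\epsilon_{n,p}^2=o\pth{K_0^4n^3/p}$: the first term equals $\frac{K_0^4n^3}{p}\cdot K_0^{-2\beta}$ and is automatically of smaller order because $\beta>0$ forces $K_0^{-2\beta}\to 0$; the second term obeys $n^{2+2\delta}K_0^2||Z_0||^4=o\pth{K_0^4n^3/p}$, which, after dividing through by $n^2K_0^2||Z_0||^4$, is precisely the hypothesis $n^{2\delta}=o\pth{nK_0^2/(p||Z_0||^4)}$. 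There is no serious obstacle here, as the corollary is bookkeeping on top of Theorem \ref{thm:oracle}; the only point needing care is the approximation-error estimate, in particular justifying $||Z^*||\lesssim ||Z_0||$ (valid since $||Z_0-Z^*||_F=O(n^{\delta/2})$ is of lower order than $||Z_0||$ in the regime of interest) and the entrywise count $||Z_0-Z^*||_F^2\lesssim n^{\delta}$, both of which rest squarely on the explicit construction of $Z^*$ from $Z_0$ in the $\delta$-approximate two-group model.
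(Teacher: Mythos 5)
Your overall route is the paper's route: invoke Theorem \ref{thm:oracle}, bound the approximation error by the telescoping identity $Z_0Z_0^{\T}-Z^*(Z^*)^{\T}=Z_0(Z_0-Z^*)^{\T}+(Z_0-Z^*)(Z^*)^{\T}$ together with $\|AB\|_F\le \|A\|\,\|B\|_F$ and the entry count $\|Z_0-Z^*\|_F^2\lesssim n^{\delta}$, then translate the hypothesis $n^{2\delta}=o\bigl(nK_0^2/(p\|Z_0\|^4)\bigr)$ into the requirement that the approximation term be $o(K_0^4n^3/p)$. However, there is one genuine gap: your justification of $\|Z^*\|\lesssim \|Z_0\|$. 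You argue via the triangle inequality that $\|Z^*\|\le \|Z_0\|+\|Z_0-Z^*\|_F=\|Z_0\|+O(n^{\delta/2})$ and then absorb the second term, claiming $n^{\delta/2}$ is of lower order than $\|Z_0\|$ ``in the regime of interest.'' Nothing in the setting guarantees this; in fact the corollary targets the opposite regime. The hypothesis $n^{2\delta}=o\bigl(nK_0^2/(p\|Z_0\|^4)\bigr)$ only forces $\|Z_0\|$ to be \emph{small}, and the paper's discussion emphasizes that the interesting case is a sparse $Z_0$ with small spectral norm (possibly $O(1)$, e.g.\ when every row and column of $Z_0$ has $O(1)$ ones). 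When the absorption fails, your estimate degrades to $\|Z_0Z_0^{\T}-Z^*(Z^*)^{\T}\|_F^2\lesssim n^{\delta}\|Z_0\|^2+n^{2\delta}$, and the approximation term in the oracle radius acquires an extra contribution $n^2K_0^2n^{4\delta}$, which requires the additional condition $n^{4\delta}=o(nK_0^2/p)$. This is not implied by the stated hypothesis: for instance, with $\|Z_0\|\asymp 1$ and $nK_0^2/p\asymp n^{3\delta}$ the hypothesis holds, yet $n^{4\delta}\gg nK_0^2/p$, so your chain of inequalities does not deliver $\epsilon_{n,p}^2=o(K_0^4n^3/p)$.

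The repair is the paper's one-line observation: $Z^*$ is obtained from $Z_0$ by zeroing out entries, and both matrices are entrywise non-negative, so $\|Z^*\|\le \|Z_0\|$ holds exactly — for $0\le A\le B$ entrywise and unit vectors $x,y$, one has $x^{\T}Ay\le |x|^{\T}A|y|\le |x|^{\T}B|y|\le \|B\|$. With this monotonicity in place of your absorption step, your computation gives the paper's bound $\|Z_0Z_0^{\T}-Z^*(Z^*)^{\T}\|_F^2\lesssim n^{\delta}\|Z_0\|^2$, and the remaining bookkeeping — substituting into Theorem \ref{thm:oracle} and checking that $n^2K_0^2n^{2\delta}\|Z_0\|^4=o(K_0^4n^3/p)$ is equivalent to the stated hypothesis — coincides with the paper's proof.
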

The corollary provides an example that pIBP converges at a faster rate than that of IBP when $Z_0$ satisfies the $\delta$-approximate two-group structure. The quantity $||Z_0||$ quantifies the sparsity of the binary factor matrix $Z_0$. In many applied situations, the true binary factor matrix $Z_0$ has a sparse structure \citep{griffiths2011indian,knowles2011nonparametric,carvalho2008high}. This leads to a small $||Z_0||$.


\bibliographystyle{plainnat}
\bibliography{FeatureTree}

\newpage
\thispagestyle{empty}

\setcounter{page}{1}
\begin{center}
\MakeUppercase{\large Supplement to ``Posterior Contraction Rates of the Phylogenetic Indian Buffet Processes"}
\end{center}

This manuscript serves as the supplementary material to the paper \cite{chen14}. Section \ref{sec:proof} and Section \ref{sec:tech} present technical proofs of the main results of the paper. Section \ref{sec:apply} presents an alternative analysis of the real data studied in \cite{chen14}. Given a matrix $A=(a_{ij})_{m\times n}$, its sup-norm is defined as $||A||_{\infty}=\max_{ij}|a_{ij}|$ and its spectral norm is defined $||A||=s_{\max}(A)$, where $s_{\max}(\cdot)$ is the largest singular value of a matrix. The notation $\mathbb{P}$ and $\mathbb{E}$ stand for generic probability and expectation operators when the associated distribution is clear from the context. We use $C$ and its variants such as $C'$ and $C_1$ to denote generic constants, which may vary from line to line.

\appendix

\section{Proofs} \label{sec:proof}

\subsection{Preparatory lemmas}

\begin{lemma} \label{lem:testing}
There is some constant $C>0$ such that for any $t>0$,
$$P_{Z}\left\{\left\|\frac{1}{p}XX^T-(ZZ^T+I)\right\|_F>t\right\}\leq\exp\left\{-Cp\min\Bigg(\frac{t^2}{n^2||ZZ^T+I)||_{\infty}^2},\frac{t}{n||ZZ^T+I)||_{\infty}}\Bigg)+2\log n\right\},$$
and
$$P_{Z}\left\{\left\|\frac{1}{p}XX^T-(ZZ^T+I)\right\|_{\infty}>t\right\}\leq\exp\left\{-Cp\min\Bigg(\frac{t^2}{||ZZ^T+I)||_{\infty}^2},\frac{t}{||ZZ^T+I)||_{\infty}}\Bigg)+2\log n\right\}.$$
\end{lemma}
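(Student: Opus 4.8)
The plan is to reduce both inequalities to a single entrywise Bernstein bound. Write $\Sigma = ZZ^\T + I$; under $P_Z$ the columns $x_{\cdot 1},\ldots,x_{\cdot p}$ of $X$ are i.i.d.\ $N(0,\Sigma)$ by (\ref{eq:basic}) with $\sigma_A^2=\sigma_X^2=1$. Consequently the $(a,b)$ entry of the centered sample covariance is $\bigl(\frac{1}{p}XX^\T-\Sigma\bigr)_{ab}=\frac{1}{p}\sum_{j=1}^p\bigl(x_{aj}x_{bj}-\Sigma_{ab}\bigr)$, an average of $p$ i.i.d.\ centered variables. The observation that organizes the whole argument is the deterministic comparison $\|A\|_F\le n\|A\|_\infty$ for any $n\times n$ matrix $A$: since $\|A\|_F^2=\sum_{a,b}A_{ab}^2\le n^2\|A\|_\infty^2$, the event $\{\|A\|_F>t\}$ is contained in $\{\|A\|_\infty>t/n\}$. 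Hence it suffices to prove the sup-norm bound, after which the Frobenius bound follows by substituting $t\mapsto t/n$, which turns the denominators $\|\Sigma\|_\infty^2,\|\Sigma\|_\infty$ into $n^2\|\Sigma\|_\infty^2,n\|\Sigma\|_\infty$ exactly as stated.

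For the sup-norm bound I would first control a single entry. Each summand $x_{aj}x_{bj}$ is a product of two jointly Gaussian variables with variances $\Sigma_{aa},\Sigma_{bb}$, hence sub-exponential; using $\|x_{aj}\|_{\psi_2}\lesssim\sqrt{\Sigma_{aa}}$, the product bound $\|UW\|_{\psi_1}\lesssim\|U\|_{\psi_2}\|W\|_{\psi_2}$, and the fact that centering at most doubles the $\psi_1$-norm, each term $x_{aj}x_{bj}-\Sigma_{ab}$ has sub-exponential norm of order $\sqrt{\Sigma_{aa}\Sigma_{bb}}\le\|\Sigma\|_\infty$ (since $\Sigma_{aa},\Sigma_{bb}\le\|\Sigma\|_\infty$ trivially). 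Bernstein's inequality for i.i.d.\ sub-exponential variables then gives, for each fixed $(a,b)$,
$$P_Z\left\{\left|\left(\frac{1}{p}XX^\T-\Sigma\right)_{ab}\right|>t\right\}\le 2\exp\left(-cp\min\left(\frac{t^2}{\|\Sigma\|_\infty^2},\frac{t}{\|\Sigma\|_\infty}\right)\right)$$
for a universal $c>0$.

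A union bound over the $n^2$ entries then yields the sup-norm statement: the prefactor $2n^2$ contributes $\log(2n^2)=2\log n+\log 2=O(\log n)$ to the exponent, matching the stated additive $+2\log n$ up to absorbing the harmless constant $\log 2$ into $C$. Combining this with the comparison $\|A\|_F\le n\|A\|_\infty$ from the first step delivers the Frobenius bound, completing the proof.

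I expect the only nonroutine step to be the bookkeeping in the per-entry Bernstein estimate, namely verifying the correct sub-exponential parameter $\sqrt{\Sigma_{aa}\Sigma_{bb}}$ and confirming it is dominated by $\|\Sigma\|_\infty$. The reduction of the Frobenius norm to the sup-norm via $\|A\|_F\le n\|A\|_\infty$ is the clean trick that makes the two displayed bounds identical up to the rescaling $t\mapsto t/n$, and it sidesteps any chaining or covering-number argument over the $n^2$-dimensional matrix sphere.
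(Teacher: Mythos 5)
Your proposal is correct and takes essentially the same approach as the paper: the core input is an entrywise Bernstein inequality for sub-exponential products of jointly Gaussian variables (the paper's Lemma B.2, proved there via an explicit Gaussian decomposition rather than your $\psi_1$--$\psi_2$ product bound), combined with a union bound over the $n^2$ entries and the normalization $\sigma_{ss}\sigma_{tt}\le \|ZZ^T+I\|_{\infty}^2$. Your reduction of the Frobenius bound to the sup-norm bound via $\|A\|_F\le n\|A\|_{\infty}$ is just a repackaging of the paper's union-bound step $\left\{\sum_{s,t}(\hat{\sigma}_{st}-\sigma_{st})^2>t^2\right\}\subset\bigcup_{s,t}\left\{(\hat{\sigma}_{st}-\sigma_{st})^2>t^2/n^2\right\}$, so the two arguments coincide.
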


\subsection{Proofs of Theorem 4.1 and Lemma 4.1}

For notational simplicity, we write $\epsilon$ for $\epsilon_{n,p}$, with the dependency on $n$ and $p$ being implicit. 

\begin{proof}[Proof of Theorem 4.1]
The posterior distribution, according to Bayes formula, is
$$\Pi(U|X)=\frac{\int_U\frac{p(X|Z)}{p(X|Z_0)}d\Pi([Z])}{\int\frac{p(X|Z)}{p(X|Z_0)}d\Pi([Z])}.$$
The denominator has lower bound
$$\int\frac{p(X|Z)}{p(X|Z_0)}d\Pi([Z])\geq \int_{\{||ZZ^T-Z_0Z_0^T||_F^2=0\}}\frac{p(X|Z)}{p(X|Z_0)}d\Pi([Z])=\Pi\Big(||ZZ^T-Z_0Z_0^T||_F^2=0\Big).$$
The above equality is because $p(X|Z)=p(X|Z_0)$ when $||ZZ^T-Z_0Z_0^T||_F=0$.
Thus, we have
\begin{eqnarray*}
E_{Z_0}\Pi(U|X) &\leq& E_{Z_0}\phi + E_{Z_0}\Pi(U|X)(1-\phi) \\
&\leq& E_{Z_0}\phi + \frac{E_{Z_0}\Bigg(\int_U\frac{p(X|Z)}{p(X|Z_0)}d\Pi([Z])(1-\phi)\Bigg)}{\Pi\Big(||ZZ^T-Z_0Z_0^T||_F^2=0\Big)} \\
&=& E_{Z_0}\phi + \frac{\int_U E_Z(1-\phi)d\Pi([Z])}{\Pi\Big(||ZZ^T-Z_0Z_0^T||_F^2=0\Big)} \\
&\leq& E_{Z_0}\phi+\frac{1}{\Pi\Big(||ZZ^T-Z_0Z_0^T||_F^2=0\Big)}\sup_{Z\in U}E_Z(1-\phi),
\end{eqnarray*}
where the equality above is due to Fubini's Theorem. Therefore, the proof is complete.
\end{proof}

\begin{proof}[Proof of Lemma 4.1]
We consider the following test.
$$H_0: Z=Z_0,\quad H_1: ||ZZ^T-Z_0Z_0^T||_F> \sqrt{M}\epsilon.$$
The alternative region has decomposition
\begin{eqnarray*}
H_1 &\subset& \left\{||ZZ^T-Z_0Z_0^T||_F> \sqrt{M}\epsilon, ||ZZ^T+I||_{\infty}\leq 4(K_0+1)\right\} \\
&& \cup \bigcup_{l\geq 1}\left\{4l(K_0+1)<||ZZ^T+I||_{\infty}\leq 4(l+1)(K_0+1)\right\} \\
&=& \bigcup_{l=0}^{\infty}H_{1l}
\end{eqnarray*}
Define the testing functions
$$\phi_0=\mathbb{I}\left\{\left\|\frac{1}{p}XX^T-(Z_0Z_0^T+I)\right\|_F>\frac{1}{2}\sqrt{M}\epsilon\right\},$$
$$\phi_l=\mathbb{I}\left\{\left\|\frac{1}{p}XX^T\right\|_{\infty}>2l(K_0+1)\right\},\quad \text{for each }l.$$
Then, by Lemma \ref{lem:testing} and the fact that $||Z_0Z_0^T+I||_{\infty}\leq K_0+1\leq 2K_0$, we have
$$E_{Z_0}\phi_0\leq\exp\left\{-Cp\min\Bigg(\frac{M\epsilon^2}{n^2K_0^2},\frac{\sqrt{M}\epsilon}{nK_0}\Bigg)+2\log n\right\},$$
and
\begin{eqnarray*}
E_{Z_0}\phi_l &=& P_{Z_0}\left\{\left\|\frac{1}{p}XX^T\right\|_{\infty}>2l(K_0+1)\right\}\\
&\leq& P_{Z_0}\left\{\left\|\frac{1}{p}XX^T-(Z_0Z_0^T+I)\right\|_{\infty}>2l(K_0+1)-\left\|Z_0Z_0^T+I\right\|_{\infty}\right\} \\
&\leq& P_{Z_0}\left\{\left\|\frac{1}{p}XX^T-(Z_0Z_0^T+I)\right\|_{\infty}>l(K_0+1)\right\} \\
&\leq& \exp\Big(-Clp+2\log n\Big),
\end{eqnarray*}
where the second inequality above is by $\left\|Z_0Z_0^T+I\right\|_{\infty}\leq K_0+1\leq l(K_0+1)$, and the last inequality above is 
by Lemma \ref{lem:testing}. We also have for any $Z\in H_{10}$,
\begin{eqnarray*}
E_Z(1-\phi_0) &=& P_Z\left\{\left\|\frac{1}{p}XX^T-(Z_0Z_0^T+I)\right\|_F\leq\frac{1}{2}\sqrt{M}\epsilon\right\} \\
&\leq& P_Z\left\{||ZZ^T-Z_0Z_0^T||_F-\left\|\frac{1}{p}XX^T-(ZZ^T+I)\right\|_F\leq \frac{1}{2}\sqrt{M}\epsilon\right\} \\
&\leq& P_Z\left\{\left\|\frac{1}{p}XX^T-(ZZ^T+I)\right\|_F>\frac{1}{2}\sqrt{M}\epsilon\right\} \\
&\leq& \exp\left\{-Cp\min\Bigg(\frac{M\epsilon^2}{n^2K_0^2},\frac{\sqrt{M}\epsilon}{nK_0}\Bigg)+2\log n\right\},
\end{eqnarray*}
where the last inequality is by Lemma \ref{lem:testing} and the fact that $||ZZ^T+I||_{\infty}\leq 4(K_0+1)\leq 8K_0$ for any $Z\in H_{10}$.
Taking supreme over $Z\in H_{10}$, we get
$$\sup_{Z\in H_{10}}E_Z(1-\phi_0)\leq \exp\left\{-Cp\min\Bigg(\frac{M\epsilon^2}{n^2K_0^2},\frac{\sqrt{M}\epsilon}{nK_0}\Bigg)+2\log n\right\}.$$
For any $Z\in H_{1l}$, we have
\begin{eqnarray*}
E_Z(1-\phi_l) &=& P_Z\left\{\left\|\frac{1}{p}XX^T\right\|_{\infty}\leq 2l(K_0+1)\right\} \\
&\leq& P_Z\left\{||ZZ^T+I||_{\infty}-\left\|\frac{1}{p}XX^T-(ZZ^T+I)\right\|_{\infty}\leq 2l(K_0+1)\right\} \\
&\leq& P_Z\left\{\left\|\frac{1}{p}XX^T-(ZZ^T+I)\right\|_{\infty}>2l(K_0+1)\right\} \\
&\leq& \exp\Big(-Cp+2\log n\Big),
\end{eqnarray*}
where the last inequality above uses Lemma \ref{lem:testing} and the fact that $||ZZ^T+I||_{\infty}\leq 4(l+1)(K_0+1)$ for $Z\in H_{1l}$, and the second last inequality uses the fact that $||ZZ^T+I||_{\infty}>4l(K_0+1)$ for all $Z\in H_{1l}$. Taking supreme over $Z\in H_{1l}$, we obtain
$$\sup_{Z\in H_{1l}}E_Z(1-\phi_l)\leq  \exp\Big(-Cp+2\log n\Big).$$
Define $\phi=\max_{l}\phi_l$, we have
\begin{eqnarray*}
&& E_{Z_0}\phi+\sup_{Z\in H_1}E_Z(1-\phi) \\
&=& E_{Z_0}\max_l\phi_l+\max_l\sup_{Z\in H_{1l}}E_Z(1-\phi) \\
&\leq& \sum_lE_{Z_0}\phi_l + \max_l\sup_{Z\in H_{1l}}E_Z(1-\phi_l) \\
&\leq& 2\exp\left\{-Cp\min\Bigg(\frac{M\epsilon^2}{n^2K_0^2},\frac{\sqrt{M}\epsilon}{nK_0}\Bigg)+2\log n\right\}\\
&&+\sum_{l=1}^{\infty}\exp\Big(-Clp+2\log n\Big)+\exp\left(-Cp+2\log n\right) \\
&\leq& 2\exp\left\{-Cp\min\Bigg(\frac{M\epsilon^2}{n^2K_0^2},\frac{\sqrt{M}\epsilon}{nK_0}\Bigg)+2\log n\right\}+\exp\Big(-C'p+2\log n\Big).
\end{eqnarray*}
Thus, the proof is complete.
\end{proof}

\subsection{Proof of Theorems 4.2-4.4}

\begin{proof}[Proof of Theorem 4.2]

Without loss of generality, we assume $n$ is even in the proof.
First, note that the event $\left\{||ZZ^T-Z_0Z_0^T||_F^2=0\right\}$ is implied by $\{||Z-Z_0||_F^2=0\}$ for any column ordering of $Z_0$. Therefore, we have
$$\Pi\Big(||ZZ^T-Z_0Z_0^T||_F^2=0\Big)\geq P\Big(||Z-Z_0||_F^2=0\Big),$$
with $P$ being any probability measure on $Z$ whose image measure under the map $Z\mapsto [Z]$ is pIBP. We choose $P$ to be the stick-breaking representation described in Section 3.2. That is, under probability $P$, we first sample $\{p_k\}$ according to (2) in \cite{chen14}, and then given $\{p_k\}$, $Z$ is sampled according to the two-group tree structure for each column. Define $r_{1k}$ and $r_{2k}$ to be the group nodes for the first and the second group, respectively, for each $k$. Then according to the stick-breaking representation of pIBP, $\{r_{1k}\}$ and $\{r_{2k}\}$ given $\{p_k\}$ are i.i.d. Bernoulli random variables with parameter $1-\exp(-\eta\gamma_k)$, where $\gamma_k=-\log(1-p_k)$. Then, $z_{ik}$ are sampled conditioning on $(r_{1k},r_{2k})$. When $r_{1k}=1$, we have $z_{ik}=1$ for all $i\in S_1$. When $r_{1k}=0$, $z_{ik}$ follows the Bernoulli distribution with parameter $1-\exp\big(-(1-\eta)\gamma_k\big)$ for all $i\in S_1$. The value of $r_{2k}$ determines the distribution of $z_{ik}$ for $i\in S_2$ in the same way.

We first study $P\Big(||Z-Z_0||_F^2=0\Big|\{v_k\},\alpha\Big)$ for  given $\{v_k\}$ and $\alpha$. We choose a particular ordering of columns of $Z_0$. Given the factor decomposition (5) in \cite{chen14}, let the first $K_0^*$ columns correspond to the group-shared factors, and the next $K_{01}+K_{02}$ columns correspond to the group specific factors. Then define the number of $1$'s in the $k$-th column of $Z_0$ by
$$m_k=\sum_{\{i:z_{0,ik}=1\}}z_{0,ik},\quad\text{for }k=1,...,K_0^*.$$
Define $M^*=\sum_{k=1}^{K_0^*}m_k$ to be the number of $1$'s in the first $K_0^*$ columns of $Z_0$. The quantity $||Z-Z_0||_F^2$ has four parts.
$$||Z-Z_0||_F^2=\sum_{k=1}^{K_0^*}U_k+\sum_{k=1}^{K_0^*}V_k+\sum_{k=K_0^*+1}^{K_0}\sum_{i=1}^n(z_{ik}-z_{0,ik})^2+\sum_{k=K_0+1}^{\infty}\sum_{i=1}^n z_{ik}.$$
where
$$U_k=\sum_{\{i:z_{0,ik}=0\}}z_{ik},\quad V_k=\sum_{\{i:z_{0,ik}=1\}}|z_{ik}-1|.$$
We observe that given $\{v_k\}$, the four terms are independent. Therefore
\begin{eqnarray}
\nonumber&& P\Big(||Z-Z_0||_F^2=0|\{v_k\},\alpha\Big) \\
\nonumber&=& P\Bigg(\sum_{k=1}^{K_0^*}U_k=0\Big|\{v_k\},\alpha\Bigg)\times P\Bigg(\sum_{k=1}^{K_0^*}V_k=0\Big|\{v_k\},\alpha\Bigg)\\
\label{eq:product}&&  \times P\Bigg(\sum_{k=K_0^*+1}^{K_0}\sum_{i=1}^n(z_{ik}-z_{0,ik})^2=0\Big|\{v_k\},\alpha\Bigg) \times  P\Bigg(\sum_{k=K_0+1}^{\infty}\sum_{i=1}^n z_{ik}=0\Big|\{v_k\},\alpha\Bigg).
\end{eqnarray}
We study the four terms separately. Define $\mathcal{H}=\left\{\frac{1}{4}\leq v_i\leq\frac{3}{4}, \text{ for }k=1,...,K_0\right\}$.
Then, for every $\{v_k\}\in\mathcal{H}$, we have
\begin{eqnarray}
\nonumber && P\Bigg(\sum_{k=1}^{K_0^*}U_k=0\Big|\{v_k\},\alpha\Bigg)\times P\Bigg(\sum_{k=1}^{K_0^*}V_k=0\Big|\{v_k\},\alpha\Bigg) \\
\nonumber &\geq& \Big(\exp\big(-\gamma_1(1-\eta)\big)\Big)^{nK_0^*-M^*}\Big(1-\exp\big(-\gamma_{K_0^*}(1-\eta)\big)\Big)^{M^*}\\
\nonumber && \times P\Big(r_{11}=...=r_{1K_0^*}=r_{21}=...=r_{2K_0^*}=0\Big|\{v_k\},\alpha\Big) \\
\nonumber &\geq&  \Big(\exp\big(-\gamma_1(1-\eta)\big)\Big)^{nK_0^*-M^*}\Big(1-\exp\big(-\gamma_{K_0^*}(1-\eta)\big)\Big)^{M^*}\times \exp\Big(-2K_0^*\gamma_1\eta\Big) \\
\nonumber &=& (1-p_1)^{(nK_0^*-M^*)(1-\eta)+2K_0^*\eta}\Bigg(1-(1-p_{K_0^*})^{1-\eta}\Bigg)^{M^*} \\
\label{eq:rev01} &\geq& (1-p_1)^{(nK_0^*-M^*)(1-\eta)+2K_0^*\eta}p_{K_0^*}^{M^*}(1-\eta)^{M^*} \\
\nonumber &\geq& 4^{-(nK_0^*-M^*)(1-\eta)}4^{-2K_0^*\eta}4^{-K_0^*M^*}(1-\eta)^{M^*} \\
\label{eq:rev02} &\geq& \exp(-Cn{K_0^*}^2)(1-\eta)^{nK_0^*},
\end{eqnarray}
where we have used the inequality $1-q^{\beta}\geq\beta(1-q)$ for $\beta,q\in (0,1)$ to derive (\ref{eq:rev01}). The inequality (\ref{eq:rev02}) is due to the bound $M^*\leq nK_0^*$. The third term of (\ref{eq:product}) is
\begin{eqnarray*}
&& P\Bigg(\sum_{k=K_0^*+1}^{K_0}\sum_{i=1}^n(z_{ik}-z_{0,ik})^2=0\Big|\{v_k\},\alpha\Bigg) \\
&\geq& \exp\Big(-n(K_{01}+K_{02})\gamma_{K_0^*}(1-\eta)/2\Big)\times P\Big(r_{1k}=1,r_{2k}=0,\text{ for }k=K_0^*+1,...,K_0^*+K_{01}\Big|\{v_k\},\alpha\Big) \\
&& \times P\Big(r_{1k}=0,r_{2k}=1,\text{ for }k=K_0^*+K_{01}+1,...,K_0^*+K_{01}+K_{02}\Big|\{v_k\},\alpha\Big) \\
&\geq& \exp\Big(-n(K_{01}+K_{02})\gamma_{K_0^*}(1-\eta)/2\Big)\times \Big(1-\exp(-\eta\gamma_{K_0})\Big)^{K_{01}+K_{02}} \times \exp\Big(-\eta(K_{01}+K_{02})\gamma_{K_0^*}\Big) \\
&\geq& (1-p_{K_0^*})^{(\eta+n(1-\eta)/2)(K_{01}+K_{02})}p_{K_0}^{K_{01}+K_{02}}\eta^{K_{01}+K_{02}} \\
&\geq& \left(1-(4/3)^{-K_0^*}\right)^{(\eta+n(1-\eta)/2)(K_{01}+K_{02})}4^{-K_0(K_0-K_0^*)}\eta^{K_{01}+K_{02}} \\
&=& \exp\left((\eta+n(1-\eta)/2)(K_{01}+K_{02})\log\left(1-(4/3)^{-K_0^*}\right)\right)4^{-K_0(K_0-K_0^*)}\eta^{K_{01}+K_{02}}\\
&\geq& \exp\left(-Cn\frac{K_0-K_0^*}{(4/3)^{K_0^*}}-CK_0(K_0-K_0^*)\right)\eta^{K_{01}+K_{02}},
\end{eqnarray*} 
where the last inequality is due to the fact that $\log(1-x)\geq-\delta x,\quad\text{for }|x|\leq 3/4$,
with $\delta>0$ being a universal constant.
The last term in the product (\ref{eq:product}) is
\begin{eqnarray}
\nonumber && P\Bigg(\sum_{k=K_0+1}^{\infty}\sum_{i=1}^n z_{ik}=0\Big|\{v_k\},\alpha\Bigg) \\
\nonumber &\geq& \prod_{k=K_0+1}^{\infty}\exp\Big(-n\gamma_k(1-\eta)\Big) \times P\Big(r_{1k}=r_{2k}=0,\text{ for }k>K_0\Big|\{v_k\},\alpha\Big) \\
\nonumber &\geq& \prod_{k=K_0+1}^{\infty}\exp\Big(-n\gamma_k(1-\eta)\Big) \times \prod_{k=K_0+1}^{\infty}\exp(-2\eta\gamma_k) \\
\nonumber &=& \prod_{k=K_0+1}^{\infty} (1-p_k)^{n(1-\eta)+2\eta} \\
\nonumber &\geq& \prod_{k=K_0+1}^{\infty}\left(1-(4/3)^{-k}\right)^{n(1-\eta)+2\eta} \\
\nonumber &=& \exp\left((n(1-\eta)+2\eta)\sum_{k=K_0+1}^{\infty}\log\left(1-(4/3)^{-k}\right)\right) \\
\label{eq:rev007}&\geq& \exp\left(-\delta(n(1-\eta)+2\eta)\sum_{k=K_0+1}^{\infty}(4/3)^{-k}\right) \\
\nonumber &=& \exp\left(-3\delta(4/3)^{-K_0}(n(1-\eta)+2\eta)\right) \\
\nonumber &\geq& \exp\left(-Cn\right)
\end{eqnarray}
where the inequality (\ref{eq:rev007}) is due to the fact that $\log(1-x)\geq-\delta x,\quad\text{for }|x|\leq 3/4$,
with $\delta>0$ being a universal constant. For a constant $\eta\in (0,1)$, we have $(1-\eta)^{nK_0^*}\geq\exp(-CnK_0^*)$ and $\eta^{K_{01}+K_{02}}\geq\exp(-C(K_0-K_0^*))$ and thus
\begin{eqnarray*}
&& {P}\Big(||Z-Z_0||_F^2=0|\{v_k\},\alpha\Big) \\
&\geq& \exp\left(-Cn({K_0^*}^2+1)-Cn\frac{K_0-K_0^*}{(4/3)^{K_0^*}}-CK_0(K_0-K_0^*)\right),
\end{eqnarray*}
for every $\{v_k\}\in\mathcal{H}$. Observe that the above argument also works by replacing $K_0$ and $K_0^*$ by $K_0+\kappa$ and $K_0^*+\kappa$ for any $\kappa\geq 0$. Thus, for a constant $\eta\in (0,1)$, we have
\begin{eqnarray}
\nonumber&& {P}\Big(||Z-Z_0||_F^2=0|\{v_k\},\alpha\Big) \\
\label{eq:rev-imp}&\geq& \exp\left(-Cn(({K_0^*+\kappa})^2+1)-Cn\frac{K_0-K_0^*}{(4/3)^{K_0^*+\kappa}}-C(K_0+\kappa)(K_0-K_0^*)\right),
\end{eqnarray}
for every $\{v_k\}\in\mathcal{H}$. When $\eta=0$, pIBP becomes IBP. Thus, the decomposition (\ref{eq:product}) becomes
\begin{eqnarray}
\nonumber&& P\Big(||Z-Z_0||_F^2=0|\{v_k\},\alpha\Big) \\
\nonumber&=& P\Bigg(\sum_{k=1}^{K_0}U_k=0\Big|\{v_k\},\alpha\Bigg)\times P\Bigg(\sum_{k=1}^{K_0}V_k=0\Big|\{v_k\},\alpha\Bigg)\times P\Bigg(\sum_{k=K_0+1}^{\infty}\sum_{i=1}^n z_{ik}=0\Big|\{v_k\},\alpha\Bigg).
\end{eqnarray}
Replacing $K_0^*$ by $K_0$ in (\ref{eq:rev02}), we have
$${P}\Big(||Z-Z_0||_F^2=0|\{v_k\},\alpha\Big)\geq \exp\left(-CnK_0^2\right),$$
for $\eta=0$ and every $\{v_k\}\in\mathcal{H}$. Note that this is a special case of (\ref{eq:rev-imp}) with $K_0^*=K_0$ and $\kappa=0$.
Finally, we have
\begin{eqnarray*}
&& P\Big(||Z-Z_0||_F^2=0\Big) \\
&\geq& P\Big(||Z-Z_0||_F^2=0\Big|\{v_k\}\in\mathcal{H},\alpha\in(1/2,2)\Big)\mathbb{P}\Big(\mathcal{H}\Big|\alpha\in (1/2,2)\Big)\mathbb{P}\Big(\alpha\in (1/2,2)\Big) \\
&\geq& P\Big(||Z-Z_0||_F^2=0\Big|\{v_k\}\in\mathcal{H},\alpha\in(1/2,2)\Big) \Bigg(\sup_{\alpha\in(1/2,2)}\frac{\alpha\text{B}(\alpha,1)}{(3/4)^{\alpha}-(1/4)^{\alpha}}\Bigg)^{-K_0-\kappa}\mathbb{P}\Big(\alpha\in (1/2,2)\Big)\\
&\geq& \exp(-C(K_0+\kappa))P\Big(||Z-Z_0||_F^2=0\Big|\{v_k\}\in\mathcal{H},\alpha\in(1/2,2)\Big) \\
&\geq&\exp\left(-Cn(({K_0^*+\kappa})^2+1)-Cn\frac{K_0-K_0^*}{(4/3)^{K_0^*+\kappa}}-C(K_0+\kappa)(K_0-K_0^*+1)\right),
\end{eqnarray*}
by plugging (\ref{eq:rev-imp}). Thus, the proof is complete.
\end{proof}

\begin{proof}[Proof of Theorem 4.3-4.4]
This is directly by combining Theorem 4.1, Lemma 4.1, Theorem 4.2 and the discussion after Theorem 4.2. For Theorem 4.3, we have
\begin{eqnarray*}
&& E_{Z_0}\Pi\left(\left\|ZZ^T-Z_0Z_0^T\right\|^2>M\epsilon^2 |X\right) \\
&\leq& \frac{\exp\left\{-Cp\min\Bigg(\frac{M\epsilon^2}{n^2K_0^2},\frac{\sqrt{M}\epsilon}{nK_0}\Bigg)+2\log n\right\}+\exp\Big(-Cp+2\log n\Big)}{\exp\left(-C_1nK_0^2\right)}.
\end{eqnarray*}
Taking $\epsilon^2=\frac{K_0^4n^3}{p}$, we have $p\min\Bigg(\frac{M\epsilon^2}{n^2K_0^2},\frac{\sqrt{M}\epsilon}{nK_0}\Bigg)\asymp nK_0^2$ under the assumption of Theorem 4.3 that $nK_0^2=o(p)$. Thus, for some sufficiently large $M$,
$$E_{Z_0}\Pi\left(\left\|ZZ^T-Z_0Z_0^T\right\|^2>M\epsilon^2 |X\right)\leq \exp\left(-C'nK_0^2\right)+\exp\left(-C'p+2\log n\right).$$
Note that the first term in the tail dominates, which gives the result of Theorem 4.3. The result of Theorem 4.4 follows a similar argument.
\end{proof}

\subsection{Unknown Variances}

When variances $(\sigma_{A,0}^2,\sigma_{X,0}^2)$ are unknown, we put independent prior $\pi=\pi_A\times \pi_X$ on them, so that
$$([Z],\sigma_A^2,\sigma_X^2)\sim \Pi=\pi_{[Z]}\times\pi_A\times\pi_X,$$
where $\pi_{[Z]}$ is pIBP or IBP on $[Z]$.
In this case, we use the following theorem instead of Theorem 4.1.
\begin{thm} \label{thm:GGV}
Assume
\begin{equation}
\Pi\Big((2\sigma_X^4)^{-1}||\sigma_A^2ZZ^T+\sigma_X^2I-(\sigma_{A,0}^2Z_0Z_0^T+\sigma_{X,0}^2I)||_F^2\leq\epsilon^2\Big)\geq \exp\Big(-Cp\epsilon^2\Big), \label{eq:KLvar}
\end{equation}
for some $\epsilon$ satisfying $p\epsilon^2\rightarrow\infty$ and some constant $C>0$, and there is a testing function $\phi$, such that $E_{Z_0}\phi+\sup_{Z\in U}E_Z(1-\phi)\leq \exp\Big(-(C+4)p\epsilon^2\Big)$, then
$$E_{Z_0}\Pi\Big(U|X\Big)\leq \frac{C'}{p\epsilon^2},$$
for some constant $C'>0$.
\end{thm}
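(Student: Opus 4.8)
The plan is to run the Ghosal--Ghosh--van der Vaart posterior contraction argument, adapted to the $p$ independent Gaussian columns of $X$. Write $\Sigma=\sigma_A^2 ZZ^{\T}+\sigma_X^2 I$ and $\Sigma_0=\sigma_{A,0}^2 Z_0Z_0^{\T}+\sigma_{X,0}^2 I$, and let $B$ be the set appearing in the prior-mass hypothesis (\ref{eq:KLvar}). The first step is to certify that $B$ is a genuine Kullback--Leibler neighborhood in the two-moment sense the testing argument requires. For one column $x_{\cdot j}\sim N(0,\Sigma_0)$ against $N(0,\Sigma)$, a direct Gaussian computation expresses the per-column KL divergence and the variance $V$ of the log-likelihood ratio as traces of $\Sigma_0$ against $\Sigma^{-1}-\Sigma_0^{-1}=\Sigma^{-1}(\Sigma-\Sigma_0)\Sigma_0^{-1}$. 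Using $\Sigma\succeq\sigma_X^2 I$ one gets the clean bound $V\leq(2\sigma_X^4)^{-1}\|\Sigma-\Sigma_0\|_F^2$ and, for $\Sigma$ close to $\Sigma_0$ on $B$, the same quantity dominates the KL as well. Hence on $B$ both the per-column KL and $V$ are at most $\epsilon^2$, so summing over the $p$ independent columns the total KL and total variance of $\log\frac{p(X\mid Z)}{p(X\mid Z_0)}$ are at most $p\epsilon^2$.

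The second step, which I expect to be the crux, is the lower bound on the evidence (the denominator of the posterior). Restricting the integral to $B$ and applying Jensen's inequality to the probability measure $\Pi(\cdot)/\Pi(B)$ on $B$ gives
$$\log\int\frac{p(X\mid Z)}{p(X\mid Z_0)}\,d\Pi \;\geq\; \log\Pi(B) + \frac{1}{\Pi(B)}\int_B \log\frac{p(X\mid Z)}{p(X\mid Z_0)}\,d\Pi.$$
Under $P_{Z_0}$ the random integral on the right has mean at least $-p\epsilon^2$ (by the KL bound) and, applying Jensen once more to the square, variance at most $p\epsilon^2$ (by the variance bound). Chebyshev's inequality then shows that with $P_{Z_0}$-probability at least $1-\frac{1}{p\epsilon^2}$ this integral exceeds $-2p\epsilon^2$; combined with $\Pi(B)\geq\exp(-Cp\epsilon^2)$ this yields, on a good event $G$ of probability at least $1-\frac{1}{p\epsilon^2}$,
$$\int\frac{p(X\mid Z)}{p(X\mid Z_0)}\,d\Pi \;\geq\; \exp\big(-(C+2)p\epsilon^2\big).$$

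The final step converts the test into a numerator bound and assembles the pieces. Writing $E_{Z_0}\Pi(U\mid X)\leq E_{Z_0}\phi + P_{Z_0}(G^c) + E_{Z_0}\big[\Pi(U\mid X)(1-\phi)\mathbf{1}_G\big]$, the first term is the type-one error and the second is at most $\frac{1}{p\epsilon^2}$. On $G$ the posterior is bounded by $\exp((C+2)p\epsilon^2)$ times $\int_U \frac{p(X\mid Z)}{p(X\mid Z_0)}(1-\phi)\,d\Pi$; taking $E_{Z_0}$, then Fubini and the change of measure $E_{Z_0}[\frac{p(X\mid Z)}{p(X\mid Z_0)}(1-\phi)]=E_Z(1-\phi)$ turn this into $\exp((C+2)p\epsilon^2)\sup_{Z\in U}E_Z(1-\phi)$, which by the testing hypothesis is at most $\exp(-2p\epsilon^2)$. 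Collecting the three contributions gives $E_{Z_0}\Pi(U\mid X)\leq \exp(-(C+4)p\epsilon^2)+\frac{1}{p\epsilon^2}+\exp(-2p\epsilon^2)$, and since $p\epsilon^2\to\infty$ the exponential terms are dominated by $\frac{1}{p\epsilon^2}$, which proves the claim with a suitable $C'$.

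The main obstacle is the two-moment control in the first step: the hypothesis is phrased purely through the Frobenius norm $\|\Sigma-\Sigma_0\|_F$, so one must verify that this single quantity simultaneously dominates the first moment (KL) and the second central moment ($V$) of the log-likelihood ratio, which is precisely what legitimizes the Chebyshev step in the evidence bound. Once that Gaussian second-moment computation is in hand, the remainder is the routine GGV bookkeeping sketched above.
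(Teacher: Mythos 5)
Your proposal is correct and follows essentially the same route as the paper: the decisive step in both is identical, namely that the per-column KL divergence and the variance of the log-likelihood ratio are dominated by $(2\sigma_X^4)^{-1}\|\Sigma-\Sigma_0\|_F^2$ via the spectral bound $\|\Sigma^{-1}\|\leq \sigma_X^{-2}$, so the Frobenius ball in \prettyref{eq:KLvar} is a Kullback--Leibler neighborhood. The only difference is presentational: where the paper invokes Theorem 2.1 of Ghosal--Ghosh--van der Vaart (2000) and Lemma 8 of Ghosal--van der Vaart (2007) as black boxes, you re-derive their content explicitly (the Jensen-plus-Chebyshev evidence lower bound on a good event of probability $1-\frac{1}{p\epsilon^2}$, and the Fubini/change-of-measure testing step), which is exactly what sits inside those cited results.
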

\begin{proof}
In view of Theorem 2.1 of \cite{ghosal00}, we only need to lower bound the prior probability of the Kullback-Leibler neighborhood of the truth. That is, we need to show that (\ref{eq:KLvar}) implies
$$\Pi\left\{E_{Z_0}\left(\log\frac{dP_{Z_0}}{dP_Z}\right)\vee\text{Var}_{P_{Z_0}}\left(\log\frac{dP_{Z_0}}{dP_Z}\right)\leq\epsilon^2\right\}\geq\exp\Big(-Cp\epsilon^2\Big).$$
According to (1) in \cite{chen14}, we have
$$P_Z=N(0,\Sigma)\quad\text{and}\quad P_{Z_0}=N(0,\Sigma_0),$$
where we use the notation $\Sigma=\sigma_{A}^2ZZ^T+\sigma_X^2I$ and $\Sigma_0=\sigma_{A,0}^2Z_0Z_0^T+\sigma_{X,0}^2I$. The same proof of Lemma 8 in \cite{ghosal2007convergence} can be applied to derive the bounds
\begin{equation}
E_{Z_0}\left(\log\frac{dP_{Z_0}}{dP_Z}\right)\vee\text{Var}_{P_{Z_0}}\left(\log\frac{dP_{Z_0}}{dP_Z}\right)\leq \frac{1}{2}\left\|(\Sigma-\Sigma_0)\Sigma^{-1}\right\|_F^2.\label{eq:matrixKL}
\end{equation}
We bound $\frac{1}{2}\left\|(\Sigma-\Sigma_0)\Sigma^{-1}\right\|_F^2$ by
\begin{eqnarray*}
&& \frac{1}{2}\left\|\Sigma-\Sigma_0\right\|_F^2||\Sigma^{-1}||^2 \\
&=&\frac{1}{2}\left\|\Big(\sigma_A^2ZZ^T+\sigma_X^2I-(\sigma_{A,0}^2Z_0Z_0^T+\sigma_{X,0}^2I)\Big)\right\|_F^2\left\|\Big(\sigma_A^2ZZ^T+\sigma_X^2I\Big)^{-1}\right\|^2 \\
&\leq& \frac{1}{2\sigma_X^4}\left\|\sigma_A^2ZZ^T+\sigma_X^2I-(\sigma_{A,0}^2Z_0Z_0^T+\sigma_{X,0}^2I)\right\|_F^2,
\end{eqnarray*}
where the last inequality is because
$$\left\|\Big(\sigma_A^2ZZ^T+\sigma_X^2I\Big)^{-1}\right\|\leq\left(\lambda_{\text{min}}\Big(\sigma_A^2ZZ^T+\sigma_X^2I\Big)\right)^{-1}\leq \sigma_X^{-2}.$$
Therefore, we have
\begin{eqnarray*}
&&\Pi\left\{E_{Z_0}\left(\log\frac{dP_{Z_0}}{dP}\right)\vee\text{Var}_{P_{Z_0}}\left(\log\frac{dP_{Z_0}}{dP}\right)\leq\epsilon^2\right\} \\
&\geq& \Pi\left\{ \frac{1}{2\sigma_X^4}\left\|\sigma_A^2ZZ^T+\sigma_X^2I-(\sigma_{A,0}^2Z_0Z_0^T+\sigma_{X,0}^2I)\right\|_F^2\leq\epsilon^2\right\} \\
&\geq& \exp\Big(-Cp\epsilon^2\Big).
\end{eqnarray*}
Thus, the proof is complete.
\end{proof}

\begin{thm}
Assume $\log p\lesssim n$. Theorem 4.3 and 4.4 still hold if there are universal constants $B>0$ and $C>0$, such that $\sigma_{A,0}^2\in (B^{-1},B)$, $\sigma_{X,0}^2\in (B^{-1},B)$ and $\inf_{t\in (0,2B)}\pi_A(t) \wedge \inf_{t\in (0,2B)}\pi_X(t)\geq CB^{-1}$.
\end{thm}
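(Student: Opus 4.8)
The plan is to invoke Theorem~\ref{thm:GGV}, which reduces the unknown-variance versions of Theorems~\ref{thm:postibp} and~\ref{thm:postpibp} to two ingredients evaluated at a common $\epsilon$: the Kullback--Leibler-type prior-mass bound \eqref{eq:KLvar}, and a test $\phi$ for $H_0:(Z,\sigma_A^2,\sigma_X^2)=(Z_0,\sigma_{A,0}^2,\sigma_{X,0}^2)$ against $U=\{\|ZZ^T-Z_0Z_0^T\|_F^2>M\epsilon^2\}$ with error at most $\exp(-(C+4)p\epsilon^2)$. The crucial accounting point is that \emph{both} conditions use the same $\epsilon$, so I would take $\epsilon^2\asymp nK_0^2/p$ for IBP (and $\epsilon^2\asymp nK_0^{2(1-\beta)}/p$ for pIBP), which makes $p\epsilon^2\asymp nK_0^2$ match the prior-mass exponent of Theorem~\ref{thm:prior}. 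The remaining slack is absorbed into a large radius constant $M\asymp n^2K_0^2$, so that the contraction set has radius $M\epsilon^2\asymp K_0^4n^3/p$, exactly the rate of Theorem~\ref{thm:postibp} (resp.\ $K_0^{4-2\beta}n^3/p$ for Theorem~\ref{thm:postpibp}), and the conclusion $C'/(p\epsilon^2)\asymp 1/(nK_0^2)\to 0$ delivers the contraction.

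For the prior mass \eqref{eq:KLvar}, I would restrict the ball to the product event $\{ZZ^T=Z_0Z_0^T\}\cap\{|\sigma_A^2-\sigma_{A,0}^2|\le\delta_A\}\cap\{|\sigma_X^2-\sigma_{X,0}^2|\le\delta_X\}$. On this event $\Sigma-\Sigma_0=(\sigma_A^2-\sigma_{A,0}^2)Z_0Z_0^T+(\sigma_X^2-\sigma_{X,0}^2)I$, so the triangle inequality with $\|Z_0Z_0^T\|_F\le nK_0$ and $\|I\|_F=\sqrt n$, together with $\sigma_X^2$ being bounded below on the event, gives $(2\sigma_X^4)^{-1}\|\Sigma-\Sigma_0\|_F^2\le\epsilon^2$ once $\delta_A\asymp\epsilon/(nK_0)$ and $\delta_X\asymp\epsilon/\sqrt n$; both are $o(1)$ and hence stay inside $(0,2B)$ since $\sigma_{A,0}^2,\sigma_{X,0}^2\in(B^{-1},B)$. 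Because $\Pi=\pi_{[Z]}\times\pi_A\times\pi_X$ factorizes, the mass is at least $\Pi(ZZ^T=Z_0Z_0^T)\cdot\pi_A(\,\cdot\,)\cdot\pi_X(\,\cdot\,)$, where Theorem~\ref{thm:prior} bounds the first factor below by $\exp(-CnK_0^2)$ and the density lower bound gives $\pi_A(\,\cdot\,)\ge CB^{-1}\delta_A$ and $\pi_X(\,\cdot\,)\ge CB^{-1}\delta_X$. After taking logarithms, the polynomial factors contribute only $O(\log(np/\epsilon^2))\lesssim\log p\lesssim n$, which is dominated by $p\epsilon^2\asymp nK_0^2$, so \eqref{eq:KLvar} holds once constants are adjusted.

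For the test I would adapt the proof of Lemma~\ref{lem:test}. Since $\frac1p XX^T$ concentrates around $\Sigma=\sigma_A^2ZZ^T+\sigma_X^2I$ by Lemma~\ref{lem:testing}, I would decompose $U$ by the magnitude of $\|\Sigma\|_\infty$ exactly as in the $H_{1l}$ decomposition, so that alternatives with large variances (or large feature counts) are rejected by sup-norm tests on $\frac1p XX^T$, while on the complementary region, where $(\sigma_A^2,\sigma_X^2)$ lies in a bounded neighborhood of the truth, a Frobenius test comparing $\frac1p XX^T$ to $\Sigma_0$ is used. With $M\asymp n^2K_0^2$ the separation $\sqrt M\epsilon$ is large enough that the leading error exponent $p\min\!\big(M\epsilon^2/(n^2K_0^2),\sqrt M\epsilon/(nK_0)\big)$ equals $M\epsilon^2 p/(n^2K_0^2)=Mp\epsilon^2/(n^2K_0^2)$, which exceeds $(C+4)p\epsilon^2$; the residual $\exp(-Cp+2\log n)$ term is negligible because $nK_0^2=o(p)$ and $\log p\lesssim n$. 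The pIBP case is identical with $K_0^2$ replaced by $K_0^{2(1-\beta)}$ in the prior-mass exponent, invoking the effectively-specified branch of Theorem~\ref{thm:prior}.

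The testing step is where I expect the real difficulty. The obstacle is a near non-identifiability: with $\sigma_A^2$ free, $\sigma_A^2ZZ^T$ can mimic $\sigma_{A,0}^2Z_0Z_0^T$ even when $\|ZZ^T-Z_0Z_0^T\|_F$ is large, so the naive inequality $\|\Sigma-\Sigma_0\|_F\gtrsim\|ZZ^T-Z_0Z_0^T\|_F$ fails uniformly over the variance range, and the crude two-piece split above leaves an intermediate band of variances uncovered. The honest quantity controlling the test is the distance from $\Sigma_0$ to the cone $\{\sigma_A^2ZZ^T+\sigma_X^2I:\sigma_A^2,\sigma_X^2\ge 0\}$, and one must show this distance is bounded below on $U$ by exploiting the integer, positive-semidefinite structure of $ZZ^T$ and $Z_0Z_0^T$ (in particular the trace/diagonal mismatch that such compensation forces), or else refine the sieve over $(\sigma_A^2,\sigma_X^2)$ and catch each cell separately. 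Verifying that these pieces cover all of $U$ with the stated error, uniformly in the nuisance variances, is the delicate part; the rest---dominance of $nK_0^2$ over the $\log n$ and $\log p$ corrections and over the $\exp(-Cp)$ tail---is routine bookkeeping.
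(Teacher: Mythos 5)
Your proposal is, in substance, the paper's own proof. The paper proceeds exactly as you do: it invokes Theorem~\ref{thm:GGV} with the prior-mass-scale $\epsilon^2\asymp nK_0^2/p$ (resp.\ $nK_0^{2(1-\beta)}/p$), verifies condition \prettyref{eq:KLvar} by restricting to the event $\{\|ZZ^{\T}-Z_0Z_0^{\T}\|_F^2=0\}$ intersected with small variance windows, factorizes the prior, uses $\|Z_0Z_0^{\T}\|_F^2\leq n^2K_0$ and $\|I\|_F^2=n$ together with the density lower bounds to get window probabilities of order $\epsilon/(n\sqrt{K_0})$ and $\epsilon/\sqrt n$, and absorbs the resulting $\log p+\log n+\log K_0$ into $nK_0^2$ via $\log p\lesssim n$; it then takes $U=\{\|ZZ^{\T}-Z_0Z_0^{\T}\|_F^2>MK_0^4n^3/p\}$, so the radius relative to $\epsilon^2$ is inflated by $n^2K_0^2$ exactly as in your accounting, and it settles for the polynomial tail $C'/(p\epsilon^2)$ rather than the exponential tail of Theorems 4.3--4.4.

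The one place you diverge is your final paragraph, and there the comparison is instructive: the difficulty you flag is never confronted in the paper. The paper's proof simply states that ``combining this lower bound with Lemma 4.1, the conditions of Theorem~\ref{thm:GGV} hold,'' i.e.\ it reads the testing requirement in Theorem~\ref{thm:GGV} as $E_{Z_0}\phi+\sup_{Z\in U}E_Z(1-\phi)$ with $E_Z$ the expectation under the \emph{unit-variance} model $P_Z=N(0,ZZ^{\T}+I)$, and plugs in the test of Lemma~\ref{lem:test} verbatim. The near non-identifiability you describe --- that $\sigma_A^2ZZ^{\T}+\sigma_X^2I$ can approximate $\Sigma_0$ while $\|ZZ^{\T}-Z_0Z_0^{\T}\|_F$ is large, so that one really needs to separate $\Sigma_0$ from the cone $\{\sigma_A^2ZZ^{\T}+\sigma_X^2I:\sigma_A^2,\sigma_X^2\geq0\}$ uniformly in the nuisance variances --- is a genuine issue if $U$ is read as a subset of the full parameter space $(Z,\sigma_A^2,\sigma_X^2)$, which is what a literal application of Theorem 2.1 of Ghosal--Ghosh--van der Vaart would demand, since the posterior charges all variance values. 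Under that reading, the gap you identify is a gap in the paper's argument itself, not something the paper resolves and you missed. So: nothing in your proposal falls short of the paper's proof; your unresolved ``delicate part'' is precisely the point the paper passes over in silence, and your sketched remedies (lower-bounding the distance to the cone, or sieving over $(\sigma_A^2,\sigma_X^2)$) are what a fully rigorous treatment would require.
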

\begin{proof}
According to Theorem \ref{thm:GGV} and Lemma 4.1, we only need to show
$$ \log\Pi\Big((2\sigma_X^4)^{-1}||\sigma_A^2ZZ^T+\sigma_X^2I-(\sigma_{A,0}^2Z_0Z_0^T+\sigma_{X,0}^2I)||_F^2\leq\epsilon^2\Big) $$
can be lower bounded by the same order of prior mass
in all situations considered  in Section 4.4.  Using conditioning and the independent structure of the prior, we have
\begin{eqnarray*}
&& \Pi\Big((2\sigma_X^4)^{-1}||\sigma_A^2ZZ^T+\sigma_X^2I-(\sigma_{A,0}^2Z_0Z_0^T+\sigma_{X,0}^2I)||_F^2\leq\epsilon^2\Big) \\
&\geq& \Pi\Big((2\sigma_X^4)^{-1}||(\sigma_{A,0}^2-\sigma_A^2)Z_0Z_0^T+(\sigma_{X,0}^2-\sigma_X^2)I||_F^2\leq\epsilon^2\Big)\Pi\Big(||ZZ^T-Z_0Z_0^T||_F^2=0\Big) \\
&\geq& \Pi\Bigg(n^2K_0\left|\frac{\sigma_{A,0}^2-\sigma_A^2}{\sigma_X^2}\right|^2+n\left|\frac{\sigma_{X,0}^2-\sigma_X^2}{\sigma_X^2}\right|^2\leq\epsilon^2\Bigg)\Pi\Big(||ZZ^T-Z_0Z_0^T||_F^2=0\Big),
\end{eqnarray*}
because $||Z_0Z_0^T||_F^2\leq n^2K_0$ and $||I||_F^2=n$. The variance part has lower bound
\begin{eqnarray*}
&& \Pi\Bigg(n^2K_0\left|\frac{\sigma_{A,0}^2-\sigma_A^2}{\sigma_X^2}\right|^2+n\left|\frac{\sigma_{X,0}^2-\sigma_X^2}{\sigma_X^2}\right|^2\leq\epsilon^2\Bigg)\\
&\geq& \Pi\left(n^2K_0\left|\frac{\sigma_{A,0}^2-\sigma_A^2}{\sigma_X^2}\right|^2\leq\epsilon^2/2, n\left|\frac{\sigma_{X,0}^2-\sigma_X^2}{\sigma_X^2}\right|^2\leq\epsilon^2/2\right) \\
&\geq& \Pi\left(n^2K_0B^2\Big(1+\epsilon/\sqrt{2n}\Big)^2|\sigma_A^2-\sigma_{A,0}^2|^2\leq\epsilon^2/2, n\left|\frac{\sigma_{X,0}^2-\sigma_X^2}{\sigma_X^2}\right|^2\leq\epsilon^2/2\right) \\
&=& \pi_A\Bigg(n^2K_0B^2\Big(1+\epsilon/\sqrt{2n}\Big)^2|\sigma_A^2-\sigma_{A,0}^2|^2\leq\epsilon^2/2\Bigg)\pi_X\Bigg(n\left|\frac{\sigma_{X,0}^2-\sigma_X^2}{\sigma_X^2}\right|^2\leq\epsilon^2/2\Bigg).
\end{eqnarray*}
We give lower bounds for the two terms above separately. When $\frac{\epsilon^2}{2n}$ does not go to $0$, $\pi_X\Bigg(n\left|\frac{\sigma_{X,0}^2-\sigma_X^2}{\sigma_X^2}\right|^2\leq\epsilon^2/2\Bigg)$ can be lower bounded by a constant. When it goes to $0$, we have
\begin{eqnarray*}
 \pi_X\Bigg(n\left|\frac{\sigma_{X,0}^2-\sigma_X^2}{\sigma_X^2}\right|^2\leq\epsilon^2/2\Bigg) &\geq& \int_{\frac{\sigma_{X,0}^2\sqrt{2n}}{\sqrt{2n}+\epsilon}}^{\frac{\sigma_{X,0}^2\sqrt{2n}}{\sqrt{2n}-\epsilon}}\pi_X(t)dt \\
 &\geq& C_1B^{-2}\frac{\epsilon}{\sqrt{n}}.
\end{eqnarray*}
Similarly, when $\frac{\epsilon^2}{(1+\epsilon/\sqrt{2n})^2}$ does not go to $0$, $\pi_A\Bigg(n^2K_0B^2\Big(1+\epsilon/\sqrt{2n}\Big)^2|\sigma_A^2-\sigma_{A,0}^2|^2\leq\epsilon^2/2\Bigg)$ can be lower bounded by a constant. When it goes to zero, we have
\begin{eqnarray*}
&& \pi_A\Bigg(n^2K_0B^2\Big(1+\epsilon/\sqrt{2n}\Big)^2|\sigma_A^2-\sigma_{A,0}^2|^2\leq\epsilon^2/2\Bigg) \\
&\geq& \frac{C_2\epsilon}{n\sqrt{K_0}B^2\Big(1+\epsilon/\sqrt{n}\Big)}.
\end{eqnarray*}
To summarize, for any rate $\epsilon$ appearing in Theorems 4.3 and 4.4, we have
$$\Pi\Bigg(n^2K_0\left|\frac{\sigma_{A,0}^2-\sigma_A^2}{\sigma_X^2}\right|^2+n\left|\frac{\sigma_{X,0}^2-\sigma_X^2}{\sigma_X^2}\right|^2\leq\epsilon^2\Bigg)\geq\exp\Big(-C'\big(\log p+\log n+\log K_0\big)\Big),$$
for a constant $C_0$ only depending on $B$. Hence, for Theorem 4.3, we have
\begin{eqnarray*}
&& \Pi\Big((2\sigma_X^4)^{-1}||\sigma_A^2ZZ^T+\sigma_X^2I-(\sigma_{A,0}^2Z_0Z_0^T+\sigma_{X,0}^2I)||_F^2\leq\epsilon^2\Big) \\
&\geq& \exp\Big(-C'\big(\log p+\log n+\log K_0\big)\Big)\times \exp\Big(-CnK_0^2\Big) \\
&\geq& \exp\Big(-C_1nK_0^2\Big),
\end{eqnarray*}
for some $C_1>0$ because $\log p\lesssim n$. Combining this lower bound with Lemma 4.1, the conditions of Theorem \ref{thm:GGV} holds for $\epsilon^2=nK_0^2/p$ and
$$U=\left\{||ZZ^T-Z_0Z_0^T||_F^2>M\frac{K_0^4n^3}{p}\right\},$$
which implies $E_{Z_0}\Pi(U|X)\rightarrow 0$.
For Theorem 4.4, we have
\begin{eqnarray*}
&& \Pi\Big((2\sigma_X^4)^{-1}||\sigma_A^2ZZ^T+\sigma_X^2I-(\sigma_{A,0}^2Z_0Z_0^T+\sigma_{X,0}^2I)||_F^2\leq\epsilon^2\Big) \\
&\geq& \exp\Big(-C'\big(\log p+\log n+\log K_0\big)\Big)\times \exp\Big(-CnK_0^{2(1-\beta)}\Big) \\
&\geq& \exp\Big(-C_2nK_0^{2(1-\beta)}\Big),
\end{eqnarray*}
for some $C_2>0$. Combining this lower bound with Lemma 4.1, the conditions of Theorem \ref{thm:GGV} holds for $\epsilon^2=nK_0^{2(1-\beta)}/p$ and
$$U=\left\{||ZZ^T-Z_0Z_0^T||_F^2>M\frac{K_0^{4-2\beta}n^3}{p}\right\},$$
which implies $E_{Z_0}\Pi(U|X)\rightarrow 0$.
\end{proof}

\subsection{Misspecified Structure}

To handle misspecified structure, we need an argument involving a change of measure. The following bound is a general result for all prior distributions $\Pi$.
\begin{lemma}\label{lem:change}
For any $Z_0\in\{0,1\}^{n\times K_0}$ and $Z^*\in\{0,1\}^{n\times K^*}$,
the following inequality holds for any measurable set $U$,
$$
E_{Z_0}\Pi(U|X) \leq \exp\left(p\fnorm{Z_0Z_0^T-Z^*(Z^*)^T}^2\right)E_{Z^*}\Pi(U|X)+\frac{2}{p\fnorm{Z_0Z_0^T-Z^*(Z^*)^T}^2}.
$$
\end{lemma}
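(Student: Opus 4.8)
The plan is to treat this as a pure change-of-measure statement: the prior $\Pi$ plays no role beyond the fact that $\Pi(U\mid X)$ is a measurable function of $X$ valued in $[0,1]$, so nothing specific to IBP/pIBP is used. Write $L=\frac{dP_{Z_0}}{dP_{Z^*}}(X)$ for the likelihood ratio between the two Gaussian sampling laws; since both covariances $\Sigma_0=Z_0Z_0^T+I$ and $\Sigma^*=Z^*(Z^*)^T+I$ dominate $I$, the two laws are mutually absolutely continuous and $L$ is well defined, with $E_{Z_0}[g(X)]=E_{Z^*}[L\,g(X)]$ for bounded $g$. Applying this to $g=\Pi(U\mid X)$ and splitting at the threshold $T=\exp(pD)$, where $D=\fnorm{Z_0Z_0^T-Z^*(Z^*)^T}^2$, I would bound the contribution of $\{L\le T\}$ by $T\,E_{Z^*}[\Pi(U\mid X)]$, and the contribution of $\{L>T\}$, using $\Pi(U\mid X)\le 1$ together with the change of measure back to $P_{Z_0}$, by $E_{Z^*}[L\,\indc{L>T}]=P_{Z_0}(L>T)$. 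This produces the master inequality $E_{Z_0}[\Pi(U\mid X)]\le \exp(pD)\,E_{Z^*}[\Pi(U\mid X)]+P_{Z_0}(L>T)$, reducing the problem to showing $P_{Z_0}(L>T)\le 2/(pD)$.

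For the tail I would decompose $\log L=\sum_{j=1}^p\ell_j$ into $p$ i.i.d.\ per-column log-likelihood ratios. Under $P_{Z_0}$ each $\ell_j$ has mean equal to the per-column Kullback--Leibler divergence $K$ and variance $V$, and by the same Gaussian computation behind \eqref{eq:matrixKL} one has $K\vee V\le \tfrac12\fnorm{(\Sigma^*-\Sigma_0)(\Sigma^*)^{-1}}^2$. Because $\Sigma^*\succeq I$ gives $\opnorm{(\Sigma^*)^{-1}}\le 1$, and $\Sigma^*-\Sigma_0=Z^*(Z^*)^T-Z_0Z_0^T$, this is at most $\tfrac12\fnorm{Z_0Z_0^T-Z^*(Z^*)^T}^2=D/2$. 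Summing over the independent columns then yields $E_{Z_0}[\log L]=pK\le pD/2$ and $\Var_{Z_0}(\log L)=pV\le pD/2$.

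I would finish with Chebyshev's inequality. Since $pD-pK\ge pD/2$, we get $P_{Z_0}(\log L>pD)\le P_{Z_0}(|\log L-pK|>pD/2)\le \Var_{Z_0}(\log L)/(pD/2)^2\le (pD/2)/(pD/2)^2=2/(pD)$, which is exactly the asserted second term. The one place that must be handled carefully is bounding the mean and the variance of $\log L$ by the \emph{same} quantity $D/2$: this symmetry is what lets Chebyshev deliver the clean constant $2/(pD)$, and it relies entirely on the operator-norm control $\opnorm{(\Sigma^*)^{-1}}\le 1$ afforded by the identity shift in the covariance.
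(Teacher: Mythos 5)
Your proposal is correct and follows essentially the same route as the paper's own proof: the same truncation of the likelihood ratio at the threshold $\exp\left(p\fnorm{Z_0Z_0^T-Z^*(Z^*)^T}^2\right)$, the same mean/variance bound $\tfrac12\fnorm{(\Sigma^*-\Sigma_0)(\Sigma^*)^{-1}}^2\leq \tfrac12\fnorm{Z_0Z_0^T-Z^*(Z^*)^T}^2$ obtained from the Gaussian KL computation together with $\left\|(Z^*(Z^*)^T+I)^{-1}\right\|\leq 1$, and the same Chebyshev step exploiting that both the mean and the variance of the log-likelihood ratio are bounded by the same quantity to get the constant $2/\left(p\fnorm{Z_0Z_0^T-Z^*(Z^*)^T}^2\right)$. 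The only difference is cosmetic (you split the expectation after changing measure to $P_{Z^*}$, while the paper splits under $P_{Z_0}$ first), so there is nothing to add.
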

\begin{proof}
Let us use the notation
$$P_{Z_0}=N(0,Z_0Z_0^T+I),\quad\text{and}\quad P_{Z^*}=N(0,Z^*(Z^*)^T+I).$$
By (\ref{eq:matrixKL}) and the bound $\left\|(Z^*(Z^*)^T+I)^{-1}\right\|\leq 1$, we have
$$E_{Z_0}\left(\sum_{j=1}^p\log\frac{dP_{Z_0}}{dP_{Z^*}}(X_j)\right)\vee \text{Var}_{Z_0}\left(\sum_{j=1}^p\log\frac{dP_{Z_0}}{dP_{Z^*}}(X_j)\right)\leq \frac{1}{2}p\fnorm{Z_0Z_0^T-Z^*(Z^*)^T}^2.$$
Define the event
$$B=\left\{\frac{p(X|Z_0)}{p(X|Z^*)}\leq \exp\left(p\fnorm{Z_0Z_0^T-Z^*(Z^*)^T}^2\right)\right\}.$$
By Chebyshev's inequality,
\begin{eqnarray*}
P_{Z_0}(B^c) &=& P_{Z_0}\left\{\log\frac{p(X|Z_0)}{p(X|Z^*)}>p\fnorm{Z_0Z_0^T-Z^*(Z^*)^T}^2\right\} \\
&\leq& P_{Z_0}\left\{\sum_{j=1}^p\left(\log\frac{dP_{Z_0}}{dP_{Z^*}}(X_j)-E_{Z_0}\left(\frac{dP_{Z_0}}{dP_{Z^*}}\right)\right)>\frac{1}{2}p\fnorm{Z_0Z_0^T-Z^*(Z^*)^T}^2\right\} \\
&\leq& \frac{2}{p\fnorm{Z_0Z_0^T-Z^*(Z^*)^T}^2}.
\end{eqnarray*}
Therefore, for any $U$,
\begin{eqnarray*}
E_{Z_0}\Pi(U|X) &\leq& E_{Z_0}\Pi(U|X)\mathbb{I}_B+P_{Z_0}(B^c) \\
&=& E_{Z^*}\frac{p(X|Z_0)}{p(X|Z^*)}\Pi(U|X)\mathbb{I}_B+P_{Z_0}(B^c)\\
&\leq& \exp\left(p\fnorm{Z_0Z_0^T-Z^*(Z^*)^T}^2\right)E_{Z^*}\Pi(U|X)+\frac{2}{p\fnorm{Z_0Z_0^T-Z^*(Z^*)^T}^2}.
\end{eqnarray*}
The proof is complete.
\end{proof}

To apply this result, let us consider a binary factor matrix $Z_0\in \{0,1\}^{n\times K_0}$. It is close to a binary matrix $Z^*\in\{0,1\}^{n\times K_0}$ which has a well-specified group structure with $K_0^*\lesssim K_0^{1-\beta}$. Then, Lemma \ref{lem:change} allows one to bound the posterior probability under the true model $E_{Z_0}\Pi(U|X)$ by $E_{Z^*}\Pi(U|X)$. The object $E_{Z^*}\Pi(U|X)$ can be well bounded because $Z^*$ has an exact two-group structure.

To make this idea work, we need a strengthened version of Theorem 4.4 in the paper with a faster tail probability for certain technical reasons. This can be achieved by the following two lemmas.

\begin{lemma}\label{lem:s1}
For an arbitrary $Z_0\in\{0,1\}^{n\times K_0}$,
under the assumption of Theorem 4.4, there exist some constants $C_1,C_2>0$, such that
$$E_{Z_0}\Pi\left(\left\|ZZ^T+I\right\|_{\infty}\leq C_1(K_0+1)|X\right)\geq 1-\exp\left(-C_2p\right).$$
\end{lemma}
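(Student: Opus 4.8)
The plan is to invoke the general reduction of Theorem 4.1 with the measurable set $U=\{\|ZZ^T+I\|_{\infty}>C_1(K_0+1)\}$, so that controlling $E_{Z_0}\Pi(U|X)$ splits into two tasks: constructing a test for $H_0:Z=Z_0$ against $U$ with small type I and type II errors, and lower bounding the prior mass $\Pi(\|ZZ^T-Z_0Z_0^T\|_F^2=0)$. First I would take the single-threshold statistic $\phi=\mathbb{I}\{\|\frac1p XX^T\|_{\infty}>\tau\}$ with $\tau=\frac{C_1}{2}(K_0+1)$ and $C_1>4$, which is exactly in the spirit of the statistics $\phi_l$ already used in the proof of Lemma 4.1.

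For the type I error, since $E_{Z_0}[\frac1p XX^T]=Z_0Z_0^T+I$ with $\|Z_0Z_0^T+I\|_{\infty}\le K_0+1$, the event $\{\|\frac1p XX^T\|_\infty>\tau\}$ forces a sup-norm deviation of at least $t=\tau-(K_0+1)=(\frac{C_1}{2}-1)(K_0+1)$; feeding this $t$ together with the scale $K_0+1$ into the sup-norm inequality of Lemma \ref{lem:testing} (monotonicity of $\min(t^2/s^2,t/s)$ in $s$ lets me replace the true $\|Z_0Z_0^T+I\|_\infty$ by its upper bound $K_0+1$) yields $E_{Z_0}\phi\le\exp(-c_1 p+2\log n)$, with $c_1$ growing linearly in $C_1$. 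The main step is the \emph{uniform} type II bound over the unbounded set $U$: for $Z\in U$ set $A=\|ZZ^T+I\|_\infty>C_1(K_0+1)=2\tau$, so $\{\|\frac1p XX^T\|_\infty\le\tau\}$ forces $\|\frac1p XX^T-(ZZ^T+I)\|_\infty>A-\tau>A/2$. The key observation is that the required deviation is at least half the natural scale $A$, so the minimum in Lemma \ref{lem:testing} is bounded below by $\tfrac14$ no matter how large $A$ is, giving $\sup_{Z\in U}E_Z(1-\phi)\le\exp(-c_2 p+2\log n)$ uniformly.

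For the prior mass, under the assumptions of Theorem 4.4 (in particular $K_0^*\lesssim K_0^{1-\beta}$) Theorem 4.2 supplies $\Pi(\|ZZ^T-Z_0Z_0^T\|_F^2=0)\ge\exp(-CnK_0^{2(1-\beta)})$. Combining through Theorem 4.1,
$$E_{Z_0}\Pi(U|X)\le\exp(-c_1 p+2\log n)+\exp\!\big(CnK_0^{2(1-\beta)}\big)\exp(-c_2 p+2\log n).$$
Finally I would invoke the standing assumption $nK_0^{2(1-\beta)}/p=o(1)$, which forces $p\gg n\ge\log n$ and hence $nK_0^{2(1-\beta)}+2\log n=o(p)$; the inverse prior mass is then absorbed into the exponentially small testing error, and both terms are at most $\exp(-C_2 p)$ for a suitable $C_2>0$. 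The mechanism that produces the promised faster tail is precisely that the testing error is exponentially small in $p$ while the inverse prior mass only grows like $\exp(nK_0^{2(1-\beta)})$ with $nK_0^{2(1-\beta)}=o(p)$; the single genuine obstacle is the uniform control of the type II error over arbitrarily large $\|ZZ^T+I\|_\infty$, which the scale-matching choice $\tau\asymp K_0+1$ resolves.
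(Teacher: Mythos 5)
Your proof is correct, and it rests on the same skeleton as the paper's own argument: the reduction via Theorem 4.1, the sup-norm deviation inequality of Lemma \ref{lem:testing}, and the prior mass lower bound of Theorem 4.2. The one genuine difference is the test construction. The paper recycles the machinery of Lemma 4.1: it decomposes the alternative into countably many shells $H_{1l}=\{C_1l(K_0+1)<\|ZZ^T+I\|_\infty\le C_1(l+1)(K_0+1)\}$, builds a separate test $\phi_l$ for each shell, takes $\phi=\max_l\phi_l$, and sums the type I errors $\exp(-Clp+2\log n)$ over $l$. You use a single threshold test $\phi=\mathbb{I}\{\|\tfrac{1}{p}XX^T\|_\infty>\tau\}$ with $\tau=\tfrac{C_1}{2}(K_0+1)$, and your key observation—that any $Z$ in the alternative forces a sup-norm deviation of at least half its own scale $\|ZZ^T+I\|_\infty$, so the exponent $\min(t^2/s^2,t/s)$ in Lemma \ref{lem:testing} is bounded below by $1/4$ uniformly over the unbounded alternative—is precisely the mechanism that also powers the paper's per-shell bounds; you have simply noticed that it renders the shell decomposition unnecessary. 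Your route is shorter and dispenses with the union bound over shells, while the paper's buys uniformity of exposition with the proof of Lemma 4.1. One caveat, which you share with the paper rather than introduce: the lemma is stated for an \emph{arbitrary} $Z_0$, yet both your proof and the paper's invoke a prior mass bound of order $\exp(-CnK_0^{2(1-\beta)})$, which Theorem 4.2 delivers only when the decomposition of $Z_0$ itself satisfies $K_0^*\lesssim K_0^{1-\beta}$; for a truly arbitrary $Z_0$ (as in the later application to Theorem 7.1) Theorem 4.2 only gives $\exp(-CnK_0^2)$, and the claimed tail would then additionally require $nK_0^2$ to be dominated by $p$. This ambiguity originates in the paper's statement of the lemma, so it does not count against your argument.
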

\begin{proof}
We prove the result using the general inequality established in Theorem 4.1 for $U=\{\left\|ZZ^T+I\right\|_{\infty}>C_1K_0\}$. In view of the prior mass lower bound in Theorem 4.2, it is sufficient to establish a test with desired error probability for
$$H_0: Z=Z_0,\quad H_1: \left\|ZZ^T+I\right\|_{\infty}>C_1(K_0+1).$$
Let us decompose the alternative set by
$$H_1\subset \bigcup_{l\geq 1}\left\{C_1l(K_0+1)<||ZZ^T+I||_{\infty}\leq C_1(l+1)(K_0+1)\right\}=\bigcup_{l\geq 1}H_{1l}.$$
Following the proof of Lemma 4.1, there exists $\phi_l$ for each $l\geq 1$, such that
$$E_{Z_0}\phi_l\leq \exp\left(-Clp+2\log n\right),$$
and
$$\sup_{Z\in H_{1l}}E_Z(1-\phi_l)\leq\exp\left(-Cp+2\log n\right).$$
Define $\phi=\max_{l\geq 1}\phi_l$, and then we have
\begin{eqnarray*}
E_{Z_0}\phi+\sup_{Z\in H_1}E_Z(1-\phi) &\leq& E_{Z_0}\phi+\max_{l\geq 1}\sup_{Z\in H_{1l}}E_{Z}(1-\phi) \\
&\leq& \sum_{l\geq 1}E_{Z_0}\phi_l+\max_{l\geq 1}\sup_{Z\in H_{1l}}E_Z(1-\phi_l) \\
&\leq& \sum_{l\geq 1}\exp\left(-Clp+2\log n\right)+\exp\left(-Cp+2\log n\right) \\
&\leq& 2\exp\left(-C'p+2\log n\right).
\end{eqnarray*}
The result follows by applying Theorem 4.1 and the prior mass lower bound in Theorem 4.2.
\end{proof}

\begin{lemma}\label{lem:s2}
Let $Z^*\in\{0,1\}^{n\times K_0}$ be a binary factor matrix with a well specified group structure such that $K_0^*\lesssim K_0^{1-\beta}$ for $\beta\in (0,1)$.
Under the assumption of Theorem 4.4,
$$
E_{Z^*}\Pi\left(\left\|ZZ^T-Z^*(Z^*)^T\right\|_F^2>\eta^2, \left\|ZZ^T+I\right\|_{\infty}\leq C_1(K_0+1)\Big|X\right)
$$
$$\leq  2\exp\left(-Cp\min\Bigg(\frac{\eta^2}{n^2K_0^2},\frac{\eta}{nK_0}\Bigg)+2\log n+C_2nK_0^{2(1-\beta)}\right),$$
for some $C,C_1,C_2>0$.
\end{lemma}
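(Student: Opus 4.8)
The plan is to run the testing-plus-prior-mass argument of Theorem 4.1 with $Z^*$ playing the role of the truth, taking $U$ to be the event inside the posterior probability. First I would apply Theorem 4.1 with $Z_0$ replaced by $Z^*$, which gives
$$E_{Z^*}\Pi(U|X)\leq E_{Z^*}(\phi)+\frac{1}{\Pi\left(\fnorm{ZZ^T-Z^*(Z^*)^T}^2=0\right)}\sup_{Z\in U}E_Z(1-\phi),$$
where $U=\left\{\fnorm{ZZ^T-Z^*(Z^*)^T}^2>\eta^2,\ \|ZZ^T+I\|_{\infty}\leq C_1(K_0+1)\right\}$ and $\phi$ is a testing function to be chosen.

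The prior-mass denominator is handled by Theorem 4.2 applied to $Z^*$. Since $Z^*$ has a well-specified group structure with $K_0^*\lesssim K_0^{1-\beta}$, and the assumption of Theorem 4.4 supplies $K_0^{2\beta}\lesssim n$, the effectively-specified regime of Theorem 4.2 yields
$$\Pi\left(\fnorm{ZZ^T-Z^*(Z^*)^T}^2=0\right)\geq\exp\left(-C_2nK_0^{2(1-\beta)}\right),$$
so that its reciprocal contributes the factor $\exp\left(C_2nK_0^{2(1-\beta)}\right)$, which is precisely the last term in the exponent of the claimed bound.

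For the test I would reuse the single statistic $\phi_0$ from the proof of Lemma 4.1, namely $\phi=\mathbb{I}\left\{\left\|\frac{1}{p}XX^T-(Z^*(Z^*)^T+I)\right\|_F>\frac{1}{2}\eta\right\}$. The crucial simplification is that $U$ already truncates the alternative to $\|ZZ^T+I\|_{\infty}\leq C_1(K_0+1)$, so the divergent sup-norm tail that forced the infinite family of tests $\{\phi_l\}_{l\geq 1}$ in Lemma 4.1 is absent and a single bounded test suffices. The type-I error is controlled via Lemma \ref{lem:testing} together with $\|Z^*(Z^*)^T+I\|_{\infty}\leq K_0+1\leq 2K_0$, giving $E_{Z^*}(\phi)\leq\exp\left\{-Cp\min\left(\frac{\eta^2}{n^2K_0^2},\frac{\eta}{nK_0}\right)+2\log n\right\}$. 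For the type-II error, for each $Z\in U$ the triangle inequality $\fnorm{ZZ^T-Z^*(Z^*)^T}\leq\left\|\frac{1}{p}XX^T-(ZZ^T+I)\right\|_F+\left\|\frac{1}{p}XX^T-(Z^*(Z^*)^T+I)\right\|_F$ forces $\left\|\frac{1}{p}XX^T-(ZZ^T+I)\right\|_F>\frac{1}{2}\eta$ whenever $\phi=0$; applying Lemma \ref{lem:testing} with $\|ZZ^T+I\|_{\infty}\leq C_1(K_0+1)\leq 2C_1K_0$ then gives $\sup_{Z\in U}E_Z(1-\phi)\leq\exp\left\{-Cp\min\left(\frac{\eta^2}{n^2K_0^2},\frac{\eta}{nK_0}\right)+2\log n\right\}$, with $C$ now depending on $C_1$.

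Finally I would assemble the pieces. Both error terms carry the common exponent $-Cp\min(\eta^2/(n^2K_0^2),\eta/(nK_0))+2\log n$, and only the second is inflated by the prior-mass factor $\exp(C_2nK_0^{2(1-\beta)})$, so the inflated term dominates and the sum is at most twice it, producing the stated bound. I do not anticipate a genuine obstacle here, as the argument is a truncated specialization of Lemma 4.1; the only points requiring care are the bookkeeping of constants — in particular, checking that the sup-norm bound $C_1(K_0+1)$ inside $U$ is absorbed into the constant $C$ of Lemma \ref{lem:testing} without disturbing the $\min$-structure of the exponent — and verifying that $Z^*$, rather than $Z_0$, is the legitimate reference both in Theorem 4.1 and in the prior-mass lower bound of Theorem 4.2.
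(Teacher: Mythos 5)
Your proposal is correct and follows essentially the same route as the paper: apply Theorem 4.1 with $Z^*$ as the reference, use the prior-mass bound of Theorem 4.2 in the effectively-specified regime to get the $\exp\left(C_2nK_0^{2(1-\beta)}\right)$ factor, and exploit the sup-norm truncation in $U$ so that the single Frobenius-norm test $\phi_0$ from the proof of Lemma 4.1 suffices. The paper's proof is just a terse version of exactly this argument, so your write-up fills in the details (the triangle-inequality type-II bound and the constant bookkeeping) that the paper leaves implicit.
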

\begin{proof}
We prove this result using Theorem 4.1 for
$$U=\left\{\left\|ZZ^T-Z^*(Z^*)^T\right\|_F^2>\eta^2, \left\|ZZ^T+I\right\|_{\infty}\leq C_1(K_0+1)\right\}.$$
Using the argument in the proof of Lemma 4.1, there is a testing function $\phi$, such that
$$E_{Z^*}\phi+\sup_{Z\in U}E_Z(1-\phi)\leq 2\exp\left\{-Cp\min\Bigg(\frac{\eta^2}{n^2K_0^2},\frac{\eta}{nK_0}\Bigg)+2\log n\right\}.$$
Combining with the prior mass lower bound in Theorem 4.2 and Theorem 4.1, we obtain the result.
\end{proof}

Finally, we are ready to prove Theorem 7.1.
\begin{proof}
Without loss of generality, we assume $\left\|ZZ^T-Z^*(Z^*)^T\right\|_F\geq 1$. The case $\left\|ZZ^T-Z^*(Z^*)^T\right\|_F<1$ implies that $\left\|ZZ^T-Z^*(Z^*)^T\right\|_F=0$ and has been treated by Theorem 4.4.
Define
$$V=\left\{\left\|ZZ^T-Z^*(Z^*)^T\right\|_F^2>\eta^2\right\},$$
for some $\eta$ to be specified later.
First, we use union bound to obtain
\begin{eqnarray*}
E_{Z_0}\Pi(V|X) &\leq& E_{Z_0}\Pi\left(V, \left\|ZZ^T+I\right\|_{\infty}\leq C_1(K_0+1)|X\right) \\
&&+E_{Z_0}\Pi\left(\left\|ZZ^T+I\right\|_{\infty}> C_1(K_0+1)|X\right),
\end{eqnarray*}
where the second term is bounded by $\exp(-C_2p)$ according to Lemma \ref{lem:s1}. For the first term, we bound it by
\begin{eqnarray*}
&& E_{Z_0}\Pi\left(V, \left\|ZZ^T+I\right\|_{\infty}\leq C_1(K_0+1)|X\right) \\
&\leq& \exp\left(p\fnorm{Z_0Z_0^T-Z^*(Z^*)^T}^2\right)E_{Z^*}\Pi\left(V, \left\|ZZ^T+I\right\|_{\infty}\leq C_1(K_0+1)|X\right)\\
&& +\frac{2}{p\fnorm{Z_0Z_0^T-Z^*(Z^*)^T}^2} \\
&\leq& 2\exp\left(-Cp\min\Bigg(\frac{\eta^2}{n^2K_0^2},\frac{\eta}{nK_0}\Bigg)+2\log n+C_2nK_0^{2(1-\beta)}+p\fnorm{Z_0Z_0^T-Z^*(Z^*)^T}^2\right)\\
&& +\frac{2}{p},
\end{eqnarray*}
where the first inequality is due to Lemma \ref{lem:change}, and the second inequality is due to Lemma \ref{lem:s2} and $\fnorm{Z_0Z_0^T-Z^*(Z^*)^T}^2\geq 1$.
Choosing
$$\eta^2 =M'\frac{n^4K_0^{6-4\beta}}{p^2}+n^2K_0^2\fnorm{Z_0Z_0^T-Z^*(Z^*)^T}^4,$$
for some sufficiently large $M'>0$, we have
$$p\min\Bigg(\frac{\eta^2}{n^2K_0^2},\frac{\eta}{nK_0}\Bigg)\asymp nK_0^{2(1-\beta)}+p\fnorm{Z_0Z_0^T-Z^*(Z^*)^T}^2.$$
Then,
$$E_{Z_0}\Pi\left(V|X\right)\leq \exp\left(-C_1nK_0^{2(1-\beta)}\right)+\exp\left(-C_2p\right)+\frac{2}{p}\leq \frac{C_3}{p}\leq \exp\left(-C'nK_0^{2(1-\beta)}\right)+\frac{2}{p}.$$
Finally, observe that
\begin{eqnarray*}
V &\supset& \left\{\left\|ZZ^T-Z_0(Z_0)^T\right\|_F^2\geq M_1\left(\frac{n^4K_0^{6-4\beta}}{p^2}+n^2K_0^2\fnorm{Z_0Z_0^T-Z^*(Z^*)^T}^4\right)\right\} \\
&\supset& \left\{\left\|ZZ^T-Z_0(Z_0)^T\right\|_F^2\geq M\left(\frac{n^3K_0^{4-2\beta}}{p}+n^2K_0^2\fnorm{Z_0Z_0^T-Z^*(Z^*)^T}^4\right)\right\}
\end{eqnarray*}
for some $M>0$, where the last inequality is because $\frac{n^3K_0^{4-2\beta}}{p}\gtrsim \frac{n^4K_0^{6-4\beta}}{p^2}$ under the assumption of Theorem 4.4.
Hence,
we obtain the desired posterior contraction for $\left\|ZZ^T-Z_0(Z_0)^T\right\|_F^2$.
\end{proof}

\begin{proof}[Proof of Corollary 7.1]
It is sufficient to bound $||Z_0Z_0^T-Z^*(Z^*)^T||_F^2$. By triangle inequality, we have
\begin{eqnarray*}
||Z_0Z_0^T-Z^*(Z^*)^T||_F^2 &\leq& \left(||Z_0(Z_0-Z^*)^T||_F + ||(Z_0-Z^*)(Z^*)^T||_F\right)^2 \\
&\leq& (||Z_0||+||Z^*||)^2||Z_0-Z^*||^2_F.
\end{eqnarray*}
Note that $Z^*$ is obtained by zeroing out entries in $Z_0$, and thus we have $||Z^*||\leq ||Z_0||$. Since there are at most $O(n^{\delta})$ entries being zeroed out, we have $||Z_0-Z^*||^2_F\lesssim n^{\delta}$. To summarize, we obtain the bound $||Z_0Z_0^T-Z^*(Z^*)^T||_F^2\lesssim n^{\delta}||Z_0||^2$. The requirement that $(nK_0)^2n^{2\delta}||Z_0||^4=o(K_0^4n^3/p)$ leads to the condition $n^{2\delta}=o\left(\frac{nK_0^{2}}{p||Z_0||^4}\right)$. Thus, the proof is complete.
\end{proof}

\section{Proof of Technical Lemmas} \label{sec:tech}

To prove Lemma \ref{lem:testing}, we need the following large deviation inequality.
\begin{lemma}\label{lem:bernstein}
For $\{W_{i1},W_{i2}\}_{i=1}^p$ from i.i.d. bi-variate normal distribution with $\text{Var}(W_{i1})=\text{Var}(W_{i2})=1$ and $\text{Cov}(W_{i1},W_{i2})=\rho$, we have for any $\epsilon>0$,
$$P\left\{\left|\frac{1}{p}\sum_{i=1}^p(W_{i1}W_{i2}-E(W_{i1}W_{i2}))\right|>\epsilon\right\}\leq \exp\big(-Cp(\epsilon\wedge \epsilon^2)\big),$$
 for some $C>0$.
\end{lemma}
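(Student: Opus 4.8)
The plan is to reduce the statement to a Bernstein-type Chernoff bound for an i.i.d.\ sum of sub-exponential variables, the only genuine subtlety being that the moment generating function must be controlled uniformly over all correlations $\rho\in[-1,1]$. Write $Y_i = W_{i1}W_{i2} - E(W_{i1}W_{i2}) = W_{i1}W_{i2} - \rho$, so that the event in question is $\{|\frac{1}{p}\sum_{i=1}^p Y_i| > \epsilon\}$. Since the $Y_i$ are i.i.d.\ and centered, it suffices to establish a uniform quadratic bound $\log E[e^{tY_1}] \le C_0 t^2$ on a fixed neighborhood $|t|\le t_0$ of the origin, and then run the standard exponential Markov inequality.

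First I would obtain an explicit formula for the MGF of the product by diagonalizing the quadratic form. Set $A = W_{11}+W_{12}$ and $B = W_{11}-W_{12}$; these are independent Gaussians with variances $2(1+\rho)$ and $2(1-\rho)$, and $W_{11}W_{12} = (A^2 - B^2)/4$. Writing $g_1 = A/\sqrt{2(1+\rho)}$ and $g_2 = B/\sqrt{2(1-\rho)}$ as independent standard normals gives $W_{11}W_{12} = \frac{1+\rho}{2}g_1^2 - \frac{1-\rho}{2}g_2^2$, and since $E[e^{s g^2}] = (1-2s)^{-1/2}$ for $s<1/2$,
$$E[e^{tW_{11}W_{12}}] = \big(1 - t(1+\rho)\big)^{-1/2}\big(1 + t(1-\rho)\big)^{-1/2},$$
valid whenever $t(1+\rho)<1$ and $t(1-\rho)>-1$.

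The key step is the uniform bound. Because $\rho\in[-1,1]$ forces $1\pm\rho\in[0,2]$, for every $|t|\le t_0:=1/4$ both factors above stay bounded away from their singularities, the arguments $t(1\pm\rho)$ lying in $[-1/2,1/2]$. Subtracting the mean contributes $-t\rho$, and expanding $-\frac12\log(1-t(1+\rho)) - \frac12\log(1+t(1-\rho)) - t\rho$ in powers of $t$, the linear terms cancel and the quadratic coefficient equals $\frac{(1+\rho)^2+(1-\rho)^2}{4} = \frac{1+\rho^2}{2} \le 1$; the higher-order remainder is controlled uniformly on $|t|\le t_0$ precisely because the logarithmic arguments stay bounded away from $\pm 1$. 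This yields $\log E[e^{tY_1}] \le C_0 t^2$ for all $|t|\le t_0$ with a universal constant $C_0$.

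Finally I would apply the Chernoff bound to $S_p = \sum_{i=1}^p Y_i$: for $0\le t\le t_0$, $P(S_p > p\epsilon) \le \exp(-tp\epsilon + C_0 p t^2)$, and optimizing over $t$ gives the interior optimum $t=\epsilon/(2C_0)$ when $\epsilon\le 2C_0 t_0$ (the sub-Gaussian regime, bound $\exp(-cp\epsilon^2)$) and the boundary choice $t=t_0$ otherwise (the sub-exponential regime, bound $\exp(-cp\epsilon)$). The negative tail $P(S_p<-p\epsilon)$ is handled identically using the symmetric bound for $t\in[-t_0,0]$. Combining the two regimes and the two tails delivers $\exp(-Cp(\epsilon\wedge\epsilon^2))$, as claimed. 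The hard part will be the uniformity in $\rho$: one must guarantee that both the radius of validity of the MGF and the constant $C_0$ do not degenerate as $|\rho|\to 1$, which the explicit diagonalization settles cleanly since the binding constraint is $|t|(1+|\rho|)<1$, comfortably met for $|t|\le 1/4$.
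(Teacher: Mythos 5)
Your proof is correct, but it takes a genuinely different route from the paper's. The paper writes $W_{i1}=\sqrt{\rho}Z+\sqrt{1-\rho}U$, $W_{i2}=\sqrt{\rho}Z+\sqrt{1-\rho}V$ (with a sign flip when $\rho<0$), bounds the tail $P\{|W_{i1}W_{i2}-\rho|>t\}\leq \exp(-Ct)$ by a union bound over the three pieces $\rho(Z^2-1)$, $\sqrt{\rho(1-\rho)}(ZU+ZV)$, $(1-\rho)UV$, concludes that $W_{i1}W_{i2}-\rho$ has bounded sub-exponential norm uniformly in $\rho$, and then invokes Bernstein's inequality for sub-exponential variables (Proposition 5.16 of Vershynin) as a black box. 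You instead rotate to $A=W_{11}+W_{12}$, $B=W_{11}-W_{12}$, represent the product as a difference of independent scaled chi-squares, compute the moment generating function in closed form, establish the uniform quadratic bound $\log E[e^{tY_1}]\leq C_0t^2$ on $|t|\leq 1/4$, and run the Chernoff argument from scratch. Your calculations check out: $A$ and $B$ are indeed independent with variances $2(1\pm\rho)$, the linear terms in the log-MGF expansion cancel against $-t\rho$, the quadratic coefficient $\frac{1+\rho^2}{2}\leq 1$ is uniform in $\rho$, and the remainder is controlled since $|t(1\pm\rho)|\leq 1/2$. What your approach buys is self-containedness and transparency: the exact MGF makes the uniformity in $\rho$ (including the degenerate cases $|\rho|\to 1$) completely explicit, whereas the paper's uniformity is implicit in the fact that its three tail bounds discard the $\rho$-dependent coefficients. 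What the paper's approach buys is brevity and modularity: by reducing to a standard cited result it avoids the expansion bookkeeping, and the tail-plus-union-bound technique extends more readily to settings where the MGF has no closed form. Both proofs share the same minor slack — a multiplicative factor of $2$ (from the two tails in your case, from Vershynin's statement in theirs) that is silently absorbed into the constant $C$.
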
 
\begin{proof}
Since $W_{i1}$ and $W_{i2}$ are from normal distribution, $W_{i1}W_{i2}$ is a sub-exponential random variable. To be specific, let us consider the case $\rho\geq 0$ without loss of generality. Then, $W_{i1}$ and $W_{i2}$ can be represented as
$$W_{i1}=\sqrt{\rho}Z+\sqrt{1-\rho}U,\quad W_{i2}=\sqrt{\rho}Z+\sqrt{1-\rho}V,$$
with $U,V,Z$ i.i.d. $N(0,1)$. Then,
\begin{eqnarray*}
&& P\left\{|W_{i1}W_{i2}-\rho|>t\right\} \\
 &=& P\left\{\left|\rho(Z^2-1)+\sqrt{\rho(1-\rho)}(ZU+ZV)+(1-\rho)UV\right|>t\right\} \\
&\leq& P\left\{|\rho(Z^2-1)|>\frac{t}{3}\right\} + P\left\{|\sqrt{\rho(1-\rho)}(ZU+ZV)|>\frac{t}{3}\right\}+P\left\{|(1-\rho)UV|>\frac{t}{3}\right\} \\
&\leq& P\left\{|Z^2-1|>\frac{t}{3}\right\} + P\left\{|Z(U+V)|>\frac{t}{3}\right\}+P\left\{|UV|>\frac{t}{3}\right\} \\
&\leq& \exp(-Ct),
\end{eqnarray*}
for some constant $C>0$. The last inequality above holds because $|Z^2-1|$, $|Z(U+V)|$ and $|UV|$ all have bounded sub-exponential norm. We have shown that $|W_{i1}W_{i2}-\rho|$ has bounded sub-exponential norm. For the case when $\rho<0$, we can represent $W_{i2}$ by $-\sqrt{\rho}Z-\sqrt{1-\rho}V$.
By Proposition 5.16 of \cite{vershynin10}, the conclusion follows.
\end{proof}

\begin{proof}[Proof of Lemma \ref{lem:testing}]
Let $\frac{1}{p}XX^T=(\hat{\sigma}_{st})_{n\times n}$ and $ZZ^T+I=(\sigma_{st})_{n\times n}$. Then we have
\begin{eqnarray*}
&& P_{Z}\left\{\left\|\frac{1}{p}XX^T-(ZZ^T+I)\right\|_F>\epsilon\right\} = P_Z\left\{\sum_{s,t}(\hat{\sigma}_{st}-\sigma_{st})^2>\epsilon^2\right\}\\
&\leq& \sum_{s,t}P_{Z}\left\{(\hat{\sigma}_{st}-\sigma_{st})^2>\frac{\epsilon^2}{n^2}\right\} \leq \sum_{s,t}P_{Z}\left\{\frac{(\hat{\sigma}_{st}-\sigma_{st})^2}{\sigma_{ss}\sigma_{tt}}>\frac{\epsilon^2}{n^2||ZZ^T+I||_{\infty}^2}\right\}.
\end{eqnarray*}
Using Lemma \ref{lem:bernstein}, the above quantity can be upper bounded by
\begin{eqnarray*}
&& \sum_{s,t}\exp\left\{-Cp\min\Bigg(\frac{\epsilon^2}{n^2||ZZ^T+I||_{\infty}^2},\frac{\epsilon}{n||ZZ^T+I||_{\infty}}\Bigg)\right\} \\
&=& \exp\left\{-Cp\min\Bigg(\frac{\epsilon^2}{n^2||ZZ^T+I||_{\infty}^2},\frac{\epsilon}{n||ZZ^T+I||_{\infty}}\Bigg)+2\log n\right\}.
\end{eqnarray*}
This proves the first inequality.
Using the same argument, we have
\begin{eqnarray*}
&& P_{Z}\left\{\left\|\frac{1}{p}XX^T-(ZZ^T+I)\right\|_{\infty}>\epsilon\right\} \leq \sum_{s,t}P_{Z}\left\{|\hat{\sigma}_{st}-\sigma_{st}|>\epsilon\right\} \\
&\leq& \sum_{s,t}P_Z\left\{\frac{(\hat{\sigma}_{st}-\sigma_{st})^2}{\sigma_{ss}\sigma_{tt}}>\frac{\epsilon^2}{||ZZ^T+I)||^2_{\infty}}\right\} \\
&\leq& \exp\left\{-Cp\min\Bigg(\frac{\epsilon^2}{||ZZ^T+I)||_{\infty}^2},\frac{\epsilon}{||ZZ^T+I)||_{\infty}}\Bigg)+2\log n\right\},
\end{eqnarray*}
which proves the second inequality.
\end{proof}

\section{Date analysis using alternative methods} \label{sec:apply}

To compare with the real data analysis in \cite{chen14} using a pIBP prior, we analyzed the same 134 breast cancer samples with the expression profiles of 300 genes and the mutation status of 11 genes with IBP prior. The resulting latent factor matrix is less sparse than that of pIBP, which offers compromised interpretability. Moreover, the  reported features in \cite{chen14} were not recovered by IBP prior, suggesting the integration of somatic mutations might lead to better understanding of gene expression (Supplementary Figure \ref{fig1}). 

\begin{figure}
\centering
\includegraphics[width=5in]{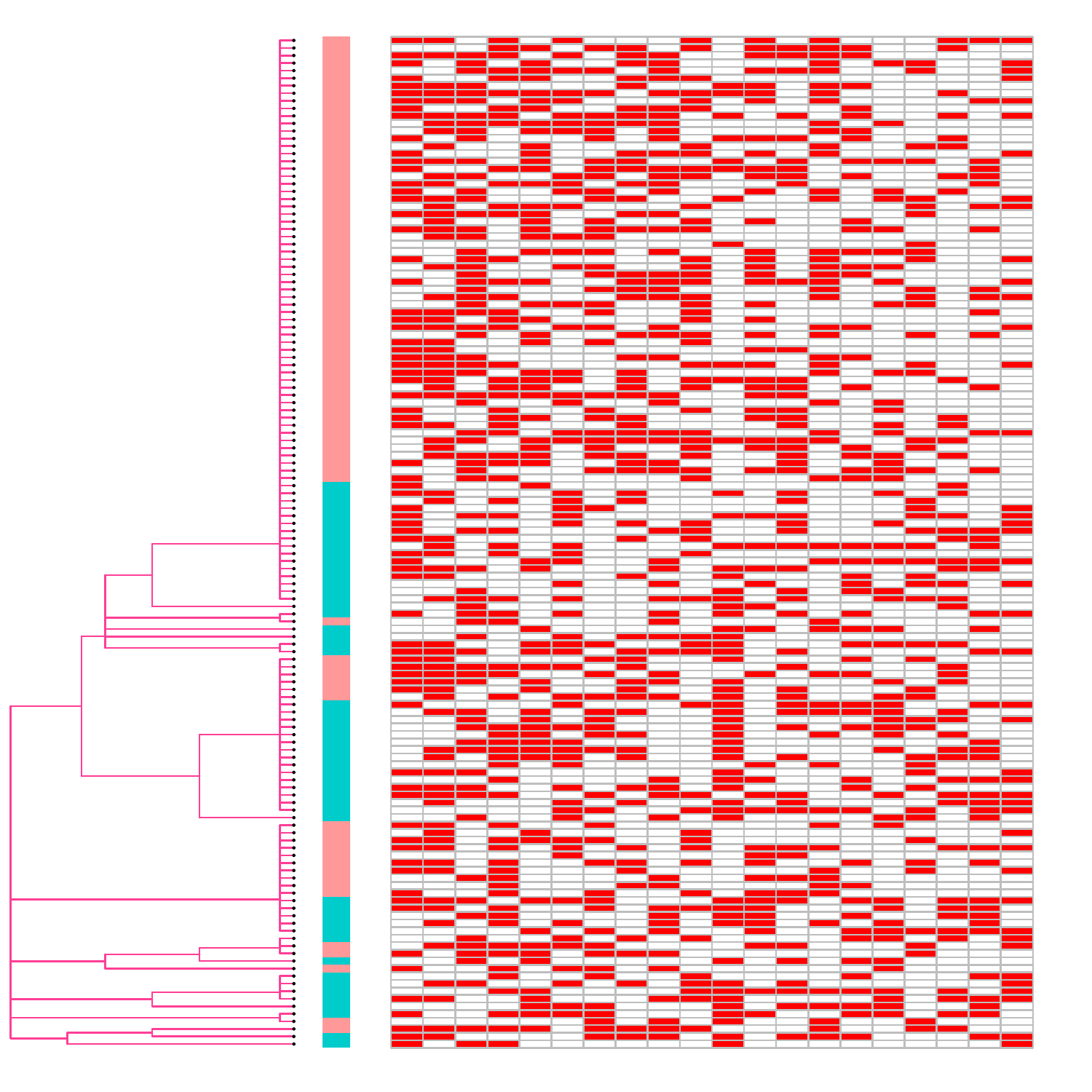}
\caption{IBP result on TCGA breast cancer samples. This plot shows the dendrogram tree prior (left), the inferred latent factor matrix Z (right, only first 20 columns shown) and subtype status (middle, Basal-like as Red and HER2 as Green). }
\label{fig1}
\end{figure}


\end{document}